\pdfoutput=1
\documentclass{article}

\usepackage[preprint]{neurips_2026}

\usepackage[utf8]{inputenc}
\usepackage[T1]{fontenc}
\usepackage{hyperref}
\usepackage{url}
\usepackage{booktabs}
\usepackage{amsfonts,amsmath,amssymb,amsthm}
\usepackage{microtype}
\usepackage{xcolor}
\usepackage{graphicx}
\usepackage[capitalize,noabbrev]{cleveref}
\hypersetup{pdftitle={Two-Sample Testing for Random Graphs without Vertex Correspondence},pdfauthor={Soham Dan}}

\theoremstyle{plain}
\newtheorem{theorem}{Theorem}
\newtheorem{proposition}{Proposition}
\newtheorem{lemma}{Lemma}[section]
\newtheorem{corollary}{Corollary}
\newtheorem*{theorem*}{Theorem}
\newtheorem*{proposition*}{Proposition}
\newtheorem*{corollary*}{Corollary}
\newtheorem*{lemma*}{Lemma}
\theoremstyle{definition}

\theoremstyle{remark}
\newtheorem{remark}{Remark}
\newtheorem*{remark*}{Remark}

\providecommand{\E}{\mathbb{E}}
\providecommand{\Prob}{\mathbb{P}}
\providecommand{\Var}{\operatorname{Var}}
\providecommand{\Cov}{\operatorname{Cov}}
\providecommand{\TV}{d_{\mathrm{TV}}}
\providecommand{\ind}{\mathbf{1}}
\providecommand{\eps}{\varepsilon}
\providecommand{\artanh}{\operatorname{artanh}}
\providecommand{\Bern}{\operatorname{Bern}}
\providecommand{\Bin}{\operatorname{Bin}}
\providecommand{\Odd}{\operatorname{Odd}}
\providecommand{\tinj}{t_{\mathrm{inj}}}
\providecommand{\Evn}{\mathcal{E}_n}
\providecommand{\Cyc}{\mathcal{C}}
\providecommand{\Cop}{\mathcal{K}}
\providecommand{\Pal}{P^{\mathrm{al},m}_{\delta}}

\providecommand{\ff}[2]{(#1)_{#2}}

\providecommand{\E}{\mathbb{E}}
\providecommand{\Prob}{\mathbb{P}}
\providecommand{\Var}{\operatorname{Var}}
\providecommand{\Cov}{\operatorname{Cov}}

\providecommand{\ind}{\mathbf{1}}
\providecommand{\tr}{\operatorname{tr}}
\providecommand{\TV}{d_{\mathrm{TV}}}
\providecommand{\chisq}{\chi^2}
\providecommand{\Gnp}{\mathbb{G}(n,p)}
\providecommand{\Gnw}[1]{\mathbb{G}(n,#1)}
\providecommand{\tinj}{t_{\mathrm{inj}}}
\providecommand{\mun}{m^\ast_{\mathrm{un}}}
\providecommand{\mal}{m^\ast_{\mathrm{al}}}
\providecommand{\Tsig}{T_p}
\providecommand{\Csig}{S_{C_4}}
\providecommand{\Kexc}{\mathrm{K3R}}

\newcommand{\EfourteenMstarClusA}{196.8}
\newcommand{\EfourteenMstarGraphletA}{\ensuremath{>256}}

\newcommand{\EfourteenMstarTriA}{65.5}

\newcommand{\EfourteenMstarClusB}{40.3}

\newcommand{\EfourteenMstarTriB}{16.2}

\newcommand{\EfourteenBlindMin}{0.030}
\newcommand{\EfourteenBlindMax}{0.075}
\newcommand{\EfourteenTheoryTriA}{64.2}
\newcommand{\EfourteenTheoryTriB}{13.9}

\newcommand{\EfourteenbMstarClus}{19.3}

\newcommand{\EfourteenbMstarSpec}{61.2}

\newcommand{\EfourteenbMstarTri}{7.4}

\newcommand{\EfourteenbMaxGin}{0.090}
\newcommand{\PosCtrlDeg}{1.00}
\newcommand{\PosCtrlGin}{1.00}
\newcommand{\PosCtrlTri}{0.05}
\newcommand{\ElevenSlopeUone}{-3.04\pm0.03}

\newcommand{\ElevenSlopeAone}{-0.91\pm0.02}
\newcommand{\ElevenSlopeAtwo}{-1.07\pm0.03}
\newcommand{\ElevenSlopeRatioAone}{-2.13\pm0.04}
\newcommand{\ElevenSlopeRatioAtwo}{-1.97\pm0.02}
\newcommand{\ElevenUoneLo}{1241}
\newcommand{\ElevenUoneTheoryLo}{1215}

\newcommand{\ElevenAoneLo}{11.3}
\newcommand{\ElevenAtwoLo}{4.53}
\newcommand{\ElevenRatioAoneLo}{110}
\newcommand{\ElevenRatioAtwoLo}{274}
\newcommand{\ElevenLBfactorLo}{12.2}

\newcommand{\ElevenRatioAoneHi}{10.4}
\newcommand{\ElevenRatioAtwoHi}{31.6}
\newcommand{\ElevenLBfactorHi}{37.2}

\newcommand{\ElevenKthreeRfactor}{4.0}
\newcommand{\SparseSlopeUone}{-3.00\pm0.03}
\newcommand{\SparseSlopeRatioAone}{-2.15\pm0.02}
\newcommand{\ElevenbSlopeCfourRtwo}{-3.98\pm0.03}
\newcommand{\ElevenbSlopeCfourRone}{-4.02\pm0.02}
\newcommand{\ElevenbCfourRtwoLo}{1392}
\newcommand{\ElevenbCfourRtwoHi}{17.4}
\newcommand{\ElevenbTriRtwoMaxPower}{0.063}
\newcommand{\ElevenbGridMax}{16384}
\newcommand{\TwelveUnEr}{3991}

\newcommand{\TwelveUnSbm}{11.1}

\newcommand{\TwelveKappaSbm}{8.33\cdot10^{-4}}
\newcommand{\TwelveUnSmooth}{83.7}

\newcommand{\TwelveKappaSmooth}{6.25\cdot10^{-4}}
\newcommand{\TwelveSlopeUnEr}{-5.99\pm0.03}
\newcommand{\TwelveRatioSbm}{360}
\newcommand{\TwelveRatioSmooth}{48}
\newcommand{\TwelveLocalSbm}{\ensuremath{-2.21}}
\newcommand{\TwelveLocalSmooth}{\ensuremath{-2.17}}
\newcommand{\TwelveLocalSbmSe}{0.10}
\newcommand{\TwelveLocalSmoothSe}{0.09}
\newcommand{\EthirteenSplitPowSbm}{0.06}
\newcommand{\EthirteenSplitPowEr}{0.02}
\newcommand{\TwelveSlopeUnSbmSpec}{\ensuremath{-2.80}}
\newcommand{\TwelveSlopeUnSmoothSpec}{\ensuremath{-3.18}}
\newcommand{\EfifteenDelta}{0.07}
\newcommand{\EfifteenGinSmoothHA}{0.80}
\newcommand{\EfifteenGinDcHA}{1.00}
\newcommand{\EfifteenGinOrthMax}{0.095}
\newcommand{\EfifteenGinOrthMin}{0.015}
\newcommand{\EfifteenGinOrthCells}{24}
\newcommand{\EfifteenGinMstarSmoothHA}{44.3}
\newcommand{\EfifteenGinMstarDcHA}{25.7}
\newcommand{\EfifteenDcPowEightMin}{1.00}
\newcommand{\EfifteenDegMax}{0.115}
\newcommand{\EfifteenTriPooledDcHO}{37.6}
\newcommand{\EfifteenSizeMin}{0.005}
\newcommand{\EfifteenSizeMax}{0.090}
\newcommand{\EsixteenRho}{5}
\newcommand{\EsixteenGraphsDd}{918}
\newcommand{\EsixteenTriLossDdDeg}{25}

\newcommand{\EsixteenTriLossDdWltwo}{4}

\newcommand{\EsixteenGraphsEnz}{587}
\newcommand{\EsixteenTriLossEnzDeg}{23}

\newcommand{\EsixteenTriLossEnzWltwo}{6}

\newcommand{\EsixteenMstarCfourEnz}{5.7}
\newcommand{\EsixteenPowerTriDdFour}{0.83}
\newcommand{\EsixteenTreeMax}{0.075}

\newcommand{\EsixteenWltwoTriDdPow}{0.50}
\newcommand{\EsixteenZTriDdEight}{0.28}
\newcommand{\EsixteenTriDdEight}{1.00}
\newcommand{\EsixteenWltwoTreeMax}{0.080}
\newcommand{\EsixteenSizeMin}{0.037}
\newcommand{\EsixteenSizeMax}{0.063}
\newcommand{\EsixteenSizeReps}{1200}

\title{Two-Sample Testing for Random Graphs\\ without Vertex Correspondence}

\author{Soham Dan}

\begin{document}

\maketitle

\begin{abstract}
Two populations of graphs often have to be compared without any correspondence between their vertices, for instance when networks come from different communities, or when a graph generative model is evaluated against held-out graphs. We study how many graphs such an unaligned two-sample test needs, and which graph statistics can detect which differences. For an Erd\H{o}s--R\'enyi null and a planted two-block difference that leaves every expected degree unchanged, we show that $m\asymp t^{-3}$ graphs per group are necessary and sufficient when the per-graph signal-to-noise ratio is $t<1$. Signed triangle counts attain this rate, and the lower bound holds for every graph size. With aligned vertices $m\asymp t^{-1}$ graphs suffice, so misalignment costs a factor of order $t^{-2}$. When the triangle signal cancels, the rate becomes $t^{-4}$ and $4$-cycles are needed. Statistics built from trees have exactly the same expectation under both hypotheses, and tests based on finitely many of them have asymptotically no power. In the graphon limit, this class includes degree distributions and message-passing graph neural network features. For a non-constant null, a generic difference is visible at first order, and a simple motif test attains the aligned order of sample size, suggesting that misalignment is costly mainly for differences that are invisible at low orders. We also give an exactly valid test for one or two graphs per group, at a cost in power. In our simulations, the fitted exponents are close to the predicted ones ($-3.04$ against $-3$ for unaligned tests and $-1.07$ against $-1$ for aligned ones), and degree-based and random-GNN evaluation metrics stay at their level in a setting where signed triangles need about $65$ graphs.
\end{abstract}

\section{Introduction}\label{sec:intro}

A two-sample test for networks decides whether two samples of graphs come from the same distribution. Often the vertices of different graphs do not correspond, as with friendship networks of different schools, protein networks of different species, or molecules. The same is true when a graph generative model is evaluated, since its samples are compared with held-out graphs through degree distributions, clustering coefficients, subgraph counts or random graph-neural-network embeddings \citep{you2018graphrnn,obray2022evaluation,thompson2022evaluation,krimmel2025polygraph}. Without a correspondence, a test can only use quantities that do not depend on how each graph is labelled.

With aligned vertices, the sample complexity of two-sample testing for inhomogeneous random graphs is well understood \citep{ghoshdastidar2020two,chatterjee2023two}. Without alignment, many tests have been proposed \citep{maugis2020testing,bravo2023quantifying,shao2026higher,tang2017nonparametric,sabanayagam2021graphon}, but two basic questions remain open: \emph{how many graphs are needed}, and \emph{which statistics can detect which differences}? In practice, an evaluation metric that is insensitive to a class of failures reports a good model however many samples are drawn.

In this paper, we study both questions for graphon models, in which every graph has its own latent vertex positions. Our results suggest that both answers depend on the lowest order at which a difference becomes visible to label-invariant statistics, and this order can be high for differences that are easy to detect with aligned vertices.
\begin{itemize}
\item \textbf{Sample complexity (\cref{thm:price}).} For an Erd\H{o}s--R\'enyi null and a planted two-block difference that preserves every expected degree, with per-graph signal-to-noise ratio $t<1$, we prove that $m\asymp t^{-3}$ graphs per group are necessary and sufficient and that signed triangles are optimal, with a lower bound that holds for every graph size. With aligned vertices $m\asymp t^{-1}$ graphs suffice, so misalignment costs a factor of order $t^{-2}$. When the triangle signal cancels, the rate becomes $t^{-4}$ and $4$-cycles are needed (\cref{prop:signcancel}).
\item \textbf{Tree statistics (\cref{thm:trees}).} If the null is regular and the difference preserves degrees, every tree density is exactly unchanged, and tests using finitely many tree statistics have asymptotic power equal to their level. Since degree statistics, colour refinement and message-passing graph neural networks are determined by tree densities on graphons \citep{grebik2022fractional,boker2023wl,boker2023fine}, they cannot detect such differences either.
\item \textbf{Order of visibility (\cref{thm:firstorder}).} For a non-constant null, a difference is generically visible at first order, and a motif test that does not know the null needs only order $1/t$ graphs, the aligned rate of \cref{thm:price}. So the factor $t^{-2}$ in \cref{thm:price} reflects first-order invisibility.
\item \textbf{Few graphs and experiments.} We give an exact permutation test for one graph per group (\cref{prop:split}), and our simulations confirm the predicted exponents. Degree- and GNN-based metrics miss degree-preserving changes both in simulations and on protein graphs, where degree-corrected signed cycles reach power $1/2$ with at most six graphs.
\end{itemize}
The second moment behind our lower bound for one graph is due to \citet{mossel2015reconstruction} and \citet{banerjee2018contiguity}, \citet{ghoshdastidar2017two} use the same planted partition as a hard instance for one graph per group, and the sharp aligned threshold $mt=1$ is known for sparse multilayer block models \citep{chatterjee2022multilayer}. The population bounds, their comparison with alignment, the distributional invariance of tree statistics and the first-order analysis are new.

\section{Setup}\label{sec:setup}

\paragraph{Graphons and graph samples.} A graphon is a symmetric measurable function $W:[0,1]^2\to[0,1]$. A graph $G\sim\Gnw{W}$ on $[n]=\{1,\dots,n\}$ is generated by drawing latent positions $U_1,\dots,U_n$ independently and uniformly from $[0,1]$, and then independent edges $A_{ij}\sim\mathrm{Bernoulli}(W(U_i,U_j))$ for $i<j$. The degree function is $d_W(x)=\int_0^1 W(x,y)\,dy$, and $W$ is \emph{regular} if $d_W$ is constant. For a finite graph $F$, $t(F,W)$ denotes its homomorphism density \citep{lovasz2012large}. The injective density $\tinj(F,G)$ of a graph $G$ is the fraction of injective maps $V(F)\to[n]$ that send edges to edges, and it satisfies $\E\,\tinj(F,G)=t(F,W)$ for $G\sim\Gnw{W}$.

\paragraph{Unaligned two-sample testing.} We observe $G_1,\dots,G_m\sim P$ and $G_1',\dots,G_m'\sim Q$, all independent, and test $H_0:P=Q$ against $H_1:P\neq Q$. A test $\psi$ maps the $2m$ graphs to $\{0,1\}$, and its \emph{risk} against a pair of hypotheses is the sum of its type I and type II errors. The samples are \emph{unaligned}: every graph carries its own latent positions, so a vertex label in one graph says nothing about a vertex label in another. For a graphon model this is automatic, because the law of $\Gnw{W}$ is invariant under relabelling. Averaging any test over independent uniform relabellings of each graph therefore preserves both of its errors, and we may restrict attention to (possibly randomized) tests that are invariant under relabelling within each graph. Invariant polynomials of bounded degree are exactly linear combinations of subgraph counts \citep[Lemma 3.2]{carpentier2025low}. In the \emph{aligned} version of the problem, all graphs share one vertex labelling and one set of latent positions.

\paragraph{The planted-partition difference.} Our main example has an Erd\H{o}s--R\'enyi null $P_0=\Gnp$ and the alternative $P_\delta$ in which a vector $\sigma\in\{\pm1\}^n$ with independent uniform entries is drawn and, given $\sigma$, edges are independent with
\begin{equation}\label{eq:pp}
\Prob(A_{ij}=1\mid\sigma)=p+\delta\,\sigma_i\sigma_j,\qquad 0<\delta\le\min\{p,1-p\}.
\end{equation}
This is the graphon $p+H$ with $H(x,y)=\delta\,s(x)s(y)$ and $s=\ind_{[0,1/2)}-\ind_{[1/2,1]}$. Since $\int H(x,y)\,dy=0$, every vertex has the same expected degree under both hypotheses. We write
\begin{equation}\label{eq:snr}
\eta=\frac{\delta^2}{p(1-p)},\qquad t=n\eta,
\end{equation}
and call $t$ the per-graph signal-to-noise ratio. In the notation $a=n(p+\delta)$ and $b=n(p-\delta)$ of the community detection literature, $t=(a-b)^2/(2(1-p)(a+b))$. The value $t=1$ is the threshold below which a single graph from $P_\delta$ cannot be distinguished from $\Gnp$ \citep{mossel2015reconstruction,banerjee2018contiguity}. Unaligned samples draw an independent $\sigma$ for each graph. Aligned samples use one $\sigma$ for all graphs of the first group.

\paragraph{Order of visibility.} For graphons $W$ and $W+H$ and a motif $F$, expanding the product over the edges of $F$ gives
\begin{equation}\label{eq:expansion}
t(F,W+H)-t(F,W)=\sum_{\emptyset\neq S\subseteq E(F)} t\bigl(F;\,H \text{ on } S,\ W \text{ on } E(F)\setminus S\bigr),
\end{equation}
where each term integrates $H$ over the edges in $S$ and $W$ over the remaining edges of $F$. The terms with $|S|=1$ sum to $\sum_{e\in E(F)}\langle H,W_{F,e}\rangle$, where $W_{F,e}(x,y)$ is the density of $F$ with the endpoints of $e$ fixed at $(x,y)$ and the edge $e$ removed. They are closely related to the two-point conditional kernels of \citet{chatterjee2026higher}. We say that $H$ is \emph{visible at order $k$} (relative to $W$) if some motif has a non-vanishing term with $|S|=k$ and none has one with $|S|<k$. When $W$ is constant and $H$ has zero degree function, integrating out a vertex of degree one in $S$ gives zero, so only edge sets in which every vertex has degree at least two survive. The smallest such sets are triangles and $4$-cycles, with densities $\tr H^3$ and $\tr H^4>0$, and hence $H$ is visible at order $3$ if $\tr H^3\neq0$ and at order $4$ otherwise. For \eqref{eq:pp}, $\tr H^3=\delta^3$ and the order is $3$.

\section{Sample complexity under an Erd\H{o}s--R\'enyi null}\label{sec:price}

\begin{figure}[t]
\centering
\includegraphics[width=\linewidth]{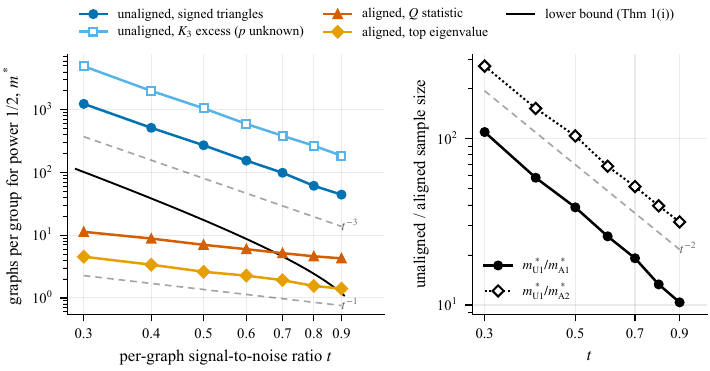}
\caption{Left: number of graphs per group at which each test reaches power $1/2$ at level $0.05$, for the planted partition \eqref{eq:pp} with $n=300$ and $p=1/2$. Unaligned tests (signed triangles with known $p$, excess triangle density with unknown $p$) follow the slope $-3$ of \cref{thm:price}. Aligned tests (cross-graph statistic $Q$, top eigenvalue of the averaged adjacency matrix) follow the slope $-1$. Below the black curve, the information bound of \cref{thm:price}(i) rules out power $1/2$ for every test. Right: the ratio of unaligned to aligned sample sizes, with a reference line proportional to $t^{-2}$.}
\label{fig:price}
\end{figure}

Write $q=p(1-p)$. Let $g(t)=\sum_{k\ge3}t^k/(2k)=-\tfrac12\log(1-t)-\tfrac t2-\tfrac{t^2}4$ for $t\in[0,1)$, and let $g_n(t)=\sum_{k=3}^n\frac{(n)_k}{2k\,n^k}t^k\le g(t)$ be its finite-$n$ version, where $(n)_k=n!/(n-k)!$. Let $c_{\alpha\beta}=\log(1+4(1-\alpha-\beta)^2)$.

\begin{theorem}[Unaligned and aligned sample complexity]\label{thm:price}
Let $n\ge 3$, $p\in(0,1)$, $0<\delta\le\min\{p,1-p\}$ and $t=n\delta^2/q$.
\begin{enumerate}
\item[(i)] \emph{Unaligned, lower bound.} For $L=dP_\delta/dP_0$ and every $m\ge1$,
\[
1+\chisq\bigl(P_\delta^{\otimes m}\,\big\|\,P_0^{\otimes m}\bigr)=\Bigl(\sum_{S\subseteq E(K_n)\ \mathrm{even}}\eta^{|S|}\Bigr)^m\le e^{m\,g_n(t)} .
\]
Hence a two-sample test with type I error at most $\alpha$ and type II error at most $\beta$ exists only if $m\ge c_{\alpha\beta}/g_n(t)$. For $t<1$ this requires $m\ge 6(1-t)\,c_{\alpha\beta}/t^3$.
\item[(ii)] \emph{Unaligned, upper bound.} Let $\Tsig(G)=\sum_{i<j<k}(A_{ij}-p)(A_{jk}-p)(A_{ik}-p)$. The test that rejects when
\[
\bigl|\textstyle\sum_{\ell}\Tsig(G_\ell)-\sum_\ell\Tsig(G_\ell')\bigr|\ge\bigl(2m\tbinom n3q^3/\alpha\bigr)^{1/2}
\]
has size at most $\alpha$ under $P=Q=\Gnp$. Its power is at least $1-\beta$ whenever
\[
m\,t^3\,\tfrac{(n-1)(n-2)}{6n^2}\ge\bigl(\sqrt{2/\alpha}+\sqrt{(2+\varepsilon_n)/\beta}\bigr)^2,
\]
where $\varepsilon_n\to0$ is explicit (\cref{app:price}).
\item[(iii)] \emph{Aligned.} Suppose the $m$ graphs of the first sample share one $\sigma$. With $x_{\alpha\beta}\in(0,1)$ the root of $x^2/4+g(x)=c_{\alpha\beta}$, no test has errors $(\alpha,\beta)$ if $mt\,\mathrm{artanh}(\eta)/\eta<x_{\alpha\beta}$, where $x_{0.05,0.5}\approx0.87$. The cross-graph statistic $Q=\sum_{\ell<\ell'}\sum_{i<j}(A^{\ell}_{ij}-p)(A^{\ell'}_{ij}-p)$ gives errors $(\alpha,\beta)$ against the null $P=Q=\Gnp$ once $m(m-1)t^2(n-1)/(4n)\ge\bigl(\alpha^{-1/2}+(\rho_m/\beta)^{1/2}\bigr)^2$, where $\rho_m\to1$ is explicit.
\end{enumerate}
\end{theorem}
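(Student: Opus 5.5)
For part (i), I will compute the one-graph second moment exactly and then tensorize. Given $\sigma$, the likelihood ratio is
\[
L_\sigma(A)=\prod_{i<j}\bigl(1+\delta\sigma_i\sigma_j (A_{ij}-p)/q\bigr).
\]
Averaging over two independent copies $\sigma,\tau$ and using independence of edges under $P_0$ gives
\[
\E_0 L^2=\E_{\sigma,\tau}\prod_{i<j}\bigl(1+\eta\,\sigma_i\sigma_j\tau_i\tau_j\bigr).
\]
Setting $x=\sigma\tau$, which is again uniform, and expanding the product yields $\sum_S \eta^{|S|}\,\E\prod_{ij\in S}x_ix_j$. The expectation equals $1$ exactly when every vertex has even degree in $S$, and $0$ otherwise; this gives the sum over even subgraphs. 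The product structure of $P_\delta^{\otimes m}$ then raises it to the $m$-th power.

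For the upper bound on the one-graph factor I will use the standard cycle-type argument of \citet{mossel2015reconstruction}. The quantity $\E_x\prod(1+\eta x_ix_j)$ is bounded by $\E\exp(\eta\sum_{i<j}x_ix_j)$ corrected to second order. The cleaner route is to decompose even subgraphs into edge-disjoint cycles. Then
\[
\sum_{S\ \mathrm{even}}\eta^{|S|}\le\exp\Bigl(\sum_{\text{cycles }C}\eta^{|C|}\Bigr)=\exp\Bigl(\sum_{k=3}^n \tfrac{(n)_k}{2k}\eta^k\Bigr)=e^{g_n(t)},
\]
because each even set is a disjoint union of cycles, and the terms of the exponential's expansion cover every such union at least once with positive weight. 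This is the step I expect to need care: an even set may decompose into cycles in several ways, but the overcounting only helps the inequality, so it is fine.

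The testing consequence follows from the standard inequality $1-\alpha-\beta\le\TV\le\tfrac12\sqrt{\chi^2}$. A two-sample test distinguishing $P_0$ from $P_\delta$ on both samples yields a test for $P_\delta^{\otimes m}$ against $P_0^{\otimes m}$ by keeping the second sample as fresh $P_0$ draws. This gives $4(1-\alpha-\beta)^2\le e^{mg_n}-1$, which is $m\ge c_{\alpha\beta}/g_n$. Finally, for $t<1$ the bound $g(t)\le \sum_{k\ge3} t^k/6=t^3/(6(1-t))$ gives the stated bound.

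For (ii) I will use Chebyshev. Distinct triples are uncorrelated under any model with independent edges given $\sigma$ and mean $p$ shift of product form. Under the null, $\Var \Tsig=\binom n3q^3$, so the difference of the two sums has variance $2m\binom n3q^3$; Chebyshev gives size $\alpha$. Under $P_\delta$, $\E\Tsig=\binom n3\delta^3$, and the variance is $\binom n3q^3(1+\varepsilon_n)$ plus covariance terms from triples sharing an edge through $\sigma$, which I will bound explicitly. Chebyshev on the alternative, with the mean exceeding threshold plus $\sqrt{\Var/\beta}$, rearranges, using $\binom n3\delta^3/\sqrt{\binom n3q^3}\cdot\sqrt m$, into the displayed condition involving $t^3(n-1)(n-2)/(6n^2)$.

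For (iii), the lower bound again comes from $\chi^2$. With shared $\sigma$, the product over $m$ graphs gives
\[
\E_{\sigma,\tau}\prod_{i<j}(1+\eta x_ix_j)^m\le\E\exp\Bigl(m\,\artanh(\eta)\sum_{i<j}x_ix_j\Bigr)
\]
after writing $1+\eta y=\cosh$-type bounds, i.e.\ $(1+\eta y)\le e^{\artanh(\eta) y}\sqrt{1-\eta^2}^{-1}$-style corrections. The expectation of $\exp(\lambda\sum x_ix_j)$ for Rademacher $x$ is the Curie–Weiss partition function, bounded by $\exp(s^2/4+g(s))$ with $s=m\,t\,\artanh(\eta)/\eta$, via the Gaussian/cycle expansion as in (i). Setting this equal to $1+4(1-\alpha-\beta)^2$ defines $x_{\alpha\beta}$. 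For the cross statistic $Q$, its null variance is $\binom m2\binom n2q^2$ and its alternative mean is $\binom m2\binom n2\delta^2$; the alternative variance is computed with $\rho_m$ absorbing the $\sigma$-induced terms. Chebyshev twice then gives the condition. The main obstacle is controlling the Curie–Weiss moment sharply enough to obtain the constant $x^2/4+g(x)$.
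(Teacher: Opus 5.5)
Your proposal is correct. For (i), (ii) and the $Q$-test in (iii) it is essentially the paper's argument. It uses the same second moment with $\omega=\sigma\circ\tau$ and the same cycle bound. The paper makes your ``overcounting'' precise by fixing one edge-disjoint cycle decomposition per even set, a map that is injective, and bounding by $\prod_C(1+\eta^{|C|})$. It then applies Chebyshev with the explicit moments. One slip in (ii): distinct triangles are uncorrelated only under the null. Under $P_\delta$, two triangles sharing an edge have covariance $\delta^4(q-\delta^2)$, and these pairs are the source of the $3(n-2)\eta^2$ term in $\varepsilon_n$. You do add them back. The paper organizes this computation through $A_e-p=\delta\sigma_i\sigma_j+\xi_e$ with conditionally independent centred $\xi_e$.

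The step that genuinely differs is the aligned lower bound. The paper writes $(1+\eta y)^m=A_m+B_my$ for $y=\pm1$ and obtains the exact formula $A_m^N\sum_{S\in\Evn}(B_m/A_m)^{|S|}$. It bounds this using $A_m\le e^{\binom m2\eta^2}$, $B_m/A_m=\tanh(m\artanh\eta)\le m\artanh\eta$, and the cycle bound of (i).

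You instead use $1+\eta y=\sqrt{1-\eta^2}\,e^{by}$ with $b=\artanh\eta$. This is an identity, and the factor is $\sqrt{1-\eta^2}\le1$, not its inverse; that is why your displayed inequality holds. You then reduce to a Curie--Weiss partition function with $\lambda=mb$ and $s=n\lambda$.

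The step you flag as the main obstacle is immediate. With $M=\sum_ix_i$ and $Z\sim N(0,1)$, one has $\E e^{\lambda M^2/2}=\E_Z\cosh(\sqrt\lambda Z)^n\le(1-s)^{-1/2}$ by $\cosh u\le e^{u^2/2}$. Since $\sum_{i<j}x_ix_j=(M^2-n)/2$, this gives $\E\exp(\lambda\sum_{i<j}x_ix_j)\le e^{-s/2}(1-s)^{-1/2}=e^{s^2/4+g(s)}$ exactly. Equivalently, $e^{\lambda y}=\cosh\lambda\,(1+y\tanh\lambda)$ sends you back to the cycle bound of (i), with $\cosh(\lambda)^N\le e^{s^2/4}$.

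Your route is shorter and gives the constant $x^2/4+g(x)$ in closed form, with no combinatorics. The paper's route keeps an exact finite-$n$ expression. Its intermediate bound $\exp(\binom m2N\eta^2+g_n(u_m))$ is slightly tighter and asymptotically exact. The paper reuses it for the limit $e^{-mt/2-mt^2/4}(1-mt)^{-1/2}$ and for the comparison with the sharp aligned threshold $mt=1$.
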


Parts (ii) and (iii) use the null edge density $p$. For the triangle test, the pooled edge density of all $2m$ graphs can replace $p$ with no asymptotic loss when $np(1-p)\to\infty$ (\cref{prop:plugin}), and conditioning on the total number of edges gives an exactly valid Monte Carlo version.

\begin{corollary}\label{cor:price}
Fix $\alpha,\beta>0$ with $\alpha+\beta<1$. Let $\mun$ and $\mal$ be the smallest numbers of graphs per group for which some test has errors $(\alpha,\beta)$ in the unaligned and in the aligned problem. There are constants $0<c_1<c_2$, depending only on $(\alpha,\beta)$, such that for $n\ge n_0(\alpha,\beta,p)$ and $0<t\le1/2$
\[
c_1\,t^{-2}\;\le\;\mun/\mal\;\le\;c_2\,t^{-2}.
\]
Misalignment multiplies the number of graphs needed by a factor of order $t^{-2}$.
\end{corollary}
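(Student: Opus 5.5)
The corollary follows from the four bounds in \cref{thm:price}, which I would combine into
\[
\mun\asymp t^{-3},\qquad \mal\asymp t^{-1},
\]
with constants depending only on $(\alpha,\beta)$, uniformly for $0<t\le 1/2$ and $n\ge n_0$. Dividing then gives $\mun/\mal\asymp t^{-2}$.

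\emph{Unaligned problem.} Part (i) gives $\mun\ge 6(1-t)c_{\alpha\beta}/t^3\ge 3c_{\alpha\beta}t^{-3}$ for $t\le1/2$, and this holds for every $n$. For the matching upper bound I would use part (ii). Choose $n_0$ so that
\[
\frac{(n-1)(n-2)}{6n^2}\ge \frac1{12}\quad\text{and}\quad \varepsilon_n\le1 \qquad\text{for } n\ge n_0 .
\]
Since $\varepsilon_n$ is explicit, I would need to check that it tends to zero uniformly in the relevant parameters; this is where $n_0$ may depend on $p$. The signed-triangle test then has errors $(\alpha,\beta)$ once
\[
m\ge 12\bigl(\sqrt{2/\alpha}+\sqrt{3/\beta}\bigr)^2 t^{-3}.
\]
Taking the ceiling, and using $t^{-3}\ge 8$ to absorb the $+1$, gives $\mun\le C_u t^{-3}$.

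\emph{Aligned problem.} For the upper bound I use the statistic $Q$ from part (iii). With $(n-1)/n\ge1/2$ and $\rho_m\le 2$ for all large $m$, the condition reduces to $m(m-1)t^2\ge 8(\alpha^{-1/2}+(2/\beta)^{1/2})^2$, which holds for $m\ge C t^{-1}+1$. Hence $\mal\le C_a t^{-1}$, again using $t\le 1/2$ to absorb constants.

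One check is needed here: the required $m$ must be large enough that the bound on $\rho_m$ applies. If $\rho_m\to1$ only as $m\to\infty$, I would enlarge $C$ so that $m$ is beyond that threshold, or use an explicit bound on $\rho_m$ valid for all $m\ge2$.

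For the aligned lower bound, part (iii) says no test works if $mt\,\artanh(\eta)/\eta<x_{\alpha\beta}$. Since $\eta=t/n$, for $n\ge n_0$ we have $\eta\le 1/(2n_0)$ small, so $\artanh(\eta)/\eta\le 2$. Therefore $\mal\ge x_{\alpha\beta}/(2t)$.

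\emph{Combining.} Dividing the bounds,
\[
\frac{\mun}{\mal}\ge \frac{3c_{\alpha\beta}t^{-3}}{C_a t^{-1}},\qquad \frac{\mun}{\mal}\le \frac{C_u t^{-3}}{x_{\alpha\beta}/(2t)},
\]
which gives $c_1=3c_{\alpha\beta}/C_a$ and $c_2=2C_u/x_{\alpha\beta}$. Both are positive because $\alpha+\beta<1$ gives $c_{\alpha\beta}>0$, and the root $x_{\alpha\beta}$ exists in $(0,1)$ under the same condition.

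The main obstacle is the uniformity bookkeeping: extracting from \cref{app:price} that $\varepsilon_n$ and $\rho_m$ are bounded by absolute constants once $n\ge n_0(\alpha,\beta,p)$ and $m$ is large. The constraint $\delta\le\min\{p,1-p\}$ also has to be compatible with $t\le1/2$, which is fine since it only restricts $\delta$ from above.
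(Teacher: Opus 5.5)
Your route is the paper's own: the corollary is proved as \cref{thm:price}(iv) by combining parts (i)--(iii) with explicit constants. Your unaligned lower and upper bounds and your aligned lower bound are handled correctly.

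The step that fails is your control of $\rho_m$ in the aligned upper bound. By \cref{lem:Qmoments}, $\rho_m=1+\bigl(2(m-1)+(1-2p)^2/q\bigr)t/n$. This is increasing and unbounded in $m$. It tends to $1$ as $n\to\infty$ with $mt$ bounded, not as $m\to\infty$. So three parts of your argument break down:
\begin{itemize}
\item the claim ``$\rho_m\le 2$ for all large $m$'' is false;
\item your remedy of enlarging $C$ to push $m$ past a threshold moves $\rho_m$ in the wrong direction;
\item there is no bound on $\rho_m$ that holds uniformly over all $m\ge2$ at fixed $n$.
\end{itemize}

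The repair is to evaluate $\rho_m$ only at the $m$ you actually use. Take $m=\lceil 1+C/t\rceil$, for instance with $C=3K'$ and $K'=\alpha^{-1/2}+(2/\beta)^{1/2}$. Then $(m-1)t\le t+C$, and since $(1-2p)^2\le1$ and $t\le1/2$,
\[
\rho_m\;\le\;1+\frac{2t+2C+t/q}{n}\;\le\;1+\frac{1+2C+1/(2q)}{n}\;\le\;2
\quad\text{once } n\ge 1+2C+1/(2q).
\]
This is why $n_0$ must depend on $(\alpha,\beta)$ through $K'$, and not only on $p$. The paper takes $n_0\ge\lceil 2+6K'+1/(2q)\rceil$ for exactly this reason.

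With $\rho_m\le2$ at this $m$, your reduced condition $m(m-1)t^2\ge 8K'^2$ holds, because $(m-1)^2t^2\ge 9K'^2$. The remainder of your argument then goes through. The constants $C_u$, $C_a$, $c_{\alpha\beta}$ and $x_{\alpha\beta}$ depend only on $(\alpha,\beta)$, as required.
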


\paragraph{Proof idea.} With $L=\E_\sigma\prod_{i<j}(1+\delta\sigma_i\sigma_jX_{ij})$ and $X_{ij}=(A_{ij}-p)/q$, two independent partitions give $\E_0[L_\sigma L_\tau]=\prod_{i<j}(1+\eta\,\omega_i\omega_j)$ with $\omega=\sigma\circ\tau$. Averaging over $\omega$ removes every monomial with a vertex of odd degree. By Veblen's theorem each remaining edge set is an edge-disjoint union of cycles, and the sum is at most $\prod_C(1+\eta^{|C|})\le e^{g_n(t)}$. Edges and $2$-stars drop out, and the $\binom n3$ triangles contribute the leading $t^3/6$. With aligned graphs each edge contributes $(1+\eta\,\omega_i\omega_j)^m$, whose even part leaves a second-order term of about $m^2t^2/4$, so aligned tests succeed at order $t^{-1}$.

\paragraph{Sign cancellation.} In the following model, triangles carry no signal. Let $\sigma,\tau\in\{\pm1\}^n$ be independent with uniform entries, and
\begin{equation}\label{eq:rank2}
\Prob(A_{ij}=1\mid\sigma,\tau)=p+\delta(\sigma_i\sigma_j-\tau_i\tau_j),\qquad 2\delta\le\min\{p,1-p\}.
\end{equation}
The difference has eigenvalues $\pm\delta$, so $\tr H^3=0$ and $\tr H^4=2\delta^4$.

\begin{proposition}[Sign cancellation]\label{prop:signcancel}
Under \eqref{eq:rank2}, with $t=n\delta^2/q$:
\begin{enumerate}
\item[(i)] $\E[\Tsig(G)\mid\sigma,\tau]=0$, so triangles carry no signal. The two-sample triangle test has asymptotic power $\alpha$ for fixed $p$, $t$ and $m$.
\item[(ii)] For $t<1/2$, a test with errors $(\alpha,\beta)$ requires $m\ge(1-4t^2)\,c_{\alpha\beta}/(4t^4)$.
\item[(iii)] The signed $4$-cycle count $\Csig(G)=\sum_{4\text{-cycles}}\prod_{e}(A_e-p)$ has mean $6\binom n4\delta^4$ and null variance $3\binom n4q^4$. Its per-graph squared signal-to-noise ratio tends to $t^4/2$, and the two-sample $4$-cycle test has errors $(\alpha,\beta)$ once $n\ge n_0$ and $m\,t^4\ge C_{\alpha\beta}$.
\end{enumerate}
\end{proposition}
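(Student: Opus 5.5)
We treat the three parts separately, reusing the second-moment computation from the proof of \cref{thm:price}(i) and the moment method from \cref{thm:price}(ii).

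\emph{Part (i).} Write $H_{ij}=\delta(\sigma_i\sigma_j-\tau_i\tau_j)$. Conditionally on $(\sigma,\tau)$ the edges are independent with means $p+H_{ij}$, so
\[
\E[\Tsig\mid\sigma,\tau]=\sum_{i<j<k}H_{ij}H_{jk}H_{ik}.
\]
Expand the product into eight monomials $\pm\delta^3\prod(\cdot)$. Each pure term, such as $\sigma_i\sigma_j\sigma_j\sigma_k\sigma_i\sigma_k=1$ with sign $+$ (and $-$ for the pure $\tau$ term), cancels its partner. Each mixed term leaves some vertex with an unpaired $\sigma$ or $\tau$. This does \emph{not} vanish pointwise; only its expectation over $(\sigma,\tau)$ vanishes. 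So we read (i) as $\E[\Tsig]=0$ under the alternative, or equivalently we average over the labels. The power claim then follows by computing the variance of $\sum_\ell\Tsig(G_\ell)$ under the alternative. We expect it to be $m\binom n3q^3(1+o(1))$, with the mixed terms contributing $O(n^3\delta^6)=O(t^3q^3)$, which is of lower order relative to $n^3q^3$. A CLT then shows that both samples have the same limiting Gaussian law, so the power tends to $\alpha$.

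\emph{Part (ii).} Repeat the proof of \cref{thm:price}(i). Write $L=\E_{\sigma,\tau}\prod_{i<j}(1+X_{ij}H_{ij})$ with $X_{ij}=(A_{ij}-p)/q$. Then
\[
\E_0[L_{\sigma\tau}L_{\sigma'\tau'}]=\E\prod_{i<j}\bigl(1+q^{-1}H_{ij}H'_{ij}\bigr).
\]
Expanding over edge sets $S$, the weight of $S$ is the average of $\prod_{e\in S}H_eH'_e/q$. Each $H_eH'_e$ is a sum of four rank-one products $\pm\delta^2\omega^{(a)}_i\omega^{(a)}_j$, where $\omega^{(a)}\in\{\sigma\sigma',\sigma\tau',\tau\sigma',\tau\tau'\}$. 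These four vectors are pairwise independent but not jointly independent, and handling this is the main obstacle. We would bound the average by the trace of a $4$-dimensional transfer structure. For cycles, summing over $\omega$ gives $\sum_C(\eta\cdot\text{eigenvalues})^{|C|}$. The relevant operator $\E[HH']$ has eigenvalues $\pm\delta^2$ with multiplicity $2$ each (sign patterns $++,--,+-,-+$), so the odd cycles cancel. For even $k$ the cycle contribution is about $4t^k/(2k)$, giving
\[
\log(1+\chisq)\le m\sum_{k\ \mathrm{even}\ge4}\frac{4t^k}{2k}\lesssim \frac{m\,t^4}{1-t^2}.
\]
This has to be reconciled with the stated constant $(1-4t^2)/(4t^4)$, presumably after replacing $t$ by the effective $2t$ in the tail. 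The edge and $2$-star terms vanish because $d_H=0$. The non-cycle even sets are handled as before by Veblen decomposition into edge-disjoint cycles, with $\prod(1+\cdot)\le\exp$. This is the delicate point: the $\omega$'s are not independent, so we would instead bound the average directly via $|\E\prod|\le$ the product of cycle traces, using the eigen-decomposition. Finally, $\chisq\ge$ the Le Cam bound gives $m\ge c_{\alpha\beta}(1-4t^2)/(4t^4)$.

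\emph{Part (iii).} The conditional mean of $\Csig$ on a $4$-cycle is $\prod_e H_e$. Its expectation over $(\sigma,\tau)$ is $2\delta^4$ per cycle ($\tr$-type computation: the pure terms give $\delta^4+\delta^4$, the mixed terms vanish). There are $3\binom n4$ four-cycles, so the mean is $6\binom n4\delta^4$. The null variance is $3\binom n4q^4$ because distinct cycles are orthogonal. The squared signal-to-noise ratio is
\[
\frac{36\binom n4^2\delta^8}{3\binom n4q^4}\to\frac{12\,n^4\delta^8}{24\,q^4}=\frac{t^4}{2}.
\]
Under the alternative, we bound the variance by $(1+o(1))\,3\binom n4q^4$ plus fluctuation terms of relative size $O(t^4/n^{0})\cdot$ lower order, counting pairs of cycles that share edges or vertices. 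Chebyshev's inequality, exactly as in \cref{thm:price}(ii), then gives errors $(\alpha,\beta)$ once $mt^4\ge C_{\alpha\beta}$ and $n\ge n_0$.
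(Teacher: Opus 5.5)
\textbf{Part (i).} Your claim that $\E[\Tsig\mid\sigma,\tau]$ does not vanish pointwise is false. Because of it you replace (i) by the weaker statement $\E\,\Tsig=0$, so (i) as stated is not proved. Write $\rho=\sigma\circ\tau$. Then $H_{ij}=\delta\sigma_i\sigma_j(1-\rho_i\rho_j)$, so $H_{ij}\neq0$ only if $\rho_i\neq\rho_j$. Two of $\rho_i,\rho_j,\rho_k$ always coincide, hence $H_{ij}H_{jk}H_{ik}=0$ for every $(\sigma,\tau)$. In your eight-term expansion the mixed terms in fact cancel in pairs. Let $a,b,c$ be the $\sigma$-products on $ij,jk,ik$ and $a',b',c'$ the $\tau$-products. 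Then $ab=c$ and $a'b'=c'$, so $-abc'$ and $+a'b'c$ both equal $\mp cc'$, and the other two pairs cancel likewise. The paper relies on this pointwise identity. It makes the conditional CLT given $(\sigma,\tau)$ centred at $0$ for every realization. The only terms outside the triangle chaos are then $2$-paths, with coefficient $H_{ik}$, and single edges, with coefficient $\sum_kH_{ik}H_{jk}$ of second moment $2\delta^4(n-1)(n-2)$. Their relative variance is $O(t(1+t)/n)\to0$. Your variance estimate targets a ``mixed-term'' contribution that does not exist and omits these terms, although the conclusion $m\binom n3q^3(1+o(1))$ is right.

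\textbf{Part (ii).} This is the genuine gap. In $\E_0L^2=\sum_S\eta^{|S|}t_S^2$ the weight of an even bipartite $S$ is $t_S^2=4^{\beta(S)}$. An edge-disjoint cycle decomposition of $S$ can have fewer than $\beta(S)$ cycles. For example, $K_{2,4}$ has $\beta=3$, but every decomposition consists of two $4$-cycles, so $t_S^2=64>16=\prod_{C}t_C^2$. The termwise injection that proves Veblen's bound therefore breaks down. Your eigenvalue heuristic ($\pm\delta^2$, each twice) only describes single cycles, and ``bounding the average by a product of cycle traces'' is not an argument. So your inequality $\log(1+\chisq)\le m\sum_{k\ \mathrm{even}}4t^k/(2k)$ is unproved; it is only the $n\to\infty$ limit of $m\log\E_0L^2$.

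The missing idea is a change of variables. Put $\rho=\sigma\circ\tau$, $\rho'=\sigma'\circ\tau'$ and $\omega=\sigma\circ\sigma'$; these are independent and uniform. Then $z_{ij}z'_{ij}=4\omega_i\omega_j\ind[\{i,j\}\in\mathcal A]$ with $\mathcal A=\{\{i,j\}:\rho_i\ne\rho_j,\ \rho'_i\ne\rho'_j\}$. Grouping vertices by the type $(\rho_i,\rho'_i)$ shows that $\mathcal A$ is the disjoint union $K_{a,d}\sqcup K_{b,c}$ of two complete bipartite graphs. Conditionally on $(\rho,\rho')$ every edge of $\mathcal A$ carries the same weight $4\eta$, so the even-subgraph expansion and Veblen's bound do apply on $\mathcal A$. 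The graph $K_{a,d}$ has at most $(ad)^l/(2l)$ cycles of length $2l$, and $ad\le s^2/4$, $bc\le(n-s)^2/4$ with $s=a+d$. This gives $\E_0L^2\le\E_s\exp\{G(2s\eta)+G(2(n-s)\eta)\}\le e^{G(2t)}$, where $G(x)=\sum_{l\ge2}x^{2l}/(2l)\le x^4/(4(1-x^2))$. Hence $G(2t)\le4t^4/(1-4t^2)$, which is exactly the constant you could not reconcile.

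Part (iii) follows the paper's route (Chebyshev with explicit moments) and is fine as a sketch. Make explicit that the conditional mean $16\delta^4K$ is random, where $K$ is the number of $\rho$-alternating $4$-cycles. Counting only overlapping pairs of cycles is therefore genuinely needed to get $\Var K=o(n^8)$.
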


The lower bound rests on $\E\prod_{e\in S}(\sigma_i\sigma_j-\tau_i\tau_j)=2^{\beta(S)}\ind\{S\text{ even and bipartite}\}$, where $\beta(S)$ is the cyclomatic number, so odd cycles drop out of the second moment. It also explains why signed quadrilaterals can beat signed triangles \citep{jin2021optimal}, and why the statistic $t(K_2,G)^4-t(C_4,G)$ of \citet{chatterjee2026higher} is slower than triangles when $\tr H^3\neq0$ but cannot be cancelled.

\section{Tree statistics and degree-preserving differences}\label{sec:trees}

The expansion \eqref{eq:expansion} already suggests that statistics built from trees cannot detect degree-preserving differences, since every tree has a leaf and integrating the leaf of an $H$-edge returns the degree function of $H$, which is zero. The next theorem makes this exact and extends it to distributions.

\begin{theorem}[Invariance of tree statistics]\label{thm:trees}
\begin{enumerate}
\item[(i)] Let $W$ be a graphon with $d_W\equiv d$, and let $H$ be bounded and symmetric with $\int_0^1H(x,y)\,dy=0$ for almost every $x$ and $0\le W+H\le1$. Then $t(T,W+H)=t(T,W)=d^{\,e(T)}$ for every finite forest $T$. Consequently $\E\,\tinj(T,G)$ is the same under $\Gnw{W}$ and $\Gnw{W+H}$ for every $n$.
\item[(ii)] In the planted-partition model \eqref{eq:pp}, fix $p\in(0,1)$, $t>0$, $m\ge1$ and finitely many trees $T_1,\dots,T_r$. For each tree, let $Z_T(G)$ be the signed count $\sum\prod_{e}(A_e-p)$ over the copies of $T$ in $K_n$, divided by its null standard deviation. As $n\to\infty$, the vector $(Z_{T_j}(G_\ell),Z_{T_j}(G'_\ell))_{j,\ell}$ converges to $N(0,I_{2mr})$ under both $H_0$ and $H_1$. Hence every test that depends on the graphs only through these counts has asymptotic power equal to its asymptotic size. The signed triangle $z$-test has asymptotic power $1-\Phi\bigl(z_{1-\alpha}-(mt^3/12)^{1/2}\bigr)>\alpha$.
\end{enumerate}
\end{theorem}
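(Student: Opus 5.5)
For part (i) I will expand the product over the edges of $T$, as in \eqref{eq:expansion}. This writes $t(T,W+H)$ as a sum over subsets $S\subseteq E(T)$ of mixed densities, with $H$ on $S$ and $W$ on $E(T)\setminus S$. The claim is that every term with $S\neq\emptyset$ vanishes and that the $S=\emptyset$ term equals $d^{e(T)}$.

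It suffices to treat a tree, since densities are multiplicative over components. I will prove by induction on $e(T)$ the following rooted statement. For every tree $T$ with a root $r$ and every assignment of labels $W$ or $H$ to its edges, the partial integral over all non-root vertices equals $d^{e(T)}$ as a function of the root position $x$ if all labels are $W$, and is $0$ for almost every $x$ otherwise.

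For the inductive step, pick a leaf $v\neq r$ with parent $u$ and integrate out $x_v$ first. This is allowed by Fubini, since everything is bounded. The edge $uv$ then contributes $d_W(x_u)=d$ if it is labelled $W$, and $\int H(x_u,y)\,dy=0$ almost everywhere if it is labelled $H$. Removing $v$ gives a smaller tree, and the statement follows. Integrating over the root gives $t(T,W+H)=d^{e(T)}$, and applying the same argument with $H=0$ gives $t(T,W)=d^{e(T)}$. The claim about expectations then follows from $\E\,\tinj(T,G)=t(T,\cdot)$ as stated in the Setup.

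For part (ii) I will work conditionally on $\sigma$. Given $\sigma$, the centred edge variables $Y_e=A_e-p$ are independent with conditional mean $\delta\sigma_i\sigma_j$. Under $H_1$ each signed tree count $\sum_{\text{copies}}\prod_e Y_e$ expands into products of $Y_e-\delta\sigma_i\sigma_j$ and $\delta\sigma_i\sigma_j$ factors.

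\begin{enumerate}
\item \textbf{Mean.} A copy has nonzero $\sigma$-average only if every vertex has even degree in the set of mean-factor edges. Since a forest has no nonempty even subgraph, $\E Z_T=0$ under $H_1$.
\item \textbf{Variance.} I will show that the terms containing $k\ge1$ mean factors contribute $o(1)$ after normalising by the null standard deviation, which is of order $(n^{v(T)}q^{e(T)})^{1/2}$. Heuristically, each mean factor costs $\delta/\sqrt q=(t/n)^{1/2}$. Summing over $\sigma$-dependent configurations, the only way to recover a power of $n$ is through the cancellation in $\sum_j\sigma_j$, which is of order $\sqrt n$ rather than $n$. The leaf-integration argument of (i), in its discrete form, should give a gain of at least $n^{-1/2}$ per mean factor.
\item \textbf{Asymptotic normality.} The conditionally fully centred part is a degenerate $U$-statistic, or homogeneous polynomial chaos, in independent variables. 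For fixed $\sigma$ its joint normality across trees follows from the fourth-moment theorem of de Jong, or from Nourdin--Peccati--Reinert for Rademacher/Bernoulli chaos. This holds uniformly in $\sigma$, since the conditional variances are $\binom{\cdot}{\cdot}\prod_e p_e(1-p_e)$, which equals the null variance times $1+O(\delta/p)$ up to concentration.
\item \textbf{Joint law.} Different trees are orthogonal chaoses, so the limit covariance is $I$. Independence across graphs then gives $N(0,I_{2mr})$ under both $H_0$ and $H_1$. Power equals size by the portmanteau theorem, applied to tests whose rejection regions have boundary of limiting measure zero; randomized tests are handled as bounded continuous functions, via weak convergence.
\end{enumerate}

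For the triangle claim, the mean of $\Tsig$ under $H_1$ is $\binom n3\delta^3\E[\sigma_i^2\sigma_j^2\sigma_k^2]=\binom n3\delta^3$ and the variance is $\binom n3 q^3$. The per-graph standardised mean is therefore $(\binom n3\delta^6/q^3)^{1/2}\to(t^3/6)^{1/2}$. The difference of the two sums over $m$ graphs has standard deviation $\sqrt{2m}$ times that of one graph, so the shift is $(mt^3/12)^{1/2}$. Asymptotic normality, by the same CLT (using Theorem 1(ii)'s variance computations), gives the stated power.

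The main obstacle is step 2, the negligibility of the mixed terms for general trees. I would first try a direct second-moment bound via the $\sigma$-average. Each mixed term is a sum over copies of products of fluctuation factors and $\delta\sigma\sigma$ factors. Its second moment pairs two copies whose fluctuation edges coincide, and the surviving $\sigma$-configurations must form even subgraphs of the union of the mean-factor edges. Counting vertices in such configurations should show the contribution is $O(n^{-1})$ relative to the null variance.
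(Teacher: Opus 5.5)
Your part (i) is correct. It is a longer version of the paper's argument, which simply notes $d_{W+H}=d_W+d_H=d$ and removes leaves one at a time from $W+H$ directly, with no expansion over $S$.

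For part (ii) your architecture is also the paper's. You condition on $\sigma$, split $A_e-p=\mu_e+\xi_e$ with $\mu_{ij}=\delta\sigma_i\sigma_j$, and show that everything outside the top chaos $\sum_\varphi\xi^\varphi$ is negligible in $L^2$. You then prove a CLT for that chaos uniformly in $\sigma$, and finish with Slutsky, independence across graphs and the portmanteau theorem. Your mean computation and triangle shift are right.

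\textbf{The gap is step~2.} It is the heart of the proof, you leave it as a heuristic, and the heuristic is false: there is no gain of $n^{-1/2}$ per mean factor. Take the cherry with both edges as mean factors. Then $c_\emptyset=\delta^2(n-2)\sum_{i<k}\sigma_i\sigma_k$ has second moment equal to $(n-2)\eta^2\approx t^2/n$ times the null variance $N(n-2)q^2$, not $n^{-2}$ times it. For the path with three edges and no fluctuation factor, the ratio is $(n-2)(n-3)\eta^3\approx t^3/n$. The reason is that vertices of even degree in the mean part are summed freely, with no sign cancellation. What does hold, and is all you need, is the uniform relative bound $O(1/n)$ stated in your last paragraph.

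\textbf{The missing idea} is the combinatorial fact behind that bound. Write the non-chaos part as $\sum_R c_R\xi^R$ with $c_R=\sum_{\varphi\supsetneq R}\mu^{\varphi\setminus R}$. These terms are conditionally orthogonal, and $\E c_R^2$ equals $\delta^{2(e-|R|)}$ times the number of pairs of copies $\varphi,\varphi'\supsetneq R$ with $\Odd(\varphi\setminus R)=\Odd(\varphi'\setminus R)$.

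Every admissible $\varphi'$ contains $V(R)\cup L$, where $L=\Odd(\varphi\setminus R)\setminus V(R)$. So there are $O(n^{v-|V(R)|-|L|})$ of them. Use $\eta=t/n$, $v-e=1$, and $|V(R)|-|R|=c(R)$, the number of components of $R$, with $c(\emptyset)=0$. Then the pair $(\varphi,R)$ contributes $O(n^{1-c(R)-|L|})$ relative to the null variance. You therefore need $c(R)+|L|\ge2$ for every $R\subsetneq\varphi$:
\begin{itemize}
\item If $R=\emptyset$, $L$ contains the at least two leaves of $\varphi$.
\item If $c(R)\ge2$, there is nothing to prove.
\item If $R$ is connected and non-empty, take any component $K$ of the forest $\varphi\setminus R$; one exists since $R\ne\varphi$. $K$ meets $V(R)$ in exactly one vertex: at least one because $\varphi$ is connected, at most one because otherwise $\varphi$ would contain a cycle. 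So one of the at least two leaves of $K$ lies outside $V(R)$ and belongs to $L$.
\end{itemize}
This is the discrete leaf argument you allude to. However, the gain comes from leaves of $\varphi\setminus R$ outside $V(R)$ and from the components of $R$, not from the number of mean factors.

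\textbf{Step~3} is a workable alternative to the paper's method of moments, but it needs more than you state. Orthogonality only gives the identity covariance, not joint normality. Trees with different numbers of edges sit in different chaos orders, so you need a multivariate result, for instance a multivariate de Jong theorem, or NPR universality followed by Peccati--Tudor. Its error bounds must also be uniform in $\sigma$. Uniformity is easy, since $v_e\in[q-2\delta,q+\delta]$ deterministically and no concentration argument is needed. The paper avoids both points with a moment lemma whose constant depends only on the trees, the moment orders and $q$, not on the law of the $\xi_e$.
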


We prove part (i) by removing leaves one at a time, and part (ii) by a conditional central limit theorem given $\sigma$. A signed tree count has mean zero under $P_\delta$ because every leaf has odd degree, and its covariances change only through pairs of trees whose union contains a cycle, which contribute a relative change of order $1/n$ (\cref{app:trees}). Interestingly, the statement also holds above the one-graph threshold ($t>1$), where a single graph already reveals $\sigma$ through its spectrum. Under $P_\delta$, every vertex degree is exactly $\mathrm{Binomial}(n-1,p)$, since the signs $\sigma_v\sigma_j$, $j\neq v$, are independent and uniform, and the edge count has exactly the null mean and variance.

\paragraph{Consequences for invariant learned features.} Colour refinement (the $1$-dimensional Weisfeiler--Leman algorithm) and message-passing graph neural networks cannot distinguish graphons with equal tree homomorphism densities, since on graphons their expressive power is exactly that of tree densities \citep{grebik2022fractional,boker2023wl,boker2023fine}, and the same holds for degree statistics. \Cref{thm:trees} therefore applies to many two-sample statistics in common use, including maximum mean discrepancies between degree distributions, Weisfeiler--Leman subtree kernels \citep{shervashidze2011weisfeiler}, and features from randomly initialized or trained message-passing networks \citep{thompson2022evaluation}. Against a regular null their population targets coincide for every degree-preserving difference, and for finitely many tree statistics \cref{thm:trees}(ii) gives zero asymptotic power at any fixed $m$, whereas clustering coefficients, orbit counts and spectra involve cycles and are not affected. One consequence concerns the evaluation of graph generative models \citep{obray2022evaluation,thompson2022evaluation}. Against a regular null, a model that reproduces the degree structure of its training graphs but gets their community or cycle structure wrong cannot be detected by metrics built from degrees or message passing, as \cref{sec:experiments} shows. Weisfeiler--Leman tests of dimension at least two do not share this limitation, since they determine homomorphism counts of all graphs of treewidth two, cycles included \citep{dell2018lovasz,morris2019weisfeiler}.

The limitation needs both a degree-preserving difference and a regular null. For a planted subgraph on top of $\Gnp$, which always increases degrees, signed star counts are optimal among constant-degree tests \citep{yu2025counting}. Under an irregular null, a vertex at position $x$ has degree $\mathrm{Binomial}(n-1,d_W(x))$ under both hypotheses, so the expected degree histogram is the same. Trees with an internal edge acquire a first-order term, equal to $\iint H(x,y)\,d_W(x)\,d_W(y)\,dx\,dy$ for the path with three edges and non-zero for generic $H$, and colour refinement separates the graphons (\cref{sec:firstorder}).

\section{Non-constant nulls and first-order visibility}\label{sec:firstorder}

The $t^{-3}$ rate in \cref{thm:price} comes from a special feature of the Erd\H{o}s--R\'enyi null. There every kernel $W_{F,e}$ in \eqref{eq:expansion} is constant, so a difference with zero degree function has $\langle H,W_{F,e}\rangle=0$ for every motif, and it cannot be seen before order three. For a non-constant null the first-order term is generically non-zero, and a simple motif test that does not need to know $W$ already detects the difference.

\begin{theorem}[First-order visibility]\label{thm:firstorder}
Let $F$ be a fixed graph with $v$ vertices and $e\ge1$ edges, and let $W$ be a graphon. Let $D$ be the difference between the two sample averages of $\tinj(F,\cdot)$.
\begin{enumerate}
\item[(i)] $\Var\,\tinj(F,G)\le v^2/(4n)$ for $G\sim\Gnw{W}$, for every graphon $W$ and $n\ge v$.
\item[(ii)] If $0\le W+H\le1$, then $\Delta_F:=t(F,W+H)-t(F,W)=\sum_{e\in E(F)}\langle H,W_{F,e}\rangle+R$, with $|R|\le(2^{e}-1-e)\|H\|_2^2$.
\item[(iii)] The test that rejects when $|D|\ge v/\sqrt{2nm\alpha}$ has size at most $\alpha$ under every graphon null $P=Q=\Gnw{W}$. Its power against $P=\Gnw{W+H}$, $Q=\Gnw{W}$ is at least $1-\beta$ whenever $n\,m\,\Delta_F^2\ge v^2(1/\alpha+1/\beta)$.
\end{enumerate}
\end{theorem}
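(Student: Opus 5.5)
For part (i) I would use the Efron--Stein inequality. View $\tinj(F,G)$ as a function of the $n$ independent vertex variables $Y_i=(U_i,(\xi_{ij})_{j>i})$, where $A_{ij}=\ind\{\xi_{ij}\le W(U_i,U_j)\}$ and each $\xi_{ij}$ is attached to the vertex $\min(i,j)$. This is a product of independent coordinates. Resampling $Y_i$ changes only those injective maps whose image contains $i$. A map avoiding $i$ is unaffected, and a uniformly random injective map $V(F)\to[n]$ hits $i$ with probability $v/n$. Since $\tinj\in[0,1]$ is an average of indicators, each resampling changes $\tinj$ by at most $v/n$. Efron--Stein gives $\Var\le\frac12\sum_i\E(f-f^{(i)})^2\le\frac12\cdot n\cdot(v/n)^2=v^2/(2n)$. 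To reach the sharper constant $v^2/(4n)$, I would instead use the bounded-differences form of Efron--Stein for functions with range of oscillation $c_i=v/n$ in each coordinate, $\Var f\le\frac14\sum_i c_i^2$. This is the standard consequence of $\Var(f\mid Y_{-i})\le c_i^2/4$ (Popoviciu) combined with $\Var f\le\sum_i\E\Var(f\mid Y_{-i})$. It yields exactly $v^2/(4n)$. The only care needed is that the oscillation of $f$ in $Y_i$ is at most $v/n$, which holds because the affected maps form a fraction exactly $1-\ff{n-1}{v}/\ff{n}{v}=v/n$ and each contributes a $0/1$ indicator.

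For part (ii) I would expand as in \eqref{eq:expansion}. The $|S|=1$ terms are by definition $\sum_e\langle H,W_{F,e}\rangle$. For each $S$ with $|S|\ge2$, I would bound the term by choosing two distinct edges $e_1,e_2\in S$ and bounding every other factor, whether $W$ or $H$, by $1$ in absolute value. Here I use $|H|\le1$, which follows from $0\le W,W+H\le1$. What remains is $\int|H(x_a,x_b)||H(x_c,x_d)|$ over all vertex variables. If $e_1,e_2$ are disjoint, this equals $\|H\|_1^2\le\|H\|_2^2$. If they share a vertex, it is $\int(\int|H(x,y)|dy)^2dx\le\|H\|_2^2$ by Cauchy--Schwarz (Jensen in $y$). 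The number of such $S$ is $2^e-1-e$, which gives the bound on $R$. This step is routine.

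For part (iii), size follows by Chebyshev. Under the null, $D$ has mean $0$ and variance at most $2\cdot v^2/(4nm)=v^2/(2nm)$ by (i), so $\Prob(|D|\ge v/\sqrt{2nm\alpha})\le\alpha$. For power, $\E D=\Delta_F$ (using $\E\tinj=t(F,\cdot)$), and the variance bound is the same under the alternative since (i) holds for every graphon. Type II error occurs only if $|D-\Delta_F|\ge|\Delta_F|-v/\sqrt{2nm\alpha}$. Chebyshev bounds this by $\beta$ provided $|\Delta_F|\ge v/\sqrt{2nm\alpha}+v/\sqrt{2nm\beta}$. By $(a+b)^2\le2(a^2+b^2)$, it suffices that $nm\Delta_F^2\ge v^2(1/\alpha+1/\beta)$, which is the stated condition.

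The main obstacle is obtaining the constant $1/4$ in (i). If the Popoviciu-based refinement of Efron--Stein were unavailable, I would fall back to $v^2/(2n)$ and adjust the constants in (iii).
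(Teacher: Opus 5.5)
Your proposal is correct and follows the paper's proof essentially step for step. For (i) you use Efron--Stein over vertex blocks with the conditional-variance bound $(v/n)^2/4$; the paper attaches $V_{ij}$ to the larger index rather than the smaller, which is immaterial. For (ii) you bound the $|S|\ge2$ terms with the same two-edge Cauchy--Schwarz argument, and for (iii) you use Chebyshev under both hypotheses together with $(a+b)^2\le2a^2+2b^2$. The refinement you flag as the main obstacle is exactly the form of Efron--Stein the paper uses, so you do not need the fallback to $v^2/(2n)$.
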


Part (i) is the Efron--Stein inequality applied to the independent blocks $(U_i,(V_{ij})_{j<i})$, where $A_{ij}=\ind\{V_{ij}\le W(U_i,U_j)\}$ with i.i.d.\ uniform $V_{ij}$, since changing one block moves $\tinj(F,G)$ by at most $v/n$, and part (iii) is Chebyshev's inequality under both hypotheses. No constant depends on $W$.

\begin{corollary}[Order of visibility]\label{cor:visibility}
Let $H=cH_0$, and suppose the first-order coefficient $\kappa_F=\sum_{e\in E(F)}\langle H_0,W_{F,e}\rangle$ is non-zero. If $0<c\le|\kappa_F|/(2(2^e-1-e)\|H_0\|_2^2)$, the test of \cref{thm:firstorder}(iii) has power at least $1-\beta$ as soon as $m\ge4v^2(1/\alpha+1/\beta)/(nc^2\kappa_F^2)$. With $t_H=n\|H\|_2^2$, the unaligned sample size is therefore $O(1/t_H)$, the order of an aligned test under every null (\cref{prop:alignedW}) and at a constant null (\cref{thm:price}(iii)). If $W$ is constant and $H_0$ has zero degree function, then $\kappa_F=0$ for every $F$ and the rate becomes $t^{-3}$ or worse (\cref{thm:price,prop:signcancel}).
\end{corollary}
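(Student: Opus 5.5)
The plan is to apply \cref{thm:firstorder} to $H=cH_0$ and use the scaling of each term in $c$. By linearity, the first-order part of $\Delta_F$ is $\sum_{e}\langle cH_0,W_{F,e}\rangle=c\,\kappa_F$. The remainder bound of \cref{thm:firstorder}(ii) gives $|R|\le(2^e-1-e)c^2\|H_0\|_2^2$, where we assume $0\le W+cH_0\le1$ so that part (ii) applies. The hypothesis $c\le|\kappa_F|/(2(2^e-1-e)\|H_0\|_2^2)$ is chosen exactly so that $|R|\le c|\kappa_F|/2$. Hence
\[
|\Delta_F|\ge c|\kappa_F|-\tfrac12 c|\kappa_F|=\tfrac12 c|\kappa_F|,
\qquad\text{so}\qquad \Delta_F^2\ge c^2\kappa_F^2/4 .
\]
If $e=1$, the factor $2^e-1-e$ vanishes and $R=0$, so the constraint on $c$ is void. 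Inserting this lower bound into the power condition $nm\Delta_F^2\ge v^2(1/\alpha+1/\beta)$ of \cref{thm:firstorder}(iii) shows that $m\ge4v^2(1/\alpha+1/\beta)/(nc^2\kappa_F^2)$ suffices.

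For the rate statement, $t_H=n\|H\|_2^2=nc^2\|H_0\|_2^2$. The sufficient sample size therefore equals
\[
\frac{4v^2(1/\alpha+1/\beta)\,\|H_0\|_2^2}{\kappa_F^2}\cdot\frac1{t_H}=O(1/t_H),
\]
with a constant depending on $F,W,H_0,\alpha,\beta$ but not on $n$ or $c$. The comparison with the aligned order is then just a citation: \cref{prop:alignedW} gives the aligned rate under a general null (taken as stated later in the paper), and \cref{thm:price}(iii) gives it at a constant null. In the latter case $\|H\|_2^2=\delta^2$ and $t_H=n\delta^2$, which equals $t$ up to the factor $q$.

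For the final sentence, suppose $W\equiv p$. Then each kernel $W_{F,e}$ is constant, equal to $p^{e-1}$, because with $e$ removed every remaining edge is weighted by $p$ and the free vertices integrate to $1$. It follows that
\[
\langle H_0,W_{F,e}\rangle=p^{e-1}\iint H_0=p^{e-1}\int d_{H_0}=0
\]
whenever $d_{H_0}\equiv0$, and so $\kappa_F=0$ for every $F$. The claim that the rate is then $t^{-3}$ or worse is the discussion in \cref{sec:setup}: only edge sets with minimum degree two survive, so visibility is at order three or four. It is quantified by the lower bounds of \cref{thm:price}(i) and \cref{prop:signcancel}(ii).

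None of these steps is difficult. The only point needing care is checking that the constant in the threshold on $c$ makes the remainder at most half of the first-order term, including the degenerate case $e=1$.
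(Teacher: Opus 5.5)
Your proposal is correct and follows the paper's proof of Theorem~\ref{thm:firstorder}(iv) essentially step for step. The restriction on $c$ makes the remainder in part (ii) at most $c|\kappa_F|/2$, so $|\Delta_F|\ge c|\kappa_F|/2$ can be inserted into the power condition of part (iii), and for constant $W\equiv w$ one has $W_{F,e}\equiv w^{e-1}$, whence $\kappa_F=w^{e-1}\int d_{H_0}=0$. As in the paper, the closing ``$t^{-3}$ or worse'' clause is supported only by the lower bounds for the two explicit models (the planted partition and the rank-two model), not by a proof for a general zero-degree $H_0$.
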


The corollary is only an upper bound, since for a non-constant null we do not prove matching lower bounds for either aligned or unaligned tests (\cref{sec:discussion}). Degeneracy of the null can improve the constant further. If $W$ is $F$-regular, the variance in (i) drops from order $n^{-1}$ to order $n^{-2}$ \citep{bhattacharya2023fluctuations,chatterjee2026higher}, and the test gains a factor of $n$.

Degree-corrected block models have finite rank. For a degree-preserving difference that is orthogonal to their eigenfunctions, every path density is unchanged and every cycle density moves by exactly $t(C_k,H)$, as under the Erd\H{o}s--R\'enyi null (\cref{prop:orthogonal}), so we expect the $t^{-3}$ rate to persist. Other degree-preserving differences are generically visible at first order.

\section{Testing with one to three graphs per group}\label{sec:calibration}

The rates above matter most for small $m$, which is where standard calibrations fail. Permuting whole graphs is exact, but for a two-sided statistic its smallest $p$-value is $2/\binom{2m}{m}$, which equals $1$, $1/3$ and $1/10$ for $m=1,2,3$. Normal approximations estimate a variance from only $2m-2$ degrees of freedom, and the network jackknife is conservative in expectation \citep{lin2020jackknife}. However, graphon models have a simple structure that makes exact tests possible with a single graph per group.

\begin{proposition}[Exact vertex-split permutation test]\label{prop:split}
Partition $[n]$ into fixed blocks $V_1,\dots,V_B$. If $G\sim\Gnw{W}$, the induced subgraphs $G[V_1],\dots,G[V_B]$ are independent, and $G[V_b]\sim\mathbb G(|V_b|,W)$. If both samples come from $\Gnw{W}$ and the blocks have equal size, the $2mB$ values of any statistic computed on the induced subgraphs are therefore independent and identically distributed. A permutation test over their group labels has level at most $\alpha$ for every $m$ and $B$, and a two-sided version can reject as soon as $\binom{2mB}{mB}\ge2/\alpha$, for instance when $m=1$ and $B=4$.
\end{proposition}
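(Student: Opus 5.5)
The proof follows the generative description of $\Gnw{W}$ from \cref{sec:setup}. I would write $G$ in terms of independent uniforms: latent positions $U_1,\dots,U_n$ and edge variables $V_{ij}$ for $i<j$, with $A_{ij}=\ind\{V_{ij}\le W(U_i,U_j)\}$. The induced subgraph $G[V_b]$ is a measurable function of $(U_i)_{i\in V_b}$ and $(V_{ij})_{i<j,\,i,j\in V_b}$ only. For distinct blocks these collections of uniforms are disjoint, so the induced subgraphs are independent. Within a block, after relabelling $V_b$ as $[|V_b|]$ by the order-preserving bijection, the positions are i.i.d.\ uniform and the edges are conditionally independent Bernoulli$(W(U_i,U_j))$. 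This is exactly the definition of $\mathbb G(|V_b|,W)$. The edges between blocks are simply discarded, which is why no dependence leaks across blocks.

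Next I would assemble the full collection of induced subgraphs. All $2m$ graphs are independent, and each is split into $B$ independent pieces. Hence the $2mB$ induced subgraphs are mutually independent. When the blocks have equal size $n/B$ and $P=Q=\Gnw{W}$, each piece has law $\mathbb G(n/B,W)$. Any fixed statistic $S$ applied to the pieces therefore yields $2mB$ i.i.d.\ real values $S_1,\dots,S_{2mB}$. If $S$ is not invariant under relabelling, I would apply it in the canonical relabelling chosen above; the blocks are fixed in advance, so this relabelling is deterministic.

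For validity, I would use the standard exchangeability argument. The i.i.d.\ vector is exchangeable, so its law is invariant under every permutation of the $2mB$ coordinates. Consider the permutation test that compares a two-sample statistic $D$ (say a difference of group means) on the observed labelling with its values over all $\binom{2mB}{mB}$ relabellings into two groups of size $mB$, rejecting when the observed value is among the top $\lfloor\alpha\binom{2mB}{mB}\rfloor$, with randomization or a conservative tie rule. By exchangeability the observed labelling is uniformly distributed among the orbit, so the level is at most $\alpha$ for every $m$ and $B$. This is the classical exact permutation lemma.

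Finally, the two-sided count. A two-sided statistic $|D|$ takes the same value on a split and on its complement, so the smallest attainable $p$-value is $2/\binom{2mB}{mB}$. Rejection is therefore possible iff this is at most $\alpha$. For $m=1$ and $B=4$ we get $\binom84=70\ge2/0.05=40$.

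The only delicate point is the first step: being careful that the induced subgraphs really are independent despite sharing the global model. The explicit uniform coupling resolves this, so I expect no real obstacle. I would also remark that a non-integer $n/B$ only breaks identical distribution, and that this can be fixed by dropping leftover vertices.
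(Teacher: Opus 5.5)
Your proposal is correct and follows the paper's proof essentially step for step. The paper also uses the explicit uniform coupling to show that the induced subgraphs depend on disjoint collections of independent variables, and it also derives validity from exchangeability of the $2mB$ i.i.d.\ pieces via the classical permutation lemma, which it proves by replacing the observed split with a uniformly random split $J$. Your complement-symmetry argument for the two-sided minimum $p$-value $2/\binom{2mB}{mB}$ makes explicit a point that the appendix states only in the one-sided form $\hat p\ge\binom{2mB}{mB}^{-1}$.
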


This uses only motifs inside a block. For signed triangles under an Erd\H{o}s--R\'enyi null, a block of size $n/B$ has per-graph signal-to-noise ratio $t/B$, so the squared signal-to-noise ratio of a graph drops from about $t^3/6$ to $t^3/(6B^2)$. With $m=1$ and $B=4$ its power in our experiments is only \EthirteenSplitPowSbm{} for a first-order visible difference and \EthirteenSplitPowEr{} under the Erd\H{o}s--R\'enyi null (\cref{tab:calibration} in \cref{app:e13}), so it mainly guarantees validity. For a mixture of graphons the blocks are only conditionally independent, so the test is not exact.

\section{Experiments}\label{sec:experiments}

Apart from one study of protein graphs, the experiments use simulated graphs, so the ground truth is known. Every number below is read from result files produced by the scripts in the arXiv ancillary files, which use only NumPy and SciPy. For each test we report $m^\ast$, the smallest number of graphs per group at which the power reaches $0.5$ at level $0.05$, interpolated on a geometric grid of $m$. Settings, replicate counts and run times are in \cref{app:experiments}.

\paragraph{Sample sizes with and without alignment.} \Cref{fig:price} uses the planted partition with $n=300$, $p=1/2$ and $t\in\{0.3,\dots,0.9\}$. The fitted slopes of $\log m^\ast$ on $\log t$ are $\ElevenSlopeUone$ for signed triangles (theory $-3$), $\ElevenSlopeAtwo$ for the aligned spectral test (theory $-1$), $\ElevenSlopeAone$ for $Q$, whose signal-to-noise ratio grows like $\sqrt{m(m-1)}$ at small $m$, and $\ElevenSlopeRatioAone$ and $\ElevenSlopeRatioAtwo$ for the unaligned-to-aligned ratio (theory $-2$).
At $t=0.3$ signed triangles need \ElevenUoneLo{} graphs per group, against \ElevenUoneTheoryLo{} predicted by the normal approximation. The aligned tests need \ElevenAoneLo{} and \ElevenAtwoLo{}, a ratio of \ElevenRatioAoneLo{} to \ElevenRatioAtwoLo{}, and the ratio is \ElevenRatioAoneHi{} to \ElevenRatioAtwoHi{} at $t=0.9$.

With a pooled edge density, an unknown $p$ costs nothing, whereas the per-graph excess triangle density needs \ElevenKthreeRfactor{} times more graphs because it keeps the degree-variance fluctuation. The information bound lies a factor of \ElevenLBfactorLo{} to \ElevenLBfactorHi{} below the triangle test, a gap due to the constant in Le Cam's inequality, the second sample and longer cycles, relevant near $t=1$ (\cref{rem:sharp}). A sparse panel ($n=2000$, average degree $15$) gives the same exponents, $\SparseSlopeUone$ and $\SparseSlopeRatioAone$ (\cref{app:e11}).

\paragraph{Sign cancellation.} Under the rank-two model \eqref{eq:rank2}, the triangle test never exceeds a rejection rate of \ElevenbTriRtwoMaxPower{} for any $t$ and any $m\le\ElevenbGridMax$. Signed $4$-cycles detect the difference with slope $\ElevenbSlopeCfourRtwo$ (theory $-4$), from \ElevenbCfourRtwoLo{} graphs at $t=0.3$ to \ElevenbCfourRtwoHi{} at $t=0.9$ (\cref{app:e11}).

\begin{figure}[t]
\centering
\includegraphics[width=\linewidth]{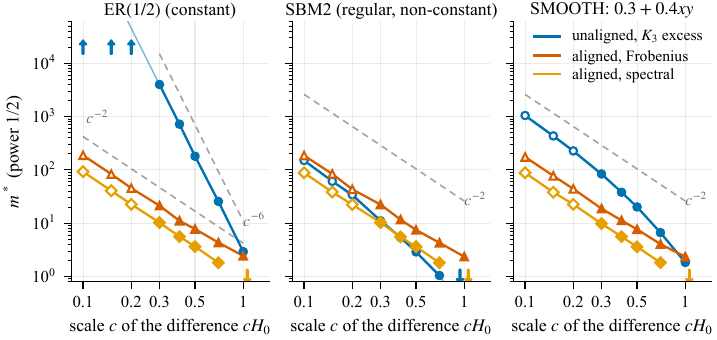}
\caption{The same degree-preserving difference $cH_0$ added to three nulls ($n=128$): Erd\H{o}s--R\'enyi, a regular two-block model (SBM2) and an irregular smooth graphon (SMOOTH). The plots show $m^\ast$ against the scale $c$ for the unaligned excess-triangle test and for two aligned tests, with thin curves for the prediction $2z^2\sigma_0^2/\Delta(c)^2$ from the exact shift. Under the constant null the sample size of the unaligned test grows like $c^{-6}$, while under the non-constant nulls it tracks the aligned tests, whose sample size grows like $c^{-2}$. Open markers extend the grid to small $c$, and arrows mark censored values.}
\label{fig:visibility}
\end{figure}

\paragraph{Order of visibility under non-constant nulls.} \Cref{fig:visibility} adds one degree-preserving difference $cH_0$ to three nulls. Its first-order coefficient $\kappa=3\langle H_0,W_{K_3}\rangle$ is zero under the Erd\H{o}s--R\'enyi null and $\TwelveKappaSbm$ and $\TwelveKappaSmooth$ under SBM2 and SMOOTH. At $c=0.3$ the unaligned excess-triangle test needs \TwelveUnEr{} graphs under the Erd\H{o}s--R\'enyi null, against \TwelveUnSbm{} under SBM2 and \TwelveUnSmooth{} under SMOOTH (\TwelveRatioSbm{} and \TwelveRatioSmooth{} times fewer). The aligned spectral test needs about $10$ graphs under each null.

Under the constant null the unaligned slope in $c$ is $\TwelveSlopeUnEr$ (theory $-6$), whereas under the non-constant nulls it approaches the first-order value $-2$ as $c\to0$, with local slopes \TwelveLocalSbm{} and \TwelveLocalSmooth{} on $c\in[0.1,0.15]$ (s.e.\ at most \TwelveLocalSbmSe{} and \TwelveLocalSmoothSe). Over $c\in[0.3,1]$ the slopes are steeper, \TwelveSlopeUnSbmSpec{} and \TwelveSlopeUnSmoothSpec, because higher-order terms of $t(K_3,W+cH_0)$ are still comparable to $\kappa c$ there. Every uncensored $m^\ast$ agrees with the prediction $2z^2\sigma_0^2/\Delta(c)^2$ within Monte Carlo error.

\begin{figure}[t]
\centering
\includegraphics[width=\linewidth]{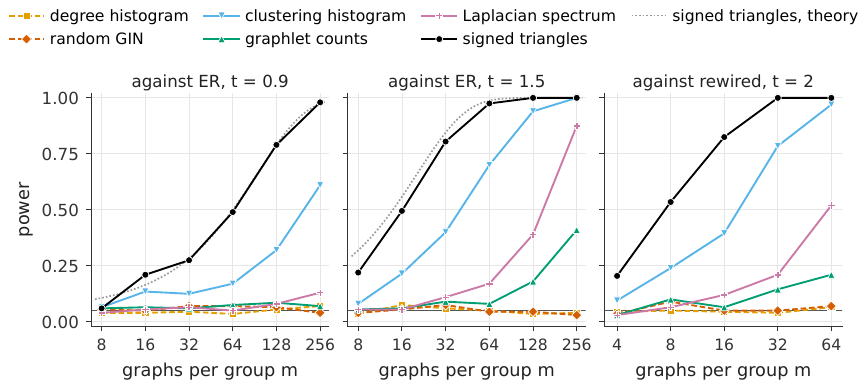}
\caption{Power of two-sample tests on descriptors used to evaluate graph generative models. Descriptors are compared by a Gaussian-kernel MMD calibrated by permutation, and signed triangles by a permutation test on the mean difference ($n=200$, $p=0.1$, $200$ replicates per point, horizontal line at level $0.05$). Left and middle: planted partition \eqref{eq:pp} against $\Gnp$. Right: planted-partition graphs against the same graphs after degree-preserving double-edge swaps. Tree-based descriptors (dashed) stay at the level, and the dotted curve is the normal-theory power of signed triangles.}
\label{fig:metrics}
\end{figure}

\paragraph{Graph evaluation metrics.} We test descriptors used to evaluate graph generative models \citep{you2018graphrnn,obray2022evaluation,thompson2022evaluation}, namely the degree histogram, the clustering histogram, six graphlet counts, the Laplacian spectrum and a randomly initialized graph isomorphism network (GIN) \citep{xu2019powerful}, each compared by a Gaussian-kernel MMD \citep{gretton2012kernel}, together with signed triangles (\cref{fig:metrics}).

Against the planted partition, the degree histogram and the random GIN reject at rates between \EfourteenBlindMin{} and \EfourteenBlindMax{} for every $m$ up to $256$, at both $t=0.9$ and $t=1.5$, so they stay at the level, as \cref{thm:trees} predicts. Both metrics have power \PosCtrlDeg{} with $m=16$ against a degree-heterogeneous graphon of the same edge density, where signed triangles have power \PosCtrlTri{}.

Signed triangles reach power $0.5$ at $m=$ \EfourteenMstarTriA{} for $t=0.9$ and $m=$ \EfourteenMstarTriB{} for $t=1.5$, against \EfourteenTheoryTriA{} and \EfourteenTheoryTriB{} predicted by the normal approximation with the moments of \cref{thm:price}(ii). The clustering histogram needs \EfourteenMstarClusA{} and \EfourteenMstarClusB{} graphs, and the graphlet vector needs \EfourteenMstarGraphletA{} because edge-density fluctuations, which move all counts together, dominate its kernel distance.

The right panel of \cref{fig:metrics} models a generator that reproduces degrees exactly but destroys cycle structure, with group 2 consisting of planted-partition graphs with $t=2$ after degree-preserving double-edge swaps. Signed triangles detect the change with $m=$ \EfourteenbMstarTri{} graphs, clustering with \EfourteenbMstarClus{} and the spectrum with \EfourteenbMstarSpec{}, but the degree and GIN metrics never exceed a rejection rate of \EfourteenbMaxGin{}.

\paragraph{Irregular nulls and protein graphs.} Under two irregular nulls, SMOOTH and a rank-one graphon, the random GIN detects a degree-preserving difference that is correlated with the degree profile, with power \EfifteenGinSmoothHA{} and \EfifteenGinDcHA{} at $m=64$, through the first-order term of the $3$-edge path (\cref{app:e15}, where the signal is above the one-graph threshold). It stays at the level, with rejection rates between \EfifteenGinOrthMin{} and \EfifteenGinOrthMax{} up to $m=256$ (Monte Carlo s.e.\ $0.015$), when the difference is orthogonal to the eigenfunctions of the null, which leaves every path density unchanged (\cref{prop:orthogonal}). On protein graphs from two generative-model benchmarks (\cref{app:e16}), rewiring $\EsixteenRho\%$ of the edges of each graph by degree-preserving swaps removes about a quarter of its triangles. The degree histogram and the random GIN never exceed a rejection rate of \EsixteenTreeMax{} for $m\le64$, whereas degree-corrected signed triangles reach power \EsixteenPowerTriDdFour{} with $m=4$ on D\&D and signed $4$-cycles need $m^\ast=\EsixteenMstarCfourEnz$ on ENZYMES. We chose their normalization in a pilot on D\&D (\cref{app:e16}).

\paragraph{Calibration with few graphs.} With one to three graphs per group (\cref{tab:calibration} in \cref{app:e13}), whole-graph permutation never rejects, the Welch normal approximation reaches size $0.21$ at $m=2$, and only the vertex-split test (\cref{prop:split}) has a finite-sample guarantee at $m=1$. Conservative variance estimates keep power for first-order visible differences (SBM2 null). Under the Erd\H{o}s--R\'enyi null, where the difference is third-order visible, the best method valid for all three nulls reaches power $0.15$, so with few graphs only first-order differences appear testable without strong assumptions.

\section{Related work}\label{sec:related}

With aligned vertices, minimax rates for two-sample testing of inhomogeneous random graphs are known \citep{ghoshdastidar2020two,chatterjee2023two}. Without alignment, tests compare network statistics of one graph per group \citep{ghoshdastidar2017two}, latent positions \citep{tang2017nonparametric}, subgraph counts and cumulants \citep{maugis2020testing,bravo2023quantifying,shao2026higher}, or estimated block models and graphons \citep{nguen2025network,sabanayagam2021graphon,qi2024multivariate}, and our excess triangle statistic is the $T_3$ of \citet{gao2017testing}. Subgraph counts are the classical moments of network models \citep{bickel2011method}, and populations of unlabeled networks have been studied through graph distances \citep{kolaczyk2020averages,lunagomez2021modeling}.

Misalignment has been studied for aligned tests with shuffled labels \citep{saxena2025lost} and for changepoints \citep{chen2026vertex}. The choice of cycle order for one graph is studied for signed polygons, latent geometry and planted subgraphs \citep{jin2021optimal,bubeck2016testing,bangachev2024detection,yu2025counting}, and low-degree analyses of invariant tests treat one graph \citep{kunisky2024tensor,carpentier2025low,rush2026count,skeja2026sharp}, whereas our lower bounds are information-theoretic and use independence across graphs. \Cref{app:related} discusses more work.

\section{Discussion and limitations}\label{sec:discussion}

Our results suggest that the number of unaligned graphs needed is set by the lowest order at which the difference is visible to invariant statistics. A motif test attains the aligned order of sample size for differences visible at first order, whereas degree-preserving differences from a constant null, first visible through cycles, cost a power of the signal-to-noise ratio and are invisible to tree statistics.

\paragraph{A recommendation for evaluation pipelines.} Metrics built from degrees or message passing should be complemented by cycle statistics. A cheap choice is the signed triangle and $4$-cycle counts of $B=A-dd^\top/(2|E|)$, the degree-corrected centring of \citet{jin2021optimal}, divided by the number of $2$-paths, which depends only on the degrees. They need one matrix product per graph, and permuting whole graphs makes the test exact. In our experiments they detected every degree-preserving difference under irregular nulls and the unrestricted rewiring of the protein graphs (\cref{app:e15,app:e16}).

\paragraph{Limitations.} The sharp rates of \cref{sec:price} are for an Erd\H{o}s--R\'enyi null and two explicit alternatives. For non-constant nulls we prove upper bounds only, since matching lower bounds would need a second-moment method for nulls that are mixtures over latent positions, which we leave open. \Cref{thm:trees} needs a regular null, and part (ii) covers finitely many tree statistics in the dense regime, so message-passing architectures at finite $n$ remain open. Partially aligned samples, such as networks registered to a common atlas, and tests that first estimate the partition of each graph when $t>1$ are outside our analysis. Apart from the protein graphs, whose alternative is a synthetic rewiring, our experiments are simulations, and the exact small-$m$ test gives up motifs that cross blocks.

\paragraph{Broader impact.} This methodological work mainly bears on the evaluation of generative models of networks, where metrics that ignore cycles can report a good fit for a model that is wrong in ways they cannot detect. We do not foresee negative societal consequences.

\bibliographystyle{plainnat}
{\small
\bibliography{references}
}

\newpage
\appendix

\section{Proofs}\label{app:proofs}

This appendix proves the results of \cref{sec:price,sec:trees,sec:firstorder,sec:calibration} in the order of the main text. Twelve numerical checks, numbered (i)--(xii), test identities and bounds below by exact enumeration on small graphs or by simulation. Their scripts and outputs are part of the code released with the paper.

\paragraph{Conventions.}
We write $[n]=\{1,\dots,n\}$, let $E_n$ be the set of unordered pairs $e=\{i,j\}\subset[n]$, and put $N=|E_n|=\binom n2$. A graph on $[n]$ is identified with its adjacency vector $A\in\{0,1\}^{E_n}$.

For $S\subseteq E_n$, $V(S)$ is the set of endpoints of edges in $S$, $\deg_S(i)$ the degree of $i$ in $S$, $\Odd(S)=\{i:\deg_S(i)\text{ odd}\}$, $c(S)$ the number of connected components of the graph $(V(S),S)$ and $\beta(S)=|S|-|V(S)|+c(S)$ its cyclomatic number. We call $S$ \emph{even} if $\Odd(S)=\emptyset$, and $\Evn$ denotes the set of even $S\subseteq E_n$, including $\emptyset$. For a finite graph $G$, $\Cyc(G)$ is its set of cycles, viewed as edge sets.

For $x\in\mathbb R^{E_n}$ we write $x^S=\prod_{e\in S}x_e$, with $x^\emptyset=1$. For $\sigma\in\{\pm1\}^n$ and $B\subseteq[n]$ we write $\sigma^B=\prod_{i\in B}\sigma_i$, so that
\begin{equation}\label{eq:oddset}
\prod_{\{i,j\}\in S}\sigma_i\sigma_j=\sigma^{\Odd(S)},\qquad \E\,\sigma^{B}=\ind[B=\emptyset]\quad\text{for uniform }\sigma .
\end{equation}
For a graph $F$ without isolated vertices, $\Cop_n(F)$ is the set of copies of $F$, that is, of the $S\subseteq E_n$ with $(V(S),S)\cong F$, and $|\Cop_n(F)|=\ff{n}{v(F)}/|\mathrm{Aut}(F)|$. Throughout, $p\in(0,1)$, $q=p(1-p)$, $\eta=\delta^2/q$ and $t=n\eta$. The null is $P_0=G(n,p)$. The planted partition law $P_\delta$ draws $\sigma$ uniformly from $\{\pm1\}^n$ and then, given $\sigma$, independent $A_{ij}\sim\Bern(p+\delta\sigma_i\sigma_j)$, where $0<\delta\le\min(p,1-p)$. Tests may be randomized, so a test $\psi$ takes values in $[0,1]$.

\subsection{Preliminaries}\label{app:prelim}

\begin{lemma}[Tensorization, two-sample reduction and Le Cam]\label{lem:tensor}
Let $P,Q,R,P_i,Q_i$ be probability measures on finite sets with $P\ll Q$, $P_i\ll Q_i$.
\begin{itemize}
\item[(a)] $1+\chi^2\big(\bigotimes_{i=1}^m P_i\,\|\,\bigotimes_{i=1}^mQ_i\big)=\prod_{i=1}^m\big(1+\chi^2(P_i\|Q_i)\big)$.
\item[(b)] $\TV(P\otimes R,Q\otimes R)=\TV(P,Q)$.
\item[(c)] For every test $\psi$, $\E_P\psi+\E_Q(1-\psi)\ge1-\TV(P,Q)$, and $\TV(P,Q)\le\frac12\chi^2(P\|Q)^{1/2}$.
\item[(d)] Let $\mathcal H_0$ be a set of pairs of laws containing $(P_0,P_0)$ and $\mathcal H_1$ a set containing $(P_1,P_0)$. Then every test $\psi$ of $(G_1,\dots,G_m,G'_1,\dots,G'_m)$, where $G_\ell\sim P$, $G'_\ell\sim Q$ are independent, satisfies
\[
\begin{aligned}
\sup_{(P,Q)\in\mathcal H_0}\E\psi+\sup_{(P,Q)\in\mathcal H_1}\E(1-\psi)\;&\ge\;1-\TV\big(P_1^{\otimes m},P_0^{\otimes m}\big)\\
&\ge\;1-\tfrac12\Big(\big(1+\chi^2(P_1\|P_0)\big)^m-1\Big)^{1/2}.
\end{aligned}
\]
\end{itemize}
\end{lemma}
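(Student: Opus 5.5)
All four parts are standard facts on finite sets. I will prove them directly from the definitions $\chi^2(P\|Q)=\sum_x P(x)^2/Q(x)-1$ and $\TV(P,Q)=\tfrac12\sum_x|P(x)-Q(x)|$, and then chain them for (d).

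For (a), write $1+\chi^2(P\|Q)=\sum_x P(x)^2/Q(x)$, which is $\E_Q[(dP/dQ)^2]$. For product measures the likelihood ratio factorizes as $\prod_i dP_i/dQ_i(x_i)$ and the coordinates are independent under $\bigotimes Q_i$. The second moment is therefore the product of the individual second moments. Absolute continuity of the product follows from $P_i\ll Q_i$.

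For (b), expand $\sum_{x,y}|P(x)R(y)-Q(x)R(y)|=\sum_y R(y)\sum_x|P(x)-Q(x)|$.

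For (c), the first inequality uses $\E_P\psi-\E_Q\psi\ge-\TV(P,Q)$ for $\psi\in[0,1]$. This holds because $\sup_{0\le\psi\le1}(\E_Q\psi-\E_P\psi)=\sum_x(Q-P)_+=\TV$. Then $\E_P\psi+\E_Q(1-\psi)=1+\E_P\psi-\E_Q\psi\ge1-\TV$. The second inequality is Cauchy--Schwarz:
\[
\sum_x|P(x)-Q(x)|=\sum_x \frac{|P(x)-Q(x)|}{\sqrt{Q(x)}}\sqrt{Q(x)}\le\Bigl(\sum_x\frac{(P(x)-Q(x))^2}{Q(x)}\Bigr)^{1/2}=\chi^2(P\|Q)^{1/2}.
\]
Points with $Q(x)=0$ have $P(x)=0$ by absolute continuity and can be dropped.

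For (d), the key step is to bound each supremum below by the single pair it contains. The first supremum is at least $\E_{(P_0,P_0)}\psi$ and the second at least $\E_{(P_1,P_0)}(1-\psi)$. Under these two pairs the joint laws of the $2m$ graphs are $P_0^{\otimes m}\otimes P_0^{\otimes m}$ and $P_1^{\otimes m}\otimes P_0^{\otimes m}$. Apply (c) to these two joint laws with the same test $\psi$. Their total variation equals $\TV(P_1^{\otimes m},P_0^{\otimes m})$ by (b) with $R=P_0^{\otimes m}$, which gives the first inequality. The second follows from the $\chi^2$ bound in (c) combined with (a) for $m$ identical factors.

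None of these steps is a genuine obstacle. The only point needing care is that tests are randomized with values in $[0,1]$, and the variational form of $\TV$ used in (c) covers this.
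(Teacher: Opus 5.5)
Your proposal is correct and follows essentially the same route as the paper: the likelihood ratio factorizes for (a), (b) is a direct expansion, (c) combines $\E_Q\psi-\E_P\psi\le\TV(P,Q)$ with Cauchy--Schwarz, and (d) lowers each supremum to the pairs $(P_0,P_0)$ and $(P_1,P_0)$ and then chains (c), (b) and (a). The only difference is cosmetic: you spell out the identity $\sup_{0\le\psi\le1}(\E_Q\psi-\E_P\psi)=\sum_x(Q(x)-P(x))_+$, which the paper leaves implicit in ``because $0\le\psi\le1$''.
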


\begin{proof}
(a) The likelihood ratio of the products is $\prod_iL_i$ with $L_i=dP_i/dQ_i$, and under $\bigotimes Q_i$ the factors are independent, so $\E\prod_iL_i^2=\prod_i\E_{Q_i}L_i^2=\prod_i(1+\chi^2(P_i\|Q_i))$.
(b) $\TV(P\otimes R,Q\otimes R)=\frac12\sum_{x,y}|P(x)-Q(x)|R(y)=\TV(P,Q)$.
(c) $\E_P\psi+\E_Q(1-\psi)=1-(\E_Q\psi-\E_P\psi)\ge1-\TV(P,Q)$ because $0\le\psi\le1$. By Cauchy--Schwarz, $2\TV(P,Q)=\E_Q|dP/dQ-1|\le(\E_Q(dP/dQ-1)^2)^{1/2}=\chi^2(P\|Q)^{1/2}$.
(d) The left side is at least $\E_{P_0^{\otimes 2m}}\psi+\E_{P_1^{\otimes m}\otimes P_0^{\otimes m}}(1-\psi)$, which by (c) and (b) is at least $1-\TV(P_1^{\otimes m}\otimes P_0^{\otimes m},P_0^{\otimes m}\otimes P_0^{\otimes m})=1-\TV(P_1^{\otimes m},P_0^{\otimes m})$. Finally apply (c) and (a).
\end{proof}

\begin{remark}
Lemma~\ref{lem:tensor}(d) shows that a two-sample test is never better than a one-sample test that knows the null law. The lower bounds below therefore hold for the one-sample problem, for tests that know $(p,\delta)$, and for tests that use the vertex labels.
\end{remark}

\begin{lemma}[Per-edge second moment]\label{lem:secondmoment}
Let $\Lambda$ be a random element with law $\pi$ and $h:\operatorname{supp}\pi\to\mathbb R^{E_n}$ with $p+h_e(\lambda)\in[0,1]$. Let $P_h$ be the law of $A$ when, given $\Lambda=\lambda$, the $A_e$ are independent $\Bern(p+h_e(\lambda))$. Put $X_e=(A_e-p)/q$ and $L_\lambda(A)=\prod_{e\in E_n}(1+h_e(\lambda)X_e)$. Then $P_h\ll P_0$, $dP_h/dP_0=\E_\Lambda L_\Lambda$, and for all $\lambda,\lambda'$,
\[
\E_0\big[L_\lambda L_{\lambda'}\big]=\prod_{e\in E_n}\Big(1+\frac{h_e(\lambda)h_e(\lambda')}{q}\Big),\qquad
\E_0\Big(\frac{dP_h}{dP_0}\Big)^2=\E\prod_{e\in E_n}\Big(1+\frac{h_e(\Lambda)h_e(\Lambda')}{q}\Big),
\]
where $\Lambda'$ is an independent copy of $\Lambda$.
\end{lemma}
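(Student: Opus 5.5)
The plan is to compute the likelihood ratio of $P_h$ to $P_0$ directly, first conditionally on $\Lambda=\lambda$, and then obtain both second-moment identities from the independence of the edges under $P_0$.

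\emph{Conditional likelihood ratio.} Fix $\lambda$ and an edge $e$, and write $a\in\{0,1\}$ for its value. The ratio of the two Bernoulli masses is $(p+h_e(\lambda))/p$ for $a=1$ and $(1-p-h_e(\lambda))/(1-p)$ for $a=0$. Both cases equal $1+h_e(\lambda)(a-p)/q$: for $a=1$ the right side is $1+h_e(1-p)/q=1+h_e/p$, and for $a=0$ it is $1-h_e p/q=1-h_e/(1-p)$. Because the edges are independent given $\lambda$, taking the product over $e\in E_n$ shows that $L_\lambda$ is the likelihood ratio of the conditional law given $\lambda$ with respect to $P_0$.

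\emph{Averaging over $\Lambda$.} Averaging over $\Lambda$ on the finite sample space $\{0,1\}^{E_n}$ gives $P_h(A)=\E_\Lambda[L_\Lambda(A)]\,P_0(A)$. Since $p\in(0,1)$, $P_0$ has full support, and therefore $P_h\ll P_0$ with $dP_h/dP_0=\E_\Lambda L_\Lambda$. If $\pi$ is not finitely supported, the same computation goes through by Fubini, because each $L_\lambda$ is a nonnegative bounded function on a finite set.

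\emph{First identity.} Under $P_0$ the variables $X_e$ are independent, with $\E_0X_e=0$ and $\E_0X_e^2=q/q^2=1/q$. Hence
\[
\E_0\big[(1+h_eX_e)(1+h'_eX_e)\big]=1+h_eh'_e/q,
\]
where $h_e=h_e(\lambda)$ and $h'_e=h_e(\lambda')$. Multiplying over the independent edges gives the first identity.

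\emph{Second identity.} Write $(\E_\Lambda L_\Lambda)^2=\E_{\Lambda,\Lambda'}L_\Lambda L_{\Lambda'}$ with $\Lambda'$ an independent copy of $\Lambda$. The integrand is nonnegative, so Tonelli allows us to exchange $\E_0$ with the expectation over $(\Lambda,\Lambda')$. Applying the first identity then yields the second.

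No step here is a real obstacle. The only points needing care are checking the single-edge algebra in both cases $a=0$ and $a=1$, and justifying the exchange of expectations, which is immediate from nonnegativity on a finite space.
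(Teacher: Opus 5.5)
Your proposal is correct and follows the paper's proof essentially step for step: the two-case check that each Bernoulli ratio equals $1+h_e(\lambda)X_e$, the product over conditionally independent edges, averaging over $\Lambda$, and the per-edge computation using $\E_0X_e=0$ and $\E_0X_e^2=1/q$. The only difference is in exchanging $\E_0$ with the expectation over $(\Lambda,\Lambda')$: you justify it by Tonelli via nonnegativity of $L_\lambda$, which also covers non-finitely-supported $\pi$, whereas the paper invokes Fubini on finite sums.
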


\begin{proof}
Let $a\in\{0,1\}$ and $x=(a-p)/q$. If $a=1$, then $(p+h)/p=1+h(1-p)/q=1+hx$, and if $a=0$, then $(1-p-h)/(1-p)=1-hp/q=1+hx$. Multiplying over edges gives $dP_{h}(\cdot\mid\Lambda=\lambda)/dP_0=L_\lambda$ and, averaging over $\lambda$, $dP_h/dP_0=\E_\Lambda L_\Lambda$. Under $P_0$ the $X_e$ are independent with $\E_0X_e=0$ and $\E_0X_e^2=q/q^2=1/q$, so $\E_0[(1+h_eX_e)(1+h'_eX_e)]=1+h_eh'_e/q$ and the product formula follows. The last identity follows from $\E_0(\E_\Lambda L_\Lambda)^2=\E_{\Lambda,\Lambda'}\E_0[L_\Lambda L_{\Lambda'}]$ by Fubini's theorem, since all quantities are finite sums.
\end{proof}

\begin{lemma}[Even-subgraph expansion]\label{lem:evenexpansion}
Let $G=(V,E)$ be a finite graph, $\omega$ uniform on $\{\pm1\}^V$ and $(x_e)_{e\in E}$ real. Then
\[
\E_\omega\prod_{\{i,j\}\in E}\big(1+x_{ij}\,\omega_i\omega_j\big)=\sum_{S\subseteq E,\ S\text{ even}}x^S .
\]
In particular $\E_\omega\prod_{i<j}(1+x\,\omega_i\omega_j)=\sum_{S\in\Evn}x^{|S|}$.
\end{lemma}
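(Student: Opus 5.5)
The plan is to expand the product and then average term by term. For each $S\subseteq E$, multiplying out $\prod_{\{i,j\}\in E}(1+x_{ij}\omega_i\omega_j)$ yields exactly one monomial $x^S\prod_{\{i,j\}\in S}\omega_i\omega_j$, where the edges of $S$ are those from which the second summand was chosen. This gives the finite identity
\[
\prod_{\{i,j\}\in E}\bigl(1+x_{ij}\,\omega_i\omega_j\bigr)=\sum_{S\subseteq E}x^S\prod_{\{i,j\}\in S}\omega_i\omega_j .
\]
Since the sum is finite, expectation commutes with it, so it suffices to compute $\E_\omega\prod_{\{i,j\}\in S}\omega_i\omega_j$ for each fixed $S$.

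For that expectation I would apply \eqref{eq:oddset}, with $V$ in place of $[n]$. Each vertex $i$ appears in the product $\prod_{\{i,j\}\in S}\omega_i\omega_j$ exactly $\deg_S(i)$ times. Because $\omega_i^2=1$, the product equals $\omega^{\Odd(S)}$. The coordinates of $\omega$ are independent, uniform and of mean zero, so $\E\,\omega^{B}=\prod_{i\in B}\E\omega_i$, which is $1$ if $B=\emptyset$ and $0$ otherwise. Hence the expectation equals $\ind[\Odd(S)=\emptyset]$, that is, $1$ exactly when $S$ is even. Substituting this back leaves $\sum_{S\text{ even}}x^S$, which is the claimed identity.

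The ``in particular'' statement is the special case $G=K_n$ with all $x_{ij}=x$. Then $x^S=x^{|S|}$, and the even subsets of $E(K_n)$ form $\Evn$ by definition, so the first formula becomes $\sum_{S\in\Evn}x^{|S|}$.

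I expect no real obstacle here. The only point that needs care is the bookkeeping $\prod_{\{i,j\}\in S}\omega_i\omega_j=\omega^{\Odd(S)}$, which uses only $\omega_i^2=1$. The expectation is taken over $\omega$ alone, with $x$ deterministic, so no integrability issues arise.
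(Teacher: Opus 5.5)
Your proposal is correct and is the same argument as the paper's proof. Both expand the product as $\sum_{S\subseteq E}x^S\omega^{\Odd(S)}$ and then use $\E\,\omega^{B}=\ind[B=\emptyset]$ from \eqref{eq:oddset}, and both obtain the ``in particular'' case by specializing to $K_n$ with constant weights.
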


\begin{proof}
Expanding the product, $\prod_{e\in E}(1+x_e\omega_i\omega_j)=\sum_{S\subseteq E}x^S\prod_{\{i,j\}\in S}\omega_i\omega_j=\sum_{S\subseteq E}x^S\omega^{\Odd(S)}$, and $\E\omega^{\Odd(S)}=\ind[S\text{ even}]$ by \eqref{eq:oddset}.
\end{proof}

\begin{lemma}[Veblen-type bound]\label{lem:veblen}
Let $G=(V,E)$ be a finite graph and $x_e\ge0$. Then
\[
\sum_{S\subseteq E,\ S\text{ even}}x^S\;\le\;\prod_{C\in\Cyc(G)}\big(1+x^C\big)\;\le\;\exp\Big(\sum_{C\in\Cyc(G)}x^C\Big).
\]
For $G=K_n$ and $x_e\equiv x=t/n\ge0$ this reads $\sum_{S\in\Evn}x^{|S|}\le\exp(g_n(t))$ with $g_n(t)=\sum_{k=3}^n\frac{\ff nk}{2k\,n^k}t^k$.
\end{lemma}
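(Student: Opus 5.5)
The plan is to use Veblen's theorem: a finite edge set $S$ is even if and only if it is an edge-disjoint union of cycles. Fix for every even $S$ one such decomposition $S=C_1\sqcup\dots\sqcup C_r$ into distinct cycles of $G$. This defines a map from even sets to finite sets of distinct cycles $\{C_1,\dots,C_r\}\subseteq\Cyc(G)$. The empty set goes to the empty collection.

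The map is injective, because $S$ is recovered as the union of its cycles. Since the cycles are edge-disjoint and $x_e\ge0$, we have $x^S=\prod_{i}x^{C_i}$. Hence
\[
\sum_{S\text{ even}}x^S=\sum_{S\text{ even}}\prod_{C\in\mathcal D(S)}x^C\le\sum_{\mathcal D\subseteq\Cyc(G)}\prod_{C\in\mathcal D}x^C=\prod_{C\in\Cyc(G)}\bigl(1+x^C\bigr).
\]
Here $\mathcal D(S)$ is the chosen decomposition. The inequality holds because the image of the map is a subset of all subcollections of $\Cyc(G)$, and every term is nonnegative. The second inequality follows termwise from $1+y\le e^y$ for $y\ge0$.

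For $G=K_n$ with $x_e\equiv x=t/n$, we count cycles of each length. The number of $k$-cycles in $K_n$, for $3\le k\le n$, is $\ff nk/(2k)$: choose an ordered sequence of $k$ distinct vertices, then divide by the $k$ rotations and the $2$ reflections. So
\[
\sum_{C}x^C=\sum_{k=3}^n\frac{\ff nk}{2k}\Bigl(\frac tn\Bigr)^k=g_n(t),
\]
which gives the stated bound.

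The only step needing care is Veblen's theorem itself. If $S$ is nonempty and even, every vertex of $(V(S),S)$ has degree at least $2$, so this graph contains a cycle $C$. Removing $C$ changes every degree by $0$ or $2$, so $S\setminus C$ is again even. Induction on $|S|$ then produces the decomposition.

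Because the decomposition need not be unique, we fix one choice per $S$. Only injectivity of the map is used, so the non-uniqueness causes no problem, and it is exactly the reason the first inequality is not an equality.
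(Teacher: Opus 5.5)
Your proof is correct and follows the paper's argument essentially step for step: fix one Veblen decomposition per even set, use injectivity and nonnegativity to dominate the sum by the sum over all subcollections of $\Cyc(G)$, apply $1+y\le e^y$, and count the $\ff nk/(2k)$ cycles of length $k$ in $K_n$. One correction to your closing aside: non-uniqueness of decompositions is not the only source of slack, because the product also counts subcollections whose cycles share edges. For example, in $K_4$ every nonempty even set is a single cycle, yet $1+4x^3+3x^4<(1+x^3)^4(1+x^4)^3$ for $x>0$. This does not affect the proof.
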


\begin{proof}
Every even edge set is a union of pairwise edge-disjoint cycles (Veblen's theorem). To see this, let $S\ne\emptyset$ be even. The graph $(V(S),S)$ has minimum degree at least two and so contains a cycle $C$, and $S\setminus C$ is again even, so induction on $|S|$ applies. Fix for each even $S$ one such decomposition $D(S)\subseteq\Cyc(G)$. The cycles in $D(S)$ are distinct and pairwise edge-disjoint, so $x^S=\prod_{C\in D(S)}x^C$, and $S\mapsto D(S)$ is injective because $S=\bigcup_{C\in D(S)}C$. Since all terms are non-negative,
$\sum_{S\text{ even}}x^S=\sum_{S}\prod_{C\in D(S)}x^C\le\sum_{\mathcal D\subseteq\Cyc(G)}\prod_{C\in\mathcal D}x^C=\prod_{C\in\Cyc(G)}(1+x^C)$, and $1+y\le e^y$. In $K_n$ there are $\ff nk/(2k)$ cycles of length $k$ (sequences of $k$ distinct vertices up to the $k$ rotations and $2$ reflections), which gives $\sum_{C}x^{|C|}=\sum_{k=3}^n\frac{\ff nk}{2k}\frac{t^k}{n^k}=g_n(t)$.
\end{proof}

\begin{lemma}[The function $g$]\label{lem:gfunction}
For $0\le t<1$ and $n\ge3$, $g_n(t)\le g(t):=\sum_{k\ge3}\frac{t^k}{2k}=-\frac12\log(1-t)-\frac t2-\frac{t^2}4\le\frac{t^3}{6(1-t)}$.
\end{lemma}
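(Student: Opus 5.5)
The statement has three inequalities, and each comes from a direct comparison of power series. I will prove them in order.

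For the first inequality, $g_n(t)\le g(t)$, I compare the two series term by term. The finite sum runs only over $3\le k\le n$, and each of its coefficients satisfies $\ff nk/n^k=\prod_{j=0}^{k-1}(1-j/n)\in[0,1]$. Since $t\ge0$, every term of $g_n$ is at most the corresponding term $t^k/(2k)$ of $g$. The terms of $g$ with $k>n$ are non-negative, so dropping them only helps. Convergence for $t<1$ is clear by comparison with a geometric series.

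For the closed form, I use the expansion $-\log(1-t)=\sum_{k\ge1}t^k/k$, valid for $0\le t<1$. Halving it gives $-\tfrac12\log(1-t)=\sum_{k\ge1}t^k/(2k)$. Subtracting the $k=1$ term $t/2$ and the $k=2$ term $t^2/4$ leaves exactly $\sum_{k\ge3}t^k/(2k)=g(t)$.

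For the final bound $g(t)\le t^3/(6(1-t))$, I use $1/(2k)\le 1/6$ for every $k\ge3$. This gives
\[
g(t)\le\tfrac16\sum_{k\ge3}t^k=\tfrac{t^3}{6(1-t)}.
\]

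None of these steps is a real obstacle. The only point needing care is that all coefficients are non-negative and $t\ge0$, so the termwise comparisons are legitimate.
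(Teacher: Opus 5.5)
Your proof is correct and follows the paper's argument: the termwise bound $(n)_k\le n^k$ for $g_n\le g$, the series $-\log(1-t)=\sum_{k\ge1}t^k/k$ for the closed form, and $1/(2k)\le 1/6$ for $k\ge3$ summed as a geometric series for the final bound.
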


\begin{proof}
The bound $\ff nk\le n^k$ gives $g_n\le g$, the closed form follows from $-\log(1-t)=\sum_{k\ge1}t^k/k$, and the last inequality from $\sum_{k\ge3}t^k/(2k)\le\frac16\sum_{k\ge3}t^k$.
\end{proof}

\begin{lemma}[Curie--Weiss form and asymptotic exactness of the bound]\label{lem:curieweiss}
For $0\le x<1$ let $b=\artanh x$ and $M=\sum_{i=1}^n\omega_i$ with $\omega$ uniform on $\{\pm1\}^n$. Then
\[
\sum_{S\in\Evn}x^{|S|}=\cosh(b)^{-N}e^{-bn/2}\,\E\,e^{bM^2/2}.
\]
If $n\to\infty$ and $nx\to s\in[0,1)$, then $\sum_{S\in\Evn}x^{|S|}\to e^{g(s)}=e^{-s/2-s^2/4}(1-s)^{-1/2}$.
\end{lemma}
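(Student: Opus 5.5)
The plan is to turn the even-subgraph sum into a Curie--Weiss partition function by reversing \cref{lem:evenexpansion}, and then to pass to the limit with a Gaussian (Hubbard--Stratonovich) computation.

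\emph{Step 1: the exact identity.} Set $x=\tanh b$. Then
\[
1+x\,\omega_i\omega_j=\frac{\cosh b+\omega_i\omega_j\sinh b}{\cosh b}=\frac{e^{b\omega_i\omega_j}}{\cosh b},
\]
because $\omega_i\omega_j\in\{\pm1\}$. By \cref{lem:evenexpansion} applied to $K_n$,
\[
\sum_{S\in\Evn}x^{|S|}=\E_\omega\prod_{i<j}(1+x\,\omega_i\omega_j)=\cosh(b)^{-N}\,\E\exp\Bigl(b\sum_{i<j}\omega_i\omega_j\Bigr).
\]
Since $\sum_{i<j}\omega_i\omega_j=(M^2-n)/2$, this is exactly the stated formula.

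\emph{Step 2: the limit.} Take $nx\to s<1$. Then $b=x+O(x^3)$, so $nb\to s$. For the prefactor, $\log\cosh b=b^2/2+O(b^4)$, hence $N\log\cosh b=\tfrac{n^2}{2}\cdot\tfrac{b^2}{2}+o(1)\to s^2/4$. Also $e^{-bn/2}\to e^{-s/2}$. For the expectation, write $Z_n=M/\sqrt n$, so that $bM^2/2=(nb)Z_n^2/2$ with $nb\to s$. By the CLT, $Z_n\to N(0,1)$, and
\[
\E\, e^{sZ^2/2}=(1-s)^{-1/2}\qquad (s<1).
\]
Multiplying the three limits gives $e^{-s/2-s^2/4}(1-s)^{-1/2}=e^{g(s)}$, using the closed form of $g$ from \cref{lem:gfunction}.

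\emph{Main obstacle: uniform integrability.} The CLT alone does not justify exchanging limit and expectation, because $e^{bM^2/2}$ is unbounded. I would handle this in one of two ways.

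\begin{itemize}
\item Use the identity $e^{c M^2/2}=\E_g\, e^{\sqrt c\, gM}$ with $g\sim N(0,1)$. Independence of the $\omega_i$ gives
\[
\E\, e^{bM^2/2}=\E_g\cosh(\sqrt b\, g)^n.
\]
The elementary bound $\cosh y\le e^{y^2/2}$ gives $\cosh(\sqrt b g)^n\le e^{nb g^2/2}$. For $n$ large, $nb\le s'<1$, so this is dominated by the integrable function $e^{s'g^2/2}$. Pointwise, $\cosh(\sqrt b g)^n\to e^{sg^2/2}$. Dominated convergence then yields $\E_g e^{sg^2/2}=(1-s)^{-1/2}$.
\item Alternatively, Hoeffding's bound $\Prob(|Z_n|>u)\le 2e^{-u^2/2}$ shows that $\sup_n\E e^{s'' Z_n^2/2}<\infty$ for $s<s''<1$. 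This is the uniform integrability needed.
\end{itemize}

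I would present the Gaussian-linearization route, since it avoids the CLT entirely. The case $s=0$ is included: the limit is $1=e^{g(0)}$.
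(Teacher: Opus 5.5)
Your proof is correct, and Step 1 is exactly the paper's argument.

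For the limit, the paper follows your second bullet. It combines the CLT for $M/\sqrt n$ with uniform integrability obtained from Hoeffding's inequality, in the form $\Prob(e^{s'M^2/(2n)}\ge K)\le 2K^{-1/s'}$ with $s<s'<1$, citing Mossel--Neeman--Sly.

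Your preferred route linearizes the square instead: $\E\, e^{bM^2/2}=\E_Z\cosh(\sqrt b\,Z)^n$ with $Z\sim N(0,1)$. Better not to call this variable $g$, since $g$ already denotes the function of \cref{lem:gfunction}. You then conclude by dominated convergence with the majorant $e^{s'Z^2/2}$, which comes from $\cosh y\le e^{y^2/2}$. The pointwise limit follows from $n\log\cosh(\sqrt b\,Z)=nbZ^2/2+O(nb^2Z^4)$ and $nb^2\to0$.

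What each route buys:
\begin{itemize}
\item Your route is self-contained: no CLT, no conversion of a tail bound into uniform integrability, and no citation.
\item It also gives a non-asymptotic bound as a by-product, namely $\E\, e^{bM^2/2}\le(1-nb)^{-1/2}$ for every $n$ with $nb<1$. This yields an explicit finite-$n$ upper bound on $\sum_{S\in\Evn}x^{|S|}$.
\item The paper's version is shorter to write, given those two standard facts, and matches the form of the cited work.
\end{itemize}
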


\begin{proof}
For $y\in\{\pm1\}$, $1+xy=(\cosh b+y\sinh b)/\cosh b=e^{by}/\cosh b$. Hence $\prod_{i<j}(1+x\omega_i\omega_j)=\cosh(b)^{-N}\exp(b\sum_{i<j}\omega_i\omega_j)$ and $\sum_{i<j}\omega_i\omega_j=(M^2-n)/2$, so Lemma~\ref{lem:evenexpansion} gives the identity. As $nx\to s$, $b=x+O(x^3)$, so $bn\to s$, $bn/2\to s/2$ and $N\log\cosh b=Nb^2/2+O(Nb^4)\to s^2/4$. Finally $\E\exp(bM^2/2)=\E\exp\big(\tfrac{bn}{2}(M/\sqrt n)^2\big)\to(1-s)^{-1/2}$, because $M/\sqrt n\to N(0,1)$ in law, $\E e^{sZ^2/2}=(1-s)^{-1/2}$, and the family is uniformly integrable. For the last point take $s<s'<1$. Then $bn\le s'$ eventually, and Hoeffding's inequality gives $\Prob(e^{s'M^2/(2n)}\ge K)\le2K^{-1/s'}$ with $1/s'>1$ \citep[arXiv version, Lemma~5.5]{mossel2015reconstruction}.
\end{proof}

\begin{lemma}[Moments of signed sums under latent-variable models]\label{lem:latentvar}
In the setting of Lemma~\ref{lem:secondmoment}, write $\mu_e=h_e(\Lambda)$, $\xi_e=A_e-p-\mu_e$ and $v_e=(p+\mu_e)(1-p-\mu_e)$. Given $\Lambda$, the $\xi_e$ are independent, centred, $|\xi_e|\le1$ and $\E[\xi_e^2\mid\Lambda]=v_e$. Let $\mathcal S$ be a finite set of edge sets and $Y=\sum_{\varphi\in\mathcal S}(A-p)^{\varphi}$. For $R\subseteq E_n$ put $c_R=\sum_{\varphi\in\mathcal S,\ \varphi\supseteq R}\mu^{\varphi\setminus R}$. Then
\[
Y=\sum_{R}c_R\,\xi^R,\qquad \E[Y\mid\Lambda]=c_\emptyset=\sum_{\varphi\in\mathcal S}\mu^\varphi,\qquad \Var(Y\mid\Lambda)=\sum_{R\ne\emptyset}c_R^2\,v^R,
\]
and $\Var Y=\Var\big(\sum_{\varphi\in\mathcal S}\mu^\varphi\big)+\sum_{R\ne\emptyset}\E\big[c_R^2v^R\big]$.
\end{lemma}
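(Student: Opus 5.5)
The plan is to write $A_e-p=\mu_e+\xi_e$ for each edge and expand each monomial $(A-p)^\varphi$ multilinearly. Since the edges of $\varphi$ are distinct, $(A-p)^\varphi=\prod_{e\in\varphi}(\mu_e+\xi_e)=\sum_{R\subseteq\varphi}\mu^{\varphi\setminus R}\xi^R$. Summing over $\varphi\in\mathcal S$ and regrouping by $R$ gives $Y=\sum_R c_R\,\xi^R$ with $c_R=\sum_{\varphi\supseteq R}\mu^{\varphi\setminus R}$. This coefficient is a function of $\Lambda$ only.

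Before that I will record the conditional facts about $\xi$. Given $\Lambda$, the $A_e$ are independent $\Bern(p+\mu_e)$, so the $\xi_e=A_e-(p+\mu_e)$ are independent and centred. Their conditional variance is $v_e=(p+\mu_e)(1-p-\mu_e)$. Since $A_e\in\{0,1\}$ and $p+\mu_e\in[0,1]$, we get $|\xi_e|\le1$.

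Next I condition on $\Lambda$. By independence and centring, $\E[\xi^R\mid\Lambda]=\ind[R=\emptyset]$, which gives $\E[Y\mid\Lambda]=c_\emptyset=\sum_{\varphi}\mu^\varphi$. For the variance I take $R\neq R'$ and pick an edge $e$ in their symmetric difference. It appears exactly once in $\xi^R\xi^{R'}$, so independence forces $\E[\xi^R\xi^{R'}\mid\Lambda]=0$. When $R=R'$, the same product equals $\prod_{e\in R}\E[\xi_e^2\mid\Lambda]=v^R$. Hence the $\xi^R$ are conditionally orthogonal, and $\Var(Y\mid\Lambda)=\sum_{R\ne\emptyset}c_R^2v^R$.

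Finally I apply the law of total variance: $\Var Y=\Var(\E[Y\mid\Lambda])+\E\,\Var(Y\mid\Lambda)$. This yields the stated formula, since all sums are finite and everything is bounded, so interchanging sums and expectations is harmless.

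There is no real obstacle. The one point needing care is the orthogonality step, where one must use that each $\varphi$ is a set, so edges carry no repeated factors. That makes $\xi^R$ multilinear, and the expansion is a finite Hoeffding/Fourier decomposition.
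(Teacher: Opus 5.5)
Your proof is correct and follows essentially the same route as the paper: the multilinear expansion $\prod_{e\in\varphi}(\mu_e+\xi_e)=\sum_{R\subseteq\varphi}\mu^{\varphi\setminus R}\xi^R$ regrouped by $R$, then the conditional orthogonality $\E[\xi^R\xi^{R'}\mid\Lambda]=\ind[R=R']\,v^R$, then the law of total variance. Your symmetric-difference argument for the orthogonality spells out what the paper states in one line as following from independence and centring.
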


\begin{proof}
$(A-p)^\varphi=\prod_{e\in\varphi}(\mu_e+\xi_e)=\sum_{R\subseteq\varphi}\mu^{\varphi\setminus R}\xi^R$, and collecting terms by $R$ gives $Y=\sum_Rc_R\xi^R$. Given $\Lambda$, $\E[\xi^R\xi^{R'}\mid\Lambda]=\ind[R=R']\,v^R$ by independence and centring, which gives the conditional mean and variance. The last identity is the law of total variance.
\end{proof}

\subsection{Theorem~\ref{thm:price}: sample complexity under a constant null}\label{app:price}

\begin{theorem*}[Restatement of Theorem~\ref{thm:price}]
Let $n\ge3$, $m\ge1$, $p\in(0,1)$, $0<\delta\le\min(p,1-p)$, $L=dP_\delta/dP_0$, $c_{\alpha\beta}=\log(1+4(1-\alpha-\beta)^2)$ for $\alpha+\beta<1$.
\begin{itemize}
\item[(i)] $1+\chi^2(P_\delta^{\otimes m}\|P_0^{\otimes m})=(\E_0L^2)^m$ with $\E_0L^2=\sum_{S\in\Evn}\eta^{|S|}\le e^{g_n(t)}$. For $t<1$, $g_n(t)\le g(t)\le t^3/(6(1-t))$ and $\E_0L^2\to e^{g(t)}$ as $n\to\infty$. Every test of $H_0:P=Q$ against $H_1\ni(P_\delta,P_0)$ with $m$ graphs per group has $\sup_{H_0}\E\psi+\sup_{H_1}\E(1-\psi)\ge1-\frac12(e^{mg_n(t)}-1)^{1/2}$; errors $(\alpha,\beta)$ require $m\ge c_{\alpha\beta}/g_n(t)$, and for $t<1$, $m\ge c_{\alpha\beta}/g(t)\ge6(1-t)c_{\alpha\beta}/t^3$.
\item[(ii)] The moments of $T_p$ are as in Lemma~\ref{lem:trianglemoments}. The test $\psi_3=\ind\{|D_3|\ge\lambda_3\}$, $D_3=\sum_{\ell\le m}T_p(G_\ell)-\sum_{\ell\le m}T_p(G'_\ell)$, $\lambda_3=(2m\binom n3q^3/\alpha)^{1/2}$, has size $\le\alpha$ under $P=Q=P_0$ and type II error $\le\beta$ under $(P_\delta,P_0)$ whenever $m\,s_n^2\ge\big(\sqrt{2/\alpha}+\sqrt{(2+\eps_n)/\beta}\big)^2$, where $s_n^2=\binom n3\eta^3=t^3\frac{(n-1)(n-2)}{6n^2}$ and $\eps_n$ is as in Lemma~\ref{lem:trianglemoments}.
\item[(iii)] $1+\chi^2(\Pal\|P_0^{\otimes m})=A_m^N\sum_{S\in\Evn}(B_m/A_m)^{|S|}\le\exp\big(\binom m2N\eta^2+g_n(u_m)\big)$, $\binom m2N\eta^2=\frac{m(m-1)(n-1)}{4n}t^2$, $u_m=nB_m/A_m=n\tanh(m\artanh\eta)\le\kappa_\eta mt$ with $\kappa_\eta=\artanh(\eta)/\eta\le(1-\eta^2)^{-1}$. If $\kappa_\eta mt<x_{\alpha\beta}$, where $x_{\alpha\beta}\in(0,1)$ solves $x^2/4+g(x)=c_{\alpha\beta}$, no test has errors $(\alpha,\beta)$. The test $\psi_Q=\ind\{|Q|\ge(\binom m2Nq^2/\alpha)^{1/2}\}$ ($m\ge2$) has size $\le\alpha$ under $P_0^{\otimes m}$ and type II error $\le\beta$ under $\Pal$ whenever $\binom m2N\eta^2\ge(\alpha^{-1/2}+(\rho_m/\beta)^{1/2})^2$, $\rho_m=1+(2(m-1)+(1-2p)^2/q)\,t/n$.
\item[(iv)] Let $K'=\alpha^{-1/2}+(2/\beta)^{1/2}$, $K_{\mathrm{un}}=7(\sqrt{2/\alpha}+\sqrt{3/\beta})^2$ and $n_0=\max\{21,\lceil1/q\rceil,\lceil2+6K'+1/(2q)\rceil\}$. For $n\ge n_0$ and $0<t\le1/2$,
\[
\frac{3c_{\alpha\beta}}{t^3}\le m^*_{\mathrm{un}}\le1+\frac{K_{\mathrm{un}}}{t^3},\qquad \frac{3x_{\alpha\beta}}{4t}\le m^*_{\mathrm{al}}\le2+\frac{3K'}{t},
\]
\[
\frac{3c_{\alpha\beta}}{(1+3K')\,t^2}\le\frac{m^*_{\mathrm{un}}}{m^*_{\mathrm{al}}}\le\frac{4(1/8+K_{\mathrm{un}})}{3x_{\alpha\beta}\,t^2}.
\]
\end{itemize}
\end{theorem*}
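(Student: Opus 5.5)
The plan is to treat the four parts separately, all resting on the preliminaries of \cref{app:prelim}.

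\textbf{Part (i).} Apply Lemma~\ref{lem:secondmoment} with $\Lambda=\sigma$ and $h_e(\sigma)=\delta\sigma_i\sigma_j$. This gives
\[
\E_0L^2=\E\prod_{i<j}(1+\eta\,\omega_i\omega_j),\qquad \omega=\sigma\circ\tau,
\]
and $\omega$ is again uniform on $\{\pm1\}^n$. Lemma~\ref{lem:evenexpansion} turns this into $\sum_{S\in\Evn}\eta^{|S|}$. Lemma~\ref{lem:veblen} with $x=\eta=t/n$ bounds it by $e^{g_n(t)}$, and Lemma~\ref{lem:gfunction} gives $g_n\le g\le t^3/(6(1-t))$. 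The convergence $\E_0L^2\to e^{g(t)}$ is Lemma~\ref{lem:curieweiss} with $s=t$. Tensorization, Lemma~\ref{lem:tensor}(a), gives the $m$-th power, and Lemma~\ref{lem:tensor}(d) gives the risk bound. Errors $(\alpha,\beta)$ force $1-\alpha-\beta\le\tfrac12(e^{mg_n}-1)^{1/2}$, which rearranges to $m\,g_n(t)\ge c_{\alpha\beta}$.

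\textbf{Part (ii).} I would use Chebyshev under both hypotheses, with the moments supplied by Lemma~\ref{lem:trianglemoments}, which I take to give the following. Under $P_0$, $\E T_p=0$ and $\Var T_p=\binom n3q^3$. Under $P_\delta$, Lemma~\ref{lem:latentvar} gives the mean $\binom n3\delta^3$, since $\prod\sigma_i\sigma_j$ over a triangle equals $1$, and a variance $\binom n3q^3(1+\eps_n)$.

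\emph{Size.} Since $\Var D_3=2m\binom n3q^3$, Chebyshev gives size at most $\alpha$.

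\emph{Power.} Here $\Var D_3\le m\binom n3q^3(2+\eps_n)$. It suffices that the mean $m\binom n3\delta^3$ exceed $\lambda_3+\big(\Var D_3/\beta\big)^{1/2}$. Divide by $(m\binom n3q^3)^{1/2}$ and use $\binom n3\eta^3=s_n^2$. This yields exactly $\sqrt m\,s_n\ge\sqrt{2/\alpha}+\sqrt{(2+\eps_n)/\beta}$.

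\textbf{Part (iii).} Under $\Pal$, conditionally on the shared $\sigma$ the likelihood ratio is $\prod_\ell L_\sigma(A^\ell)$. Lemma~\ref{lem:secondmoment} applied per edge then gives
\[
\E\prod_{i<j}(1+\eta\,\omega_i\omega_j)^m .
\]
Write $(1+\eta y)^m=A_m+B_my$ for $y=\pm1$, with $A_m,B_m=\tfrac12\big((1+\eta)^m\pm(1-\eta)^m\big)$. Lemma~\ref{lem:evenexpansion} then gives $A_m^N\sum_{S\in\Evn}(B_m/A_m)^{|S|}$, and note $B_m/A_m=\tanh(m\artanh\eta)$.

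Bound the sum by Lemma~\ref{lem:veblen} with $u_m=nB_m/A_m\le m\,n\artanh\eta=\kappa_\eta mt$. For the prefactor, $\log A_m$ equals $\log\cosh(m b)-m\log\cosh b$ plus $m\log(1-\eta^2)^{1/2}$-type terms, with $b=\artanh\eta$. I expect this to be $\le\binom m2\eta^2$, and verifying that inequality is the one delicate calculus step. I would try $\log\cosh x\le x^2/2$ against the exact factorization, expanding in $\eta$ and checking that coefficients have the right sign.

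Using $\binom m2N\eta^2\le(mt)^2/4$ and monotonicity of $x^2/4+g(x)$, the condition $\kappa_\eta mt<x_{\alpha\beta}$ gives $\log(1+\chi^2)<c_{\alpha\beta}$, and hence no such test. For $Q$, I would compute that $\E_0Q=0$ and $\Var_0Q=\binom m2Nq^2$. Under $\Pal$, $\E Q=\binom m2N\delta^2$, and the variance is computed by Lemma~\ref{lem:latentvar}, giving the factor $\rho_m$. Chebyshev then finishes as in (ii).

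\textbf{Part (iv).} This is bookkeeping. The lower bound on $m^*_{\mathrm{un}}$ comes from (i) with $g(t)\le t^3/(6(1-t))\le t^3/3$ for $t\le1/2$. The upper bound comes from (ii), using $n\ge21$ to bound $(n-1)(n-2)/(6n^2)\ge1/7$ and $\eps_n\le1$. For the aligned bounds, $\kappa_\eta\le4/3$ since $\eta\le t/n$ is small, which gives $m^*_{\mathrm{al}}\ge3x_{\alpha\beta}/(4t)$. The upper bound uses (iii) with $\rho_m\le2$, ensured by $n_0$. The ratio bounds follow by dividing.

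I expect the main obstacle to be the aligned prefactor bound $A_m^N\le e^{\binom m2N\eta^2}$ together with the variance constant $\rho_m$ for $Q$.
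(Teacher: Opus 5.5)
Your treatment of (i), (ii) and (iv), and the overall structure of (iii), match the paper's proof. The genuine gap is the prefactor bound $A_m\le\exp\bigl(\binom m2\eta^2\bigr)$ in (iii). You only conjecture it, and the route you sketch for it fails.

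With $b=\artanh\eta$ one has $(1-\eta^2)^{1/2}=1/\cosh b$. Hence $A_m=\cosh(mb)/\cosh(b)^m$ exactly, with no further terms. Suppose you bound $\log\cosh(mb)\le m^2b^2/2$ and keep $m\log\cosh b=-\tfrac m2\log(1-\eta^2)$ exactly. You then get
\[
\log A_m\le\tfrac{m^2}{2}\artanh(\eta)^2+\tfrac m2\log(1-\eta^2)=\tbinom m2\eta^2+\Bigl(\tfrac{m^2}{3}-\tfrac m4\Bigr)\eta^4+O(\eta^6).
\]
This is strictly larger than $\binom m2\eta^2$ for small $\eta$, for every $m\ge1$. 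The quadratic bound discards the term $-m^4b^4/12$ of $\log\cosh(mb)$, and that term is what makes the true $\eta^4$ coefficient, $\binom m4-\tfrac12\binom m2^2$, negative. Checking signs of the Taylor coefficients of $\binom m2\eta^2-\log A_m$ does not work either. For $m=3$, $\log A_3=\log(1+3\eta^2)$ has coefficients of alternating sign.

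The paper's fix is to compare coefficients of $A_m$ itself, a polynomial with non-negative coefficients, rather than of its logarithm. One has $A_m=\sum_{j\ge0}\binom m{2j}\eta^{2j}$ and $\exp\bigl(\binom m2\eta^2\bigr)=\sum_{j\ge0}\binom m2^j\eta^{2j}/j!$. Moreover $\binom m{2j}\le\binom m{2j}(2j-1)!!\le\binom{\binom m2}{j}\le\binom m2^j/j!$, because $\binom m{2j}(2j-1)!!$ counts the sets of $j$ pairwise disjoint pairs in $[m]$. With this, the rest of your (iii) goes through.

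Your other flagged point, $\rho_m$, is routine, but Lemma~\ref{lem:latentvar} is stated for a single graph. The direct route is to condition on $\sigma$, write $A^\ell_{ij}-p=\mu_{ij}+\xi^\ell_{ij}$, and split $Q$ edge by edge into $\binom m2\mu_{ij}^2+(m-1)\mu_{ij}\sum_\ell\xi^\ell_{ij}+\sum_{\ell<\ell'}\xi^\ell_{ij}\xi^{\ell'}_{ij}$. The conditional mean $\binom m2N\delta^2$ is non-random, and $\Var(Q\mid\sigma)=\sum_{i<j}\bigl[m(m-1)^2\delta^2v_{ij}+\binom m2v_{ij}^2\bigr]$. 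Then $\E v_{ij}=a\le q$ and $\E v_{ij}^2=a^2+b^2$ with $b^2=(1-2p)^2\eta q$ give exactly the factor $\rho_m$.
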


\paragraph{Proof of (i).}
We compute $\E_0L^2$ exactly, bound it by a product over cycles, and tensorize over the $m$ graphs. By Lemma~\ref{lem:secondmoment} with $\Lambda=\sigma$ and $h_{ij}=\delta\sigma_i\sigma_j$, $\E_0[L_\sigma L_\tau]=\prod_{i<j}(1+\eta\,\sigma_i\sigma_j\tau_i\tau_j)$ and $\E_0L^2=\E_{\sigma,\tau}\prod_{i<j}(1+\eta\,\omega_i\omega_j)$ with $\omega=\sigma\circ\tau$. For independent uniform $\sigma,\tau$, $\omega$ is uniform on $\{\pm1\}^n$, so Lemma~\ref{lem:evenexpansion} gives $\E_0L^2=\sum_{S\in\Evn}\eta^{|S|}$, and Lemma~\ref{lem:veblen} with $x=\eta=t/n$ gives $\E_0L^2\le e^{g_n(t)}$. Lemma~\ref{lem:tensor}(a) gives $1+\chi^2(P_\delta^{\otimes m}\|P_0^{\otimes m})=(\E_0L^2)^m\le e^{mg_n(t)}$. Lemma~\ref{lem:gfunction} gives the comparison with $g$, and Lemma~\ref{lem:curieweiss} (with $x=\eta$, $n\eta=t$) the limit $e^{g(t)}$. The error bound is Lemma~\ref{lem:tensor}(d) with $P_1=P_\delta$. If a test has errors $(\alpha,\beta)$, then $\alpha+\beta\ge1-\frac12(e^{mg_n(t)}-1)^{1/2}$, which rearranges to $e^{mg_n(t)}\ge1+4(1-\alpha-\beta)^2$, that is, $m\ge c_{\alpha\beta}/g_n(t)$. \qed

\begin{remark}[Exact evaluation]
Lemma~\ref{lem:curieweiss} gives $\E_0L^2$ exactly as a sum of $n+1$ terms. For the E11 settings ($n=300$, $t\in\{0.3,\dots,0.9\}$) the exact $\log\E_0L^2$, $g_n(t)$ and $g(t)$ agree to within $10\%$ (numerical check (ii)). The lower-bound curve $m_{\mathrm{LB}}(t)=c_{\alpha\beta}/g(t)$ with $\alpha=0.05$, $\beta=0.5$ is $101.6,\,17.4,\,4.58$ at $t=0.3,\,0.5,\,0.7$.
\end{remark}

\begin{lemma}[Signed triangles and signed cherries under the planted partition]\label{lem:trianglemoments}
Let $a=q-\delta^2$ and $b=\delta(1-2p)$. Then $\E_0T_p=0$, $\Var_0T_p=\binom n3q^3$, $\E[T_p\mid\sigma]=\binom n3\delta^3$ for every $\sigma$, and
\[
\Var_\delta T_p=\tbinom n3\big(a^3+b^3+3\delta^2a^2\big)+N(n-2)^2\delta^4a\;\le\;\tbinom n3q^3\,(1+\eps_n),
\]
\[
\eps_n:=|1-2p|^3\Big(\frac\eta q\Big)^{3/2}+3\eta+3(n-2)\eta^2\;\le\;|1-2p|^3\Big(\frac{t}{nq}\Big)^{3/2}+\frac{3t(1+t)}{n}.
\]
For the signed cherry count $S_2=\sum_{j}\sum_{\{i,k\}\not\ni j}(A_{ij}-p)(A_{jk}-p)$ (sum over paths $i-j-k$): $\E_0S_2=\E_\delta S_2=0$, $\E_0S_2^2=3\binom n3q^2$ and $\E_\delta S_2^2=N(n-2)\big[(n-2)\delta^4+a^2+2\delta^2(a+b)\big]$.
\end{lemma}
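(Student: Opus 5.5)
Every claim follows from Lemma~\ref{lem:latentvar} with $\Lambda=\sigma$, $\mu_e=\delta\sigma_i\sigma_j$, $\xi_e=A_e-p-\mu_e$ and $v_e=(p+\mu_e)(1-p-\mu_e)$, with $\mathcal S$ the set of triangles or of cherries in $K_n$. The null statements are the special case $\delta=0$. There $\mu\equiv0$ and $v\equiv q$, so $\E_0T_p=0$. The only $R$ with $c_R\neq0$ is the triangle itself, with $c_R=1$, which gives $\Var_0T_p=\binom n3q^3$. For cherries, distinct paths $i-j-k$ are distinct edge sets, and there are $3\binom n3$ of them, so $\E_0S_2^2=3\binom n3q^2$.

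\textbf{Triangle mean and variance.} For a triangle $ijk$, $\mu^{\varphi}=\delta^3(\sigma_i\sigma_j)(\sigma_j\sigma_k)(\sigma_i\sigma_k)=\delta^3$ deterministically. Hence $\E[T_p\mid\sigma]=\binom n3\delta^3$ for every $\sigma$, and the first term of the total variance vanishes. I will then classify the $R\neq\emptyset$ with $c_R\ne0$.
\begin{itemize}
\item $R=\varphi$, a whole triangle: $c_R=1$ and $\E v^R=\E\prod_e v_e$.
\item $R$ a $2$-path $\{ij,jk\}$: only one triangle contains it, so $c_R=\mu_{ik}$ and $c_R^2=\delta^2$.
\item $R$ a single edge $ij$: $c_R=\sum_{k}\mu_{ik}\mu_{jk}=\delta^2(n-2)\sigma_i\sigma_j$, so $c_R^2=\delta^4(n-2)^2$.
\end{itemize}
Write $v_e=q+\mu_e(1-2p)-\mu_e^2=a+b\,s_e$ with $s_e=\sigma_i\sigma_j$. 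Edge variables on a triangle multiply to $1$, and any two of them are independent uniform signs. This gives $\E\prod_{e\in\varphi}v_e=a^3+b^3$, since the odd terms vanish except $s_1s_2s_3=1$. On a $2$-path the two signs are independent, so $\E v_{ij}v_{jk}=a^2$, and $\E v_e=a$. Summing, with $3\binom n3$ two-paths and $N$ edges, gives
\[
\Var_\delta T_p=\binom n3(a^3+b^3+3\delta^2a^2)+N(n-2)^2\delta^4a.
\]
For the bound, use $a\le q$, $|b|^3=|1-2p|^3\delta^3$ and $\delta^3=q^3(\eta/q)^{3/2}$. Also $3\delta^2a^2\le3\eta q^3$. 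Finally $N(n-2)^2=3\binom n3(n-2)$, so the last term is at most $3\binom n3(n-2)\eta^2q^3$. The second form of $\eps_n$ follows from $\eta=t/n$ and $(n-2)\eta^2\le t^2/n$.

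\textbf{Cherries.} For a path $i-j-k$, $\mu^\varphi=\delta^2\sigma_i\sigma_k$, whose mean is zero, so $\E_\delta S_2=0$. I then need three pieces.
\begin{itemize}
\item \emph{Conditional mean.} $\Var\bigl(\sum_\varphi\mu^\varphi\bigr)=\delta^4\Var\bigl(\sum_{i<k}(n-2)\sigma_i\sigma_k\bigr)=N(n-2)^2\delta^4$.
\item \emph{$R$ an edge $ik$.} The paths containing it contribute $c_R=\delta\sum_{l}(\sigma_i\sigma_l+\sigma_k\sigma_l)$ over the other $n-2$ vertices. This needs $\E[c_R^2v_R]$, where $v_R=a+b\sigma_i\sigma_k$. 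Expanding, the diagonal terms give $2(n-2)\delta^2a$ and the cross terms $\sigma_i\sigma_l\sigma_k\sigma_l=\sigma_i\sigma_k$ pair with $b$ to give $2(n-2)\delta^2b$. In total this is $N\cdot 2(n-2)\delta^2(a+b)$.
\item \emph{$R$ a whole path.} $c_R=1$, with two independent signs, giving $a^2$ per path and $N(n-2)$ paths.
\end{itemize}
Summing gives $N(n-2)[(n-2)\delta^4+a^2+2\delta^2(a+b)]$.

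The main obstacle is this cherry edge term. The cross terms in $c_R^2v_R$ need care, and I must check that the count of contributing paths is exactly $2(n-2)$ per edge.
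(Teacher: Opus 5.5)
Your proposal is correct and follows the paper's proof essentially line by line. Both use the orthogonal expansion of Lemma~\ref{lem:latentvar} with the same classification of $R$ (triangle, $2$-path and edge for $T_p$; path and edge for $S_2$) and the same sign-averaging of $v_e=a+b\,\sigma_i\sigma_j$. The one thing to make explicit is that $a^3\le q^3$ and $a^2\le q^2$ use $0\le a\le q$, where $a\ge0$ holds because $\delta^2\le\min(p,1-p)^2\le q$.
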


\begin{proof}
We expand $T_p$ and $S_2$ in the orthogonal basis of Lemma~\ref{lem:latentvar} and add up the squared coefficients.

\emph{Signed triangles.} Apply Lemma~\ref{lem:latentvar} with $\Lambda=\sigma$, $\mu_{ij}=\delta\sigma_i\sigma_j$ and $\mathcal S$ the set of triangles. Since $(p+x)(1-p-x)=q+x(1-2p)-x^2$ and $\mu_e^2=\delta^2$, we have $v_{ij}=a+b\,\sigma_i\sigma_j$. By \eqref{eq:oddset}, $\mu^\Delta=\delta^3\sigma^{\Odd(\Delta)}=\delta^3$ for every triangle $\Delta$, so $c_\emptyset=\binom n3\delta^3$ is non-random. The non-empty $R$ contained in a triangle are of three kinds:
\begin{itemize}
\item the triangle itself, with $c_R=1$;
\item a path $\{ij,jk\}$, contained only in the triangle $ijk$, with $c_R=\mu_{ik}$ and $c_R^2=\delta^2$;
\item a single edge $ij$, contained in the $n-2$ triangles $ijk$, with $c_R=\sum_{k\ne i,j}\mu_{ik}\mu_{jk}=(n-2)\delta^2\sigma_i\sigma_j$ and $c_R^2=(n-2)^2\delta^4$.
\end{itemize}
By \eqref{eq:oddset}, $\E\prod_{e\in R}(a+b\sigma_i\sigma_j)=\sum_{S\subseteq R}a^{|R|-|S|}b^{|S|}\ind[S\text{ even}]$, which equals $a^3+b^3$ for a triangle, $a^2$ for a path and $a$ for an edge. There are $\binom n3$ triangles, $3\binom n3$ paths and $N$ edges, which gives the formula for $\Var_\delta T_p$. Setting $\delta=0$ gives $\Var_0T_p=\binom n3q^3$ and $\E_0T_p=0$. For the bound we use $0<a\le q$, $|b|^3=|1-2p|^3(\eta q)^{3/2}$, $3\delta^2a^2\le3\eta q^3$ and $N(n-2)^2\delta^4a\le\binom n3q^3\cdot3(n-2)\eta^2$, which holds because $N(n-2)^2=3(n-2)\binom n3$. Finally, $3(n-2)\eta^2\le3t^2/n$.

\emph{Signed cherries.} For $S_2$ take $\mathcal S$ the set of paths of length two. Now $\mu^\varphi=\delta^2\sigma_i\sigma_k$ for the path $i-j-k$, so $c_\emptyset=\delta^2(n-2)\sum_{i<k}\sigma_i\sigma_k$, with mean $0$ and second moment $\delta^4(n-2)^2N$. The non-empty $R$ are the paths themselves, with $c_R=1$ and $\E v^R=a^2$, and single edges $ij$. An edge $ij$ lies in the paths $ij,jk$ and $ij,ik$ ($k\ne i,j$), so $c_R=\delta(\sigma_i+\sigma_j)M'$ with $M'=\sum_{k\ne i,j}\sigma_k$. As $M'$ is independent of $(\sigma_i,\sigma_j)$, $\E M'^2=n-2$ and $\E[(\sigma_i+\sigma_j)^2(a+b\sigma_i\sigma_j)]=2(a+b)$, we get $\E[c_R^2v_{ij}]=2\delta^2(a+b)(n-2)$. Summing over the $3\binom n3=N(n-2)$ paths and the $N$ edges gives the claim, and $\delta=0$ gives $\E_0S_2^2$.
\end{proof}

\paragraph{Proof of (ii).}
Write $V_0=\binom n3q^3$ and $V_\delta=\Var_\delta T_p$. Under $P=Q=P_0$, $\E D_3=0$ and $\Var D_3=2mV_0$, so Chebyshev's inequality gives $\Prob(|D_3|\ge\lambda_3)\le2mV_0/\lambda_3^2=\alpha$. Under $(P_\delta,P_0)$, $\E D_3=m\binom n3\delta^3$ and $\Var D_3=m(V_\delta+V_0)$. If $\E D_3\ge\lambda_3+(m(V_\delta+V_0)/\beta)^{1/2}$, then $\Prob(|D_3|<\lambda_3)\le\Prob(|D_3-\E D_3|>\E D_3-\lambda_3)\le\beta$. Dividing by $(mV_0)^{1/2}$ and using $\binom n3\delta^3/V_0^{1/2}=s_n$ and $V_\delta/V_0\le1+\eps_n$ (Lemma~\ref{lem:trianglemoments}), this holds when $m^{1/2}s_n\ge\sqrt{2/\alpha}+\sqrt{(2+\eps_n)/\beta}$. \qed

\begin{remark}[Size of the constants]
$s_n^2=\binom n3\eta^3$ equals the $k=3$ term of $g_n(t)$ exactly, so the signed triangle count captures the leading term of the one-graph $\chi^2$ divergence. With a normal approximation (justified in Theorem~\ref{thm:trees}(ii), Remark~\ref{rem:trianglenormal}) the one-sided two-sample test has power $1-\Phi(z_\alpha-(ms_n^2/2)^{1/2})$, which reaches $1/2$ at $m\approx 2z_\alpha^2/s_n^2\approx12z_\alpha^2/t^3=32.5/t^3$ for $\alpha=0.05$. A cruder bound, $\binom n3(q+\delta)^3+N(n-2)(n-3)(q+\delta)\delta^4$, is also valid, since it exceeds the exact value by at least $N(n-2)q\delta(q-\delta^3)\ge0$.
\end{remark}

\begin{proposition}[Unknown $p$: the pooled plug-in statistic]\label{prop:plugin}
Let $\hat p$ be the edge density of the pooled sample of $2m$ graphs, $\hat D_3=\sum_\ell T_{\hat p}(G_\ell)-\sum_\ell T_{\hat p}(G'_\ell)$, and $\eps=\hat p-p$.
\begin{itemize}
\item[(a)] Under every null $P=Q$ (any law of random graphs), $\hat D_3$ has the same law as $-\hat D_3$; in particular $\E\hat D_3=0$.
\item[(b)] $T_{p+\eps}=T_p-\eps S_2+\eps^2(n-2)\sum_e(A_e-p)-\eps^3\binom n3$ identically, hence $\hat D_3-D_3=-\eps\,\Delta S_2+\eps^2(n-2)\,\Delta E$, where $\Delta S_2$ and $\Delta E$ are the between-group differences of $\sum_\ell S_2$ and $\sum_\ell\sum_e(A_e-p)$.
\item[(c)] Under $P=Q=P_0$ and under $(P_\delta,P_0)$: $\E\eps=0$, $\E\eps^2=q/(2mN)$, $\E(\Delta E)^2=2mNq$ and $\E(\Delta S_2)^2\le2m\Sigma_2$ with $\Sigma_2=\max(\E_0S_2^2,\E_\delta S_2^2)$. Consequently, for every $\gamma\in(0,1)$, with probability at least $1-\gamma$,
\[
|\hat D_3-D_3|\le\frac3\gamma\Big(\frac{q\,\Sigma_2}{N}\Big)^{1/2}+\frac{3\sqrt6}{2}\,\frac{(n-2)\,q^{3/2}}{\gamma^{3/2}(mN)^{1/2}} .
\]
\end{itemize}
In particular, if $nq\to\infty$ and $t$ is bounded, $|\hat D_3-D_3|=O_P(n^{1/2}q^{3/2})=o_P\big((mV_0)^{1/2}\big)$, and the test $\ind\{|\hat D_3|\ge(2m\binom n3\hat q^3/\alpha)^{1/2}\}$ with $\hat q=\hat p(1-\hat p)$ has asymptotic size at most $\alpha$ and the power guarantee of Theorem~\ref{thm:price}(ii) up to $o(1)$.
\end{proposition}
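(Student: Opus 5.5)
Part (a) is an exchangeability argument. Part (b) is an algebraic identity. Part (c) combines second-moment bounds with Markov/Chebyshev inequalities. The final asymptotic claim then follows by comparing the error to $(mV_0)^{1/2}$.

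\emph{(a).} Under $P=Q$ the $2m$ graphs are i.i.d. Swapping the two groups leaves the joint law unchanged. The pooled density $\hat p$ is symmetric in all graphs, so it is unchanged by the swap. Meanwhile $\hat D_3$ changes sign. Hence $\hat D_3\overset{d}{=}-\hat D_3$. Integrability holds because the graphs are finite, so $\E\hat D_3=0$.

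\emph{(b).} Write $A_e-p-\eps=(A_e-p)-\eps$ and expand the product over the three edges of each triangle:
\[
T_{p+\eps}=\sum_{\Delta}\prod_{e\in\Delta}\bigl((A_e-p)-\eps\bigr).
\]
The linear term in $\eps$ collects, for each triangle, the three pairs of its edges. These pairs are exactly the $2$-paths, each counted once, so the term is $-\eps S_2$. The quadratic term is $\eps^2\sum_e(A_e-p)$ times the number of triangles through $e$, namely $n-2$. The cubic term is $-\eps^3\binom n3$. Summing over graphs with signs, the constant $\eps^3$ term cancels because both groups have $m$ graphs, which gives the stated difference identity.

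\emph{(c).} Both under $P_0$ and under $P_\delta$ each $A_e$ is marginally $\Bern(p)$. Under $P_\delta$ we have $\Cov(A_e,A_f)=\delta^2\E[\sigma^{\Odd(\{e,f\})}]=0$ for $e\ne f$, so the edges are pairwise uncorrelated. Hence $\eps$, the average of $2mN$ uncorrelated centred terms of variance $q$, satisfies $\E\eps=0$ and $\E\eps^2=q/(2mN)$. The same argument gives $\E(\Delta E)^2=2mNq$.

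Since $\E S_2=0$ under both laws (Lemma~\ref{lem:trianglemoments}) and the graphs are independent, $\E(\Delta S_2)^2=\sum_\ell\E S_2(G_\ell)^2+\sum_\ell\E S_2(G'_\ell)^2\le2m\Sigma_2$.

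For the tail bound we split the error. Markov's inequality applied to $\eps^2$ and $(\Delta S_2)^2$, each at level $\gamma/3$ (so three events in total), together with $|\eps\Delta S_2|\le|\eps|\,|\Delta S_2|$, gives
\[
|\eps\,\Delta S_2|\le(3/\gamma)\bigl(\E\eps^2\,\E(\Delta S_2)^2\bigr)^{1/2}=(3/\gamma)(q\Sigma_2/N)^{1/2}.
\]
For the other term we use the events $\eps^2\le3\E\eps^2/\gamma$ and $|\Delta E|\le(3\E(\Delta E)^2/\gamma)^{1/2}$. Together these give $\eps^2(n-2)|\Delta E|\le(n-2)(3/\gamma)^{3/2}\cdot\frac{q}{2mN}(2mNq)^{1/2}$, which matches the stated constant $3\sqrt6/2$ after simplification. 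A union bound over the three events yields probability at least $1-\gamma$. The exact constants should be checked, since the event for $\eps$ is shared between the two terms.

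\emph{Asymptotics.} When $t$ is bounded, Lemma~\ref{lem:trianglemoments} gives $\E_\delta S_2^2\lesssim n^3q^2+n^2\delta^4n^2\lesssim n^3q^2$, using $\delta^2=tq/n$. Hence
\[
(q\Sigma_2/N)^{1/2}\lesssim(q\,n^3q^2/n^2)^{1/2}=n^{1/2}q^{3/2},
\]
and the second term is smaller, of order $q^{3/2}/m^{1/2}$. The main point is the comparison with $(mV_0)^{1/2}\asymp m^{1/2}n^{3/2}q^{3/2}$. This ratio is $O(1/n)$, so the error is negligible without even needing $nq\to\infty$ for this step. The condition $nq\to\infty$ enters in controlling $\hat q^3/q^3\to1$ in probability: we have $\eps=O_P((mN)^{-1/2}q^{1/2})$, and we need $\eps/q\to0$, which holds when $n^2q\to\infty$.

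Given these bounds, Slutsky-type arguments transfer the size and the Chebyshev power bound of Theorem~\ref{thm:price}(ii) to the plug-in test, with thresholds perturbed by $o(1)$ relative factors. The main obstacle is this last transfer: the size and power guarantees for (ii) come from Chebyshev's inequality rather than from a limit law, so the $o(1)$ loss has to be absorbed by slightly inflating $\alpha$, i.e.\ the conclusion is "asymptotic size at most $\alpha$" in the limsup sense.
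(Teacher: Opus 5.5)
Your proposal is correct and follows the paper's proof essentially step for step. For (a) you use the group-swap symmetry, for (b) the triangle expansion with each $2$-path in exactly one triangle and each edge in $n-2$ triangles, and for (c) the paper's three Chebyshev events $|\eps|\le a$, $|\Delta S_2|\le b$, $|\Delta E|\le c$ at level $\gamma/3$ each. So your worry about the shared $\eps$-event is unfounded: reusing it in both terms is exactly what the paper does, and the constants $3/\gamma$ and $3\sqrt6/2$ come out exactly. The remaining differences are cosmetic. You verify pairwise uncorrelatedness of the edges under $P_\delta$ directly, where the paper cites Theorem~\ref{thm:trees}(i). You also leave the final transfer to the plug-in test as a sketch; the paper makes it precise by a sandwich with thresholds $(1\pm\zeta)\lambda_3$, applying Chebyshev at the perturbed threshold and then letting $\zeta\downarrow0$.
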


\begin{proof}
(a) Under $P=Q$ the $2m$ graphs are i.i.d. Exchanging the two groups leaves $\hat p$ unchanged and maps $\hat D_3$ to $-\hat D_3$.

(b) For a triangle with centred entries $x_1,x_2,x_3$, $\prod_i(x_i-\eps)=x_1x_2x_3-\eps(x_1x_2+x_1x_3+x_2x_3)+\eps^2(x_1+x_2+x_3)-\eps^3$. Summing over triangles, every path of length two lies in exactly one triangle and every edge in $n-2$ triangles. The $\eps^3$ terms cancel between the groups because both have $m$ graphs.

(c) We bound the random quantities $\eps$, $\Delta S_2$ and $\Delta E$ of (b) and then compare the error with the threshold $\lambda_3$.

\emph{Moments.} By Theorem~\ref{thm:trees}(i), applied to the planted partition as $G(n,W+H)$ with $W\equiv p$ and $H=\delta\,s\otimes s$ of zero degree, the edge count of one graph has mean $Np$ and variance $Nq$ under both $P_0$ and $P_\delta$. The $2m$ graphs are independent, which gives the moments of $\eps$ and $\Delta E$. Since $\E_0S_2=\E_\delta S_2=0$, $\E(\Delta S_2)^2$ is the sum of the $2m$ second moments.

\emph{Concentration.} Chebyshev's inequality with thresholds $a=(3q/(2mN\gamma))^{1/2}$, $b=(6m\Sigma_2/\gamma)^{1/2}$, $c=(6mNq/\gamma)^{1/2}$ for $|\eps|,|\Delta S_2|,|\Delta E|$ shows that all three bounds hold with probability at least $1-\gamma$. On this event $|\hat D_3-D_3|\le ab+(n-2)a^2c$, which is the display.

\emph{Order of the error.} By Lemma~\ref{lem:trianglemoments}, $\Sigma_2\le N(n-2)q^2\big(1+(n-2)\eta^2+2\eta(1+\delta/q)\big)$, so the first term is $O(n^{1/2}q^{3/2})$ when $t$ is bounded, and the second is $O(q^{3/2}m^{-1/2})$. Since $\lambda_3=(2mV_0/\alpha)^{1/2}\asymp m^{1/2}n^{3/2}q^{3/2}$, we get $|\hat D_3-D_3|/\lambda_3\to0$ in probability.

\emph{Size and power.} When $nq\to\infty$, $\hat q/q\to1$ in probability because $\E\eps^2=q/(2mN)$, so $\hat\lambda_3:=(2m\binom n3\hat q^3/\alpha)^{1/2}$ satisfies $\hat\lambda_3/\lambda_3\to1$. Fix $\zeta\in(0,1)$. On the event $\{|\hat D_3|\ge\hat\lambda_3\}\cap\{\hat\lambda_3>(1-\zeta/2)\lambda_3\}\cap\{|\hat D_3-D_3|<\zeta\lambda_3/2\}$ we have $|D_3|>(1-\zeta)\lambda_3$. Hence under $H_0$, $\Prob(|\hat D_3|\ge\hat\lambda_3)\le\alpha(1-\zeta)^{-2}+o(1)$ by Chebyshev, and letting $\zeta\downarrow0$ gives asymptotic size at most $\alpha$. The power statement follows in the same way from the proof of (ii), with $\lambda_3$ replaced by $(1+\zeta)\lambda_3$.
\end{proof}

\begin{remark}[Exact validity for the composite Erd\H{o}s--R\'enyi null]
Under $H_0:P=Q=G(n,p)$ with $p$ unknown, the total number of edges $E_{\mathrm{tot}}$ of the $2m$ graphs is sufficient, and conditionally on $E_{\mathrm{tot}}=k$ the $2m$ adjacency vectors are uniform over the $\binom{2mN}{k}$ configurations. A Monte Carlo test that compares $\hat D_3$ with its law under uniform redistribution of the $k$ edges is therefore exactly valid for every $p$. We do not use this test elsewhere.
\end{remark}

\begin{lemma}[Aligned second moment]\label{lem:aligned}
Let $\Pal$ be the law of $(G_1,\dots,G_m)$ when one $\sigma$ is drawn and the $G_\ell$ are i.i.d.\ planted-partition graphs given $\sigma$. Then, with $A_m,B_m$ as in Theorem~\ref{thm:price}(iii),
\[
1+\chi^2\big(\Pal\,\|\,P_0^{\otimes m}\big)=\E_\omega\prod_{i<j}(1+\eta\,\omega_i\omega_j)^m=A_m^N\sum_{S\in\Evn}\Big(\frac{B_m}{A_m}\Big)^{|S|}.
\]
Moreover $1\le A_m\le\exp\big(\binom m2\eta^2\big)$ and $B_m/A_m=\tanh(m\artanh\eta)\le m\artanh\eta$.
\end{lemma}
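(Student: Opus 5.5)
We start from Lemma~\ref{lem:secondmoment} applied to the whole aligned sample. Take $\Lambda=\sigma$ and view the $m$ graphs as one edge-indexed family on $m$ copies of $E_n$, with $h_{(\ell,ij)}=\delta\sigma_i\sigma_j$ for every layer $\ell$. Given $\sigma$, all $mN$ edge variables are independent, so the lemma applies verbatim. It gives
\[
\E_0\bigl[L_\sigma L_\tau\bigr]=\prod_{i<j}\bigl(1+\eta\,\sigma_i\sigma_j\tau_i\tau_j\bigr)^m .
\]
Averaging over independent uniform $\sigma,\tau$ and writing $\omega=\sigma\circ\tau$, which is again uniform, gives the first equality.

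For the second equality, put $y=\omega_i\omega_j\in\{\pm1\}$ and split $(1+\eta y)^m$ into even and odd parts in $y$:
\[
(1+\eta y)^m=A_m+B_m\,y,\qquad A_m=\tfrac12\bigl((1+\eta)^m+(1-\eta)^m\bigr),\qquad B_m=\tfrac12\bigl((1+\eta)^m-(1-\eta)^m\bigr).
\]
These should be the $A_m,B_m$ of Theorem~\ref{thm:price}(iii). Then
\[
\prod_{i<j}(A_m+B_m\omega_i\omega_j)=A_m^N\prod_{i<j}\bigl(1+(B_m/A_m)\,\omega_i\omega_j\bigr),
\]
and Lemma~\ref{lem:evenexpansion} with $x\equiv B_m/A_m$ yields $A_m^N\sum_{S\in\Evn}(B_m/A_m)^{|S|}$.

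For the bounds, write $\eta=\tanh b$ with $b=\artanh\eta$, which is finite because $\eta<1$ is implied by $\delta\le\min(p,1-p)$ unless $p=1/2$ and $\delta=1/2$; that edge case needs a separate check. As in the proof of Lemma~\ref{lem:curieweiss}, $1\pm\eta=e^{\pm b}/\cosh b$. Hence
\[
A_m=\frac{\cosh(mb)}{\cosh(b)^m},\qquad B_m=\frac{\sinh(mb)}{\cosh(b)^m},\qquad \frac{B_m}{A_m}=\tanh(m\,\artanh\eta)\le m\,\artanh\eta,
\]
where the last step uses $\tanh u\le u$ for $u\ge0$. The bound $A_m\ge1$ holds because $A_m=\sum_{k\text{ even}}\binom mk\eta^k\ge1$.

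The main obstacle is the upper bound $A_m\le\exp\bigl(\binom m2\eta^2\bigr)$. We would bound the series termwise: $\binom m{2j}\le\binom m2^j/j!$, so
\[
A_m=\sum_j\binom m{2j}\eta^{2j}\le\sum_j\frac{\bigl(\binom m2\eta^2\bigr)^j}{j!}=e^{\binom m2\eta^2}.
\]
The inequality $\binom m{2j}\le\binom m2^j/j!$ follows by choosing the $2j$ elements as $j$ unordered disjoint pairs: there are $\binom m{2j}(2j)!/(2^jj!)$ such choices, which is at most $\binom m2^j/j!$, and $(2j)!/2^j\ge1$. With this, the lemma follows, and it feeds directly into Theorem~\ref{thm:price}(iii) through Lemma~\ref{lem:veblen} applied with $x=B_m/A_m$.
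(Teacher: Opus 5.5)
Your proof is correct and is essentially the paper's argument: the same factorization of $\E_0[L_\sigma L_\tau]$ via Lemma~\ref{lem:secondmoment}, the even/odd split $(1+\eta y)^m=A_m+B_my$ followed by Lemma~\ref{lem:evenexpansion}, the substitution $\eta=\tanh b$ for the ratio, and the count of sets of $j$ disjoint pairs for the bound $A_m\le\exp(\binom m2\eta^2)$.

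There is one slip in that last count. From $\binom m{2j}(2j)!/(2^jj!)\le\binom m2^j/j!$, the factor you need to be at least $1$ is $(2j)!/(2^jj!)=(2j-1)!!$, not $(2j)!/2^j$. Using only $(2j)!/2^j\ge1$ yields $\binom m{2j}\le\binom m2^j$, which gives a geometric rather than an exponential bound.

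The edge case $\eta=1$ that you flag is harmless. The termwise bound never uses $\eta<1$, and in that case $B_m/A_m=1=\tanh(+\infty)$.
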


\begin{proof}
The likelihood ratio is $L^{\mathrm{al}}=\E_\sigma\prod_{\ell=1}^mL_\sigma(G_\ell)$ with $L_\sigma$ as in Lemma~\ref{lem:secondmoment}, so by independence of the $G_\ell$ under $P_0^{\otimes m}$ and Lemma~\ref{lem:secondmoment},
$\E_0(L^{\mathrm{al}})^2=\E_{\sigma,\tau}\prod_\ell\E_0[L_\sigma(G_\ell)L_\tau(G_\ell)]=\E_{\omega}\prod_{i<j}(1+\eta\omega_i\omega_j)^m$ with $\omega=\sigma\circ\tau$ uniform. For $y\in\{\pm1\}$, $(1+\eta y)^m=A_m+B_my=A_m(1+(B_m/A_m)y)$, and Lemma~\ref{lem:evenexpansion} with $x=B_m/A_m$ gives the identity.

For the bounds, $A_m=\sum_{j\ge0}\binom m{2j}\eta^{2j}$, and $\binom m{2j}\le\binom m2^j/j!$ because $\binom m{2j}(2j-1)!!$ is the number of sets of $j$ pairwise disjoint pairs in $[m]$, which is at most $\binom{\binom m2}{j}\le\binom m2^j/j!$. Hence $A_m\le\exp(\binom m2\eta^2)$. With $\eta=\tanh a$, $(1\pm\eta)^m=(1-\eta^2)^{m/2}e^{\pm ma}$, so $B_m/A_m=\tanh(ma)\le ma$.
\end{proof}

\begin{lemma}[Moments of the aligned statistic]\label{lem:Qmoments}
For $m\ge2$, $\E_0Q=0$, $\Var_0Q=\binom m2Nq^2$, $\E_{\mathrm{al}}Q=\binom m2N\delta^2$, and $\Var_{\mathrm{al}}Q=N\big[m(m-1)^2\delta^2a+\binom m2(a^2+b^2)\big]$ with $a,b$ as in Lemma~\ref{lem:trianglemoments}. In particular $\Var_{\mathrm{al}}Q\le\rho_m\Var_0Q$ with $\rho_m=1+(2(m-1)+(1-2p)^2/q)\,t/n$.
\end{lemma}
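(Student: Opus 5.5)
The plan is to imitate the orthogonal decomposition of Lemma~\ref{lem:latentvar}, but across the $m$ aligned graphs. Conditionally on the shared $\sigma$, write $A^\ell_e-p=\mu_e+\xi^\ell_e$ with $\mu_{ij}=\delta\sigma_i\sigma_j$. Here the $\xi^\ell_e$ are independent over $(\ell,e)$ given $\sigma$, centred, with conditional variance $v_e=a+b\,\sigma_i\sigma_j$, exactly as in the proof of Lemma~\ref{lem:trianglemoments}. Expanding each product and using $\mu_e^2=\delta^2$ gives the identity
\[
Q=\tbinom m2N\delta^2+(m-1)\sum_{e}\mu_e\sum_{\ell=1}^m\xi^\ell_e+\sum_e\sum_{\ell<\ell'}\xi^\ell_e\xi^{\ell'}_e .
\]
The first term is non-random, so $\E_{\mathrm{al}}Q=\binom m2N\delta^2$. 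By the law of total variance, $\Var_{\mathrm{al}}Q$ equals the expectation of the conditional variance given $\sigma$.

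For the conditional variance, the summands $\xi^\ell_e$ and $\xi^\ell_e\xi^{\ell'}_e$ are pairwise uncorrelated given $\sigma$. This holds both across edges and between the linear and the bilinear part, by independence and centring. The linear part therefore contributes $(m-1)^2\delta^2\sum_e m\,v_e$, and the bilinear part contributes $\binom m2\sum_e v_e^2$. Averaging over $\sigma$ with \eqref{eq:oddset} gives $\E v_e=a$ and $\E v_e^2=\E(a+b\sigma_i\sigma_j)^2=a^2+b^2$. This yields $\Var_{\mathrm{al}}Q=N[m(m-1)^2\delta^2a+\binom m2(a^2+b^2)]$. Setting $\delta=0$ (so $a=q$, $b=0$) gives $\E_0Q=0$ and $\Var_0Q=\binom m2Nq^2$.

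For the ratio, divide by $\binom m2Nq^2$ and use $0<a\le q$. This gives
\[
\frac{\Var_{\mathrm{al}}Q}{\Var_0Q}\le1+\frac{2(m-1)\delta^2}{q}+\frac{b^2}{q^2}.
\]
Substituting $\delta^2/q=\eta=t/n$ and $b^2/q^2=(1-2p)^2\eta/q=(1-2p)^2t/(nq)$ gives exactly $\rho_m$.

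Nothing here is deep. The only step needing care is the orthogonality bookkeeping: checking that the cross terms between the linear and bilinear parts vanish conditionally on $\sigma$, and that the factor from $(m-1)\sum_\ell\xi^\ell_e$ is $(m-1)^2m$ rather than $\binom m2$-type.
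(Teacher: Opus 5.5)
Your proposal is correct and follows the paper's proof essentially step for step. You use the same conditional decomposition of $Q$ given $\sigma$ into the constant $\binom m2N\delta^2$, the linear part $(m-1)\sum_e\mu_e\sum_\ell\xi^\ell_e$ and the bilinear part, the same conditional orthogonality giving $\Var(Q\mid\sigma)=\sum_e\bigl[(m-1)^2m\delta^2v_e+\binom m2v_e^2\bigr]$, the same averages $\E v_e=a$ and $\E v_e^2=a^2+b^2$, and the same ratio bound from $a\le q$ and $b^2=(1-2p)^2\eta q$.
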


\begin{proof}
Given $\sigma$, $A^\ell_{ij}-p=\mu_{ij}+\xi^\ell_{ij}$ with $\mu_{ij}=\delta\sigma_i\sigma_j$ and the $\xi^\ell_{ij}$ independent, centred, with variance $v_{ij}=a+b\sigma_i\sigma_j$. Then $Q=\sum_{i<j}\big[\binom m2\mu_{ij}^2+(m-1)\mu_{ij}\sum_\ell\xi^\ell_{ij}+\sum_{\ell<\ell'}\xi^\ell_{ij}\xi^{\ell'}_{ij}\big]$. The three kinds of terms are conditionally uncorrelated and different pairs are independent, so $\E[Q\mid\sigma]=\binom m2N\delta^2$ and $\Var(Q\mid\sigma)=\sum_{i<j}[(m-1)^2m\delta^2v_{ij}+\binom m2v_{ij}^2]$. Taking expectations with $\E v_{ij}=a$ and $\E v_{ij}^2=a^2+b^2$ gives the moments, and $\delta=0$ gives the null moments. For the ratio, $a\le q$ and $b^2=(1-2p)^2\eta q$ give $\Var_{\mathrm{al}}Q/\Var_0Q\le1+2(m-1)\eta+(1-2p)^2\eta/q$.
\end{proof}

\paragraph{Proof of (iii).}
The identity is Lemma~\ref{lem:aligned}. By Lemma~\ref{lem:veblen} with $x=B_m/A_m$ (so $nx=u_m$) and $A_m^N\le\exp(\binom m2N\eta^2)$ we get the bound, where $\binom m2N\eta^2=\frac{m(m-1)}2\cdot\frac{n(n-1)}2\cdot\frac{t^2}{n^2}$. If $x:=\kappa_\eta mt<x_{\alpha\beta}<1$ then $\binom m2N\eta^2\le m^2t^2/4\le x^2/4$ and $u_m\le x$, so $g_n(u_m)\le g(x)$ (Lemma~\ref{lem:gfunction}, $g$ increasing) and $1+\chi^2\le e^{x^2/4+g(x)}<e^{c_{\alpha\beta}}=1+4(1-\alpha-\beta)^2$, which by Lemma~\ref{lem:tensor}(c),(d) rules out errors $(\alpha,\beta)$. The bound $\kappa_\eta\le(1-\eta^2)^{-1}$ follows from $\artanh\eta=\sum_{k\ge0}\eta^{2k+1}/(2k+1)\le\eta/(1-\eta^2)$.

For the test, Lemma~\ref{lem:Qmoments} and Chebyshev's inequality give size $\le\alpha$, and type II error $\le\beta$ once $\E_{\mathrm{al}}Q\ge(\Var_0Q/\alpha)^{1/2}+(\Var_{\mathrm{al}}Q/\beta)^{1/2}$. Dividing by $(\Var_0Q)^{1/2}$ gives $(\binom m2N\eta^2)^{1/2}\ge\alpha^{-1/2}+(\rho_m/\beta)^{1/2}$. \qed

\begin{remark}[Sharp aligned threshold]
By Lemma~\ref{lem:curieweiss} applied with $x=B_m/A_m$ ($nx\to mt$) and $N\log A_m\to\binom m2t^2/2$, for fixed $m$ and $t$ with $mt<1$ one gets $1+\chi^2(\Pal\|P_0^{\otimes m})\to e^{-mt/2-mt^2/4}(1-mt)^{-1/2}$, which is finite exactly when $mt<1$. This is consistent with the sharp aligned threshold $mt=1$ of \citet[Theorem~2.1]{chatterjee2022multilayer}. Their setting has $m$ layers sharing one planted bipartition, in the sparse regime with fixed $a_k,b_k$ and average degree $>1$. They prove contiguity if $\sum_k\lambda_k^2+\mu^2/\gamma<1$ and asymptotic singularity if $>1$, where $\lambda_k^2=(a_k-b_k)^2/(2(a_k+b_k))$. Identical layers and no covariates give $\lambda_k^2=t$ and $\mu=0$. We cite this result for the sharp constant and do not reprove it. In the dense regime we only claim the bounds of Theorem~\ref{thm:price}(iii).
\end{remark}

\paragraph{Proof of (iv).}
Let $n\ge n_0$ and $0<t\le1/2$. Then $\eta=t/n\le1/42$, so $\kappa_\eta\le4/3$, and $nq\ge1$.

\emph{Unaligned lower bound.} By (i), $m^*_{\mathrm{un}}\ge c_{\alpha\beta}/g(t)\ge6(1-t)c_{\alpha\beta}/t^3\ge3c_{\alpha\beta}/t^3$.

\emph{Unaligned upper bound.} Since $t\le1/2$, $nq\ge1$ and $n\ge21$, we have $\eps_n\le(2nq)^{-3/2}+2.25/n\le1$, and $6n^2/((n-1)(n-2))\le7$ for $n\ge21$. So the condition of (ii) holds for every $m\ge K_{\mathrm{un}}/t^3$, and $m^*_{\mathrm{un}}\le1+K_{\mathrm{un}}/t^3$.

\emph{Aligned lower bound.} By (iii), no test exists when $\frac43mt<x_{\alpha\beta}$, so $m^*_{\mathrm{al}}\ge3x_{\alpha\beta}/(4t)$.

\emph{Aligned upper bound.} Let $m=\lceil1+3K'/t\rceil\le2+3K'/t$. Then $m\ge2$, $m-1\ge3K'/t$, and $\rho_m\le2$ because $(2(m-1)+(1-2p)^2/q)t\le2t+6K'+t/q\le1+6K'+1/(2q)\le n$ by the choice of $n_0$. Hence $\binom m2N\eta^2\ge\frac{(m-1)^2(n-1)t^2}{4n}\ge\frac{9(n-1)}{4n}K'^2\ge K'^2\ge(\alpha^{-1/2}+(\rho_m/\beta)^{1/2})^2$, so (iii) gives a test with errors $(\alpha,\beta)$ and $m^*_{\mathrm{al}}\le2+3K'/t$.

\emph{Ratio.} Since $2t\le1$, $m^*_{\mathrm{un}}/m^*_{\mathrm{al}}\ge(3c_{\alpha\beta}/t^3)/(2+3K'/t)\ge3c_{\alpha\beta}/((1+3K')t^2)$. In the other direction, $m^*_{\mathrm{un}}/m^*_{\mathrm{al}}\le(1+K_{\mathrm{un}}/t^3)\cdot4t/(3x_{\alpha\beta})=4(t^3+K_{\mathrm{un}})/(3x_{\alpha\beta}t^2)\le4(1/8+K_{\mathrm{un}})/(3x_{\alpha\beta}t^2)$. \qed

\begin{remark}[Asymptotically sharp constants]\label{rem:sharp}
The constants in (iv) come from Chebyshev and $\chi^2$-to-TV conversions and are loose. For fixed $t<1$, dense $p\in(0,1)$ and fixed $m$, sharp constants follow from the likelihood-ratio limit of \citet[arXiv version, Theorem~2.2 with Proposition~3.4, display~(3.2)]{banerjee2018contiguity}. Under $P_0$, $dP_\delta/dP_0\to\exp\big(\sum_{i\ge3}(2t^{i/2}Z_i-t^i)/(4i)\big)$ in law, with independent $Z_i\sim N(0,2i)$. The hypotheses of that theorem are $a_n/n\to p\in(0,1)$ and $(a_n-b_n)^2/(a_n+b_n)\to c$ with $t=c/(2(1-p))<1$, which is our dense regime with $a_n=n(p+\delta)$ and $b_n=n(p-\delta)$. Thus $\log\prod_{\ell\le m}L(G_\ell)\to N(-mg(t)/2,\,mg(t))$ under $P_0^{\otimes m}$, and by Le~Cam's third lemma the most powerful level-$\alpha$ test of $P_0^{\otimes m}$ against $P_\delta^{\otimes m}$ has asymptotic power $1-\Phi(z_\alpha-(mg(t))^{1/2})$. The one-sample signed-triangle test with known $p$ has asymptotic power $1-\Phi(z_\alpha-(mt^3/6)^{1/2})$ (Remark~\ref{rem:trianglenormal}). Since $g(t)=t^3/6+O(t^4)$, this test is asymptotically optimal including the constant as $t\to0$. The two-sample version, which does not use the knowledge of $P_0$, pays a factor $2$ in $m$.
\end{remark}

\subsection{Proposition~\ref{prop:signcancel}: sign cancellation}\label{app:signcancel}

Throughout this subsection $\sigma,\tau$ are independent and uniform on $\{\pm1\}^n$, $\rho=\sigma\circ\tau$, $h_{ij}=\delta(\sigma_i\sigma_j-\tau_i\tau_j)=\delta\sigma_i\sigma_j(1-\rho_i\rho_j)\in\{0,\pm2\delta\}$, $0<2\delta\le\min(p,1-p)$, and $P^{(2)}_\delta$ is the law of $A$ when the $A_{ij}$ are independent $\Bern(p+h_{ij})$ given $(\sigma,\tau)$. Put $\bar v=q+2\delta|1-2p|$, $r=\bar v/q=1+2|1-2p|(\eta/q)^{1/2}$ and $v_{\min}=q-2\delta|1-2p|-4\delta^2$. Every conditional edge variance $v_{ij}=(p+h_{ij})(1-p-h_{ij})=q+h_{ij}(1-2p)-h_{ij}^2$ lies in $[v_{\min},\bar v]$.

\begin{proposition*}[Restatement of Proposition~\ref{prop:signcancel}]
\begin{itemize}
\item[(i)] For every $S\subseteq E_n$, $t_S:=\E\prod_{\{i,j\}\in S}(\sigma_i\sigma_j-\tau_i\tau_j)=2^{\beta(S)}\ind[S\text{ even and bipartite}]$.
\item[(ii)] $\E[T_p\mid\sigma,\tau]=0$; $\E\,S_{C_4}=6\binom n4\delta^4$, $\Var_0S_{C_4}=3\binom n4q^4$, and $\Var S_{C_4}\le\rho_4\Var_0S_{C_4}$ under $P^{(2)}_\delta$, with $\rho_4$ as in Lemma~\ref{lem:rank2moments}. With $s_4^2=12\binom n4\eta^4=t^4\frac{(n-1)(n-2)(n-3)}{2n^3}$, the test $\ind\{|D_4|\ge(6m\binom n4q^4/\alpha)^{1/2}\}$, $D_4=\sum_\ell S_{C_4}(G_\ell)-\sum_\ell S_{C_4}(G'_\ell)$, has size $\le\alpha$ under $P=Q=P_0$ and type II error $\le\beta$ under $(P^{(2)}_\delta,P_0)$ whenever $m s_4^2\ge(\sqrt{2/\alpha}+\sqrt{(1+\rho_4)/\beta})^2$.
\item[(iii)] $\E_0(dP^{(2)}_\delta/dP_0)^2=\sum_{S\in\Evn,\,S\text{ bipartite}}\eta^{|S|}4^{\beta(S)}$. If $t<1/2$ this is at most $\E_{s\sim\Bin(n,1/2)}\exp\{G(2s\eta)+G(2(n-s)\eta)\}\le\exp\{G(2t)\}$, where $G(x)=\sum_{l\ge2}x^{2l}/(2l)=-\frac12\log(1-x^2)-\frac{x^2}2$; the middle quantity tends to $\exp\{2G(t)\}=e^{-t^2}/(1-t^2)$. Errors $(\alpha,\beta)$ require $m\ge c_{\alpha\beta}/G(2t)\ge(1-4t^2)c_{\alpha\beta}/(4t^4)$.
\item[(iv)] Under $(P^{(2)}_\delta,P_0)$ the triangle test $\psi_3$ of Theorem~\ref{thm:price}(ii) has power at most $\alpha(1+\rho_3)/2$, with $\rho_3$ as in Lemma~\ref{lem:rank2moments}; for fixed $p$, $m$, $t$ the statistic $D_3/(2m\binom n3q^3)^{1/2}$ converges in law to $N(0,1)$ under both hypotheses, so the triangle z-test has asymptotic power exactly $\alpha$.
\end{itemize}
\end{proposition*}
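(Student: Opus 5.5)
The plan is to prove the four parts in the order (i), (iii), (ii), (iv), since the moment identity (i) drives everything else.

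\emph{Part (i).} Write $\sigma_i\sigma_j-\tau_i\tau_j=\sigma_i\sigma_j(1-\rho_i\rho_j)$ with $\rho=\sigma\circ\tau$. Since $\sigma$ and $\rho$ are independent and uniform, the product over $S$ factors as $\sigma^{\Odd(S)}\prod_{e\in S}(1-\rho_i\rho_j)$. Averaging over $\sigma$ via \eqref{eq:oddset} kills every non-even $S$. For even $S$, the factor $1-\rho_i\rho_j$ is $2$ if $\rho_i\ne\rho_j$ and $0$ otherwise. So the product equals $2^{|S|}$ times the indicator that $\rho$ is a proper $2$-colouring of $(V(S),S)$. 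A proper $2$-colouring exists only when $S$ is bipartite, and then there are $2^{c(S)}$ of them among $2^{|V(S)|}$ colourings. This gives $2^{|S|-|V(S)|+c(S)}=2^{\beta(S)}$.

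\emph{Part (iii).} Apply Lemma~\ref{lem:secondmoment} with $\Lambda=(\sigma,\tau)$ and an independent copy $(\sigma',\tau')$. Expanding $\prod_e(1+h_eh'_e/q)$ gives $\sum_S q^{-|S|}\E h^S\,\E h'^S=\sum_S\eta^{|S|}t_S^2$, which by (i) is the stated sum with weight $4^{\beta(S)}$. For the upper bound, condition on $\rho$ with $s=|\{i:\rho_i=1\}|\sim\Bin(n,1/2)$ and rewrite the sum before averaging. Undoing the $\sigma$-average, it equals $\E_{\sigma,\rho}\prod_e(1+\eta\,\sigma_i\sigma_j(1-\rho_i\rho_j)\cdot\text{copy})$. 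Only edges between the two $\rho$-classes carry weight, each with $4\eta$.

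This makes the problem an even-subgraph sum on the complete bipartite graph $K_{s,n-s}$ with edge weight $4\eta$ (on each of the two copies, which combine to $\eta\cdot 2\cdot 2$). Apply Lemma~\ref{lem:veblen} on $K_{s,n-s}$. The cycles have even length $2l$, and there are at most $(s)_l(n-s)_l/(2l)$ of them. Bounding the cycle sum by $\sum_l (4\eta)^{2l}... $ needs a careful match to the stated $G(2s\eta)+G(2(n-s)\eta)$. I would use AM--GM, $(s)_l(n-s)_l\,x^{2l}\le\tfrac12[(2sx)^{2l}+(2(n-s)x)^{2l}]/4^l$ or similar, tuning the constant to reproduce the stated form. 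This constant-matching step is the main obstacle, and I would verify it numerically first. The final bound $\le e^{G(2t)}$ follows from convexity/monotonicity of $G$ together with $s,n-s\le n$ (or from superadditivity of $G$ in the argument). The limit follows by the law of large numbers, $s/n\to1/2$, plus dominated convergence (valid since $t<1/2$). Then Lemma~\ref{lem:tensor}(d) and $G(x)\le x^4/(4(1-x^2))$ give the sample-size requirement.

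\emph{Parts (ii) and (iv).} For (ii), given $(\sigma,\tau)$, $\E[T_p\mid\cdot]=\binom n3\E$-free expression $\sum_\Delta h^\Delta$. Each $h^\Delta=\delta^3\sigma^{\Odd}\prod(1-\rho\rho)=0$, because a triangle cannot be properly $2$-coloured; this holds pointwise. For $S_{C_4}$, the mean is $3\binom n4$ four-cycles times $t_{C_4}\delta^4=2\delta^4$. The null variance is immediate from orthogonality. The alternative variance bound comes from Lemma~\ref{lem:latentvar}: the terms $R\subsetneq C_4$ contribute coefficients bounded via $|h|\le2\delta$ and $v\le\bar v$, collected into the constant $\rho_4$. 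The variance of the conditional mean, $\sum_C h^C$, contributes order $n^4\cdot n^{-1}$ relative terms. The test then follows by Chebyshev, exactly as in the proof of Theorem~\ref{thm:price}(ii). For (iv), the conditional mean is zero, so $\Var D_3\le m(1+\rho_3)V_0$ and Chebyshev gives the power bound. The CLT follows as in Theorem~\ref{thm:trees}(ii): conditionally on $(\sigma,\tau)$, $T_p$ is a degenerate $U$-statistic in independent $\xi$'s with vanishing mean, and the conditional variance ratio tends to $1$.
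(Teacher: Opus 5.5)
Your part (i) is correct, and it is simpler than the paper's proof. The paper writes $t_S$ as a signed sum over even $S_1\subseteq S$ and evaluates it as a character sum over the cycle space of $S$. Your factorization $\sigma^{\Odd(S)}\prod_{e\in S}(1-\rho_i\rho_j)$, with $\sigma$ and $\rho$ independent, reduces it instead to counting the $2^{c(S)}$ proper $2$-colourings of $S$.

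\textbf{The gap in (iii).} Conditioning on $\rho$ alone, the active edges are indeed those of $K_{s,n-s}$. However, the independent copy then contributes $\E z'^S=t_S=2^{\beta(S)}$, not $2^{|S|}$, because $z_{ij}z'_{ij}\neq0$ requires that \emph{both} $\rho$ and $\rho'=\sigma'\circ\tau'$ separate $i$ and $j$. Putting weight $4\eta$ on every edge of $K_{s,n-s}$ is therefore only an upper bound. Veblen then gives $\E_s\exp\{G(4\eta\sqrt{s(n-s)})\}$, which tends to $e^{G(2t)}$. Since $G(2t)\ge16\,G(t)>2G(t)$, this exceeds the stated middle quantity for large $n$, whose limit is $e^{2G(t)}$. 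So no AM--GM tuning can produce the middle inequality. Your route still gives the final bound $e^{G(2t)}$ (because $4\eta\sqrt{s(n-s)}\le2t$), and hence the sample-size requirement, but nothing sharper.

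The missing step is to condition on $\rho$ and $\rho'$ jointly. With $\omega=\sigma\circ\sigma'$, the vectors $\sigma,\rho,\rho',\omega$ are independent and uniform, and exactly $z_{ij}z'_{ij}=4\,\omega_i\omega_j\,\ind[\rho_i\ne\rho_j,\ \rho'_i\ne\rho'_j]$. Classify vertices by type $(\rho_i,\rho'_i)\in\{\pm1\}^2$, with $a,d$ the sizes of types $(1,1),(-1,-1)$ and $b,c$ those of $(1,-1),(-1,1)$. The active edge set is then $K_{a,d}\sqcup K_{b,c}$. Lemma~\ref{lem:evenexpansion} gives an exact even-subgraph sum with weight $4\eta$ on this graph. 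Veblen with $ad\le s^2/4$ and $bc\le(n-s)^2/4$, where $s=|\{i:\rho_i=\rho'_i\}|\sim\Bin(n,1/2)$, yields $G(2s\eta)+G(2(n-s)\eta)$ with nothing to tune. Superadditivity of $G$ then gives $G(2t)$.

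\textbf{The gaps in (ii) and (iv).} The statement fixes $\rho_4$ and $\rho_3$ as the explicit expressions of Lemma~\ref{lem:rank2moments}, and bounding the coefficients $c_R$ via $|h|\le2\delta$ does not produce them.
\begin{itemize}
\item For single edges in the $4$-cycle expansion, the crude bound gives $256(n-2)(n-3)\eta^3r$ instead of $8(n^2-5n+8)\eta^3r$. The exact values of $\E c_R^2$ come from evaluating each cross term with (i): terms whose union contains a triangle vanish, a $6$-cycle gives $2\delta^6$, and a $4$-cycle gives $2\delta^4$.
\item More seriously, the fluctuation of the conditional mean $\sum_Ch^C=16\delta^4K$ is not automatically lower order. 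Bounding $\Var K$ crudely gives a relative contribution of order $t^4$, which does not vanish. The term $288n^2\eta^4$ needs the fact that the alternation indicators of two $4$-cycles sharing at most one vertex are uncorrelated: they are conditionally independent given that vertex's $\rho$-value, each with conditional mean $1/8$ whatever that value is. Then only $O(n^6)$ pairs contribute.
\item In (iv) you use $\rho_3$ without proving $\Var T_p\le\rho_3\Var_0T_p$. This needs $\E c_R^2=2\delta^4(n-1)(n-2)$ for the edge class, using $t_S=2$ for the $4$-cycle $i\,k\,j\,k'$.
\item For the CLT, conditionally on $(\sigma,\tau)$ the statistic $T_p$ is not a pure chaos in the $\xi_e$. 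You must first show that its non-chaos part has vanishing relative variance (at most $6\eta r^2+6(n-1)\eta^2r$), and only then apply the method-of-moments lemma with $v_e\to q$.
\end{itemize}
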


\begin{lemma}[Sign-cancellation identity]\label{lem:signid}
For every $S\subseteq E_n$, $t_S=2^{\beta(S)}\ind[S\text{ even and bipartite}]$. In particular $t_S=0$ for every triangle, $t_S=2$ for every $4$-cycle and $t_S=0$ whenever $S$ has a vertex of odd degree.
\end{lemma}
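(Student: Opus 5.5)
The plan is to change variables so that the even-degree condition and the bipartiteness condition separate into two independent factors. Put $\rho=\sigma\circ\tau$. Since $(\sigma,\tau)\mapsto(\sigma,\rho)$ is a bijection of $\{\pm1\}^n\times\{\pm1\}^n$, the pair $(\sigma,\rho)$ is uniform on the product, so $\sigma$ and $\rho$ are independent and uniform. Using $\tau_i\tau_j=\sigma_i\sigma_j\rho_i\rho_j$, each factor becomes
\[
\sigma_i\sigma_j-\tau_i\tau_j=\sigma_i\sigma_j(1-\rho_i\rho_j).
\]

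\emph{Step 1 (evenness).} By \eqref{eq:oddset},
\[
\prod_{\{i,j\}\in S}(\sigma_i\sigma_j-\tau_i\tau_j)=\sigma^{\Odd(S)}\prod_{\{i,j\}\in S}(1-\rho_i\rho_j).
\]
By independence of $\sigma$ and $\rho$, the expectation factorizes as $\E\,\sigma^{\Odd(S)}\cdot\E\prod_{e\in S}(1-\rho_i\rho_j)$. The first factor equals $\ind[S\text{ even}]$ by \eqref{eq:oddset}. In particular, $t_S=0$ whenever some vertex has odd degree in $S$.

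\emph{Step 2 (bipartiteness).} Each factor $1-\rho_i\rho_j$ takes the value $2$ if $\rho_i\neq\rho_j$ and $0$ otherwise. Hence
\[
\prod_{e\in S}(1-\rho_i\rho_j)=2^{|S|}\,\ind\bigl[\rho|_{V(S)}\text{ is a proper $2$-colouring of }(V(S),S)\bigr].
\]
The restriction $\rho|_{V(S)}$ is uniform on $\{\pm1\}^{V(S)}$. A graph has a proper $2$-colouring only if it is bipartite. In that case each connected component admits exactly two proper colourings, so there are $2^{c(S)}$ of them in total. The probability is therefore $2^{c(S)-|V(S)|}\ind[S\text{ bipartite}]$. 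Multiplying,
\[
t_S=\ind[S\text{ even}]\cdot 2^{|S|-|V(S)|+c(S)}\ind[S\text{ bipartite}]=2^{\beta(S)}\ind[S\text{ even and bipartite}].
\]
For $S=\emptyset$, the empty product gives $1=2^0$, consistent with the formula.

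\emph{Step 3 (special cases).} A triangle is even but not bipartite, so $t_S=0$. A $4$-cycle is even and bipartite with $\beta=4-4+1=1$, so $t_S=2$. The odd-degree case was already settled in Step 1.

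There is no real obstacle here. The only point needing care is the independence of $\sigma$ and $\rho$, which the bijection argument above secures. Once that is in place, the evenness constraint and the bipartiteness constraint come from separate factors.
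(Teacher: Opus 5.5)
Your proof is correct, and it takes a different route from the paper's. The paper stays in the variables $(\sigma,\tau)$. It expands the product over the subsets $S_1\subseteq S$ on which the term $\sigma_i\sigma_j$ is chosen, and gets
$t_S=\sum_{S_1\subseteq S}(-1)^{|S\setminus S_1|}\ind[S_1\text{ even}]\,\ind[S\setminus S_1\text{ even}]$,
which forces $S$ to be even. It then evaluates $(-1)^{|S|}\sum_{Z\in\mathcal Z(S)}(-1)^{|Z|}$ as a character sum over the cycle space $\mathcal Z(S)$. This is a $\mathrm{GF}(2)$-space of dimension $\beta(S)$, and its parity character is trivial exactly when all cycles are even, i.e.\ when $S$ is bipartite.

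Your substitution $\rho=\sigma\circ\tau$ separates the two constraints into independent factors before any expansion. It replaces the linear algebra (the dimension of the cycle space, and the fact that cycles span it) by a count of proper $2$-colourings, of which a bipartite graph has $2^{c(S)}$. This is more elementary, and it shows slightly more: $\E\prod_{e\in S}(1-\rho_i\rho_j)=2^{\beta(S)}\ind[S\text{ bipartite}]$ for every $S$, with evenness entering only through the $\sigma$-factor.

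Your argument also uses the same pointwise factorization $h^S=\delta^{|S|}\sigma^{\Odd(S)}\prod_{e\in S}(1-\rho_i\rho_j)$ and the same bijection argument that the paper relies on later in Lemmas~\ref{lem:rank2moments} and~\ref{lem:rank2chi2}, for instance in the criterion that $\rho$ alternates around a $4$-cycle. With your proof, the lemma and its applications therefore come from a single computation. The paper's argument, for its part, phrases the identity through the even-subgraph (cycle-space) structure that also underlies Lemma~\ref{lem:evenexpansion} and the Veblen bound.
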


\begin{proof}
Expanding the product and using independence of $\sigma$ and $\tau$ and \eqref{eq:oddset},
\[
t_S=\sum_{S_1\subseteq S}(-1)^{|S\setminus S_1|}\,\E\sigma^{\Odd(S_1)}\,\E\tau^{\Odd(S\setminus S_1)}=\sum_{S_1\subseteq S}(-1)^{|S\setminus S_1|}\ind[S_1\text{ even}]\,\ind[S\setminus S_1\text{ even}].
\]
Since $\deg_S=\deg_{S_1}+\deg_{S\setminus S_1}$, both indicators can equal one only if $S$ is even, so $t_S=0$ unless $S$ is even.

If $S$ is even, $S\setminus S_1$ is even iff $S_1$ is, and $t_S=(-1)^{|S|}\sum_{Z\in\mathcal Z(S)}(-1)^{|Z|}$, where $\mathcal Z(S)$ is the set of even subsets of $S$. Under symmetric difference $\mathcal Z(S)$ is the cycle space of the graph $(V(S),S)$, a vector space over $\mathrm{GF}(2)$ of dimension $\beta(S)$, and $\chi(Z)=(-1)^{|Z|}$ is a group homomorphism into $\{\pm1\}$ because $|Z\triangle Z'|\equiv|Z|+|Z'|\pmod 2$. The sum of a homomorphism over a finite group is the group order if it is trivial and $0$ otherwise (if $\chi(Z_0)=-1$, substituting $Z\mapsto Z\triangle Z_0$ changes the sign of the sum). $\chi$ is trivial iff every even subset of $S$ has even size iff every cycle of $S$ has even length (cycles span $\mathcal Z(S)$) iff $(V(S),S)$ is bipartite. In that case $|S|$ is even because $S\in\mathcal Z(S)$, and $t_S=2^{\beta(S)}$.
\end{proof}

\begin{lemma}[Moments under the rank-two model]\label{lem:rank2moments}
Under $P^{(2)}_\delta$:
\begin{itemize}
\item[(a)] $\E[T_p\mid\sigma,\tau]=0$, and $\binom n3v_{\min}^3\le\Var T_p\le\binom n3\bar v^3+6\binom n3\delta^2\bar v^2+2N(n-1)(n-2)\delta^4\bar v$; hence $\Var T_p\le\rho_3\Var_0T_p$ with
$\rho_3=r^3+6\eta r^2+6(n-1)\eta^2r\le r^3+6t(r^2+tr)/n$.
\item[(b)] $\E[S_{C_4}\mid\sigma,\tau]=16\delta^4K$, where $K$ is the number of $4$-cycles whose vertices alternate between $\{\rho=1\}$ and $\{\rho=-1\}$; $\E K=3\binom n4/8$, $\Var K\le\frac{27}8n^2\binom n4$, $\E S_{C_4}=6\binom n4\delta^4$, and $\Var S_{C_4}\le\rho_4\Var_0S_{C_4}$ with
\[
\begin{aligned}
\rho_4&=r^4+8\eta r^3+(8n-4)\eta^2r^2+8(n^2-5n+8)\eta^3r+288n^2\eta^4\\
&\le r^4+\frac{8t(r^3+tr^2+t^2r)}{n}+\frac{288t^4}{n^2}.
\end{aligned}
\]
\end{itemize}
\end{lemma}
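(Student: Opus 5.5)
We apply Lemma~\ref{lem:latentvar} with $\Lambda=(\sigma,\tau)$ and $\mu_{ij}=h_{ij}=\delta(\sigma_i\sigma_j-\tau_i\tau_j)$. Throughout we take $\mathcal S$ to be the set of triangles for (a) and the set of $4$-cycles for (b). We then bound each squared coefficient $c_R^2v^R$ using $v_e\le\bar v$ and $v_e\ge v_{\min}$.

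\emph{(a)} Since $\mu^\Delta=\delta^3\prod_{e\in\Delta}(\sigma_i\sigma_j-\tau_i\tau_j)$, we first check that this product vanishes identically for every triangle, not only in expectation. Write $x_e=\sigma_i\sigma_j$ and $y_e=\tau_i\tau_j$. On a triangle, $\prod_e x_e=\prod_e y_e=1$. Each factor $x_e-y_e$ is nonzero only when $x_e=-y_e$, so a nonzero product requires all three signs to flip. That would give $\prod_e y_e=-\prod_e x_e$, which is a contradiction. Hence $c_\emptyset=0$ and $\E[T_p\mid\sigma,\tau]=0$.

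The remaining terms are organized by the same classification as in Lemma~\ref{lem:trianglemoments}. For $R$ equal to the whole triangle, $c_R=1$, and $v^R\le\bar v^3$ summed over the triangles gives the first term. For $R$ a $2$-path, $c_R=\mu_{ik}$ with $\mu_{ik}^2\le4\delta^2$; there are $3\binom n3$ such paths, which yields $12\binom n3\delta^2\bar v^2$. I expect the intended constant $6$ to come from $\E\mu_{ik}^2=2\delta^2$, because $\mu_{ik}\neq0$ with probability $1/2$. For $R$ a single edge, $c_R=\sum_k\mu_{ik}\mu_{jk}$ is a sum of $n-2$ products. Its conditional second moment is computed by expanding the square; the cross terms average out, leaving a bound of order $(n-2)^2\delta^4$ after averaging. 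Summing over the $N$ edges gives the last term. The lower bound follows from keeping only the full-triangle terms and using $v\ge v_{\min}$. Dividing by $\binom n3q^3$ gives $\rho_3$.

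\emph{(b)} Fix a $4$-cycle $C$. The product $\prod_{e\in C}(\sigma_i\sigma_j-\tau_i\tau_j)$ equals $\prod_e x_e(1-\rho_i\rho_j)$. This is nonzero only if $\rho$ alternates around $C$, and in that case it equals $2^4\prod_e x_e=16$. Hence $c_\emptyset=16\delta^4K$. Each $4$-cycle alternates with probability $2/16=1/8$, so $\E K=3\binom n4/8$, which gives $\E S_{C_4}=6\binom n4\delta^4$, consistent with $t_{C_4}=2$ from Lemma~\ref{lem:signid}.

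To bound $\Var K$, we write $K$ as a sum of indicators over the $3\binom n4$ cycles. Cycles on disjoint vertex sets are independent. The number of ordered pairs of cycles sharing at least one vertex is at most $3\binom n4\cdot 4\cdot 3\binom{n}{3}$-type counts, which is of order $n^2\binom n4$. This gives a bound of the form $C n^2\binom n4$, and $27/8$ is the constant we expect to obtain. The contribution of $\Var(c_\emptyset)$ is then $256\delta^8\Var K\le 864n^2\binom n4\delta^8$. Relative to $3\binom n4q^4$ this is $288n^2\eta^4$, matching the last term of $\rho_4$.

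The nonempty $R$ are the sub-edge-sets of $4$-cycles: the full cycle, $3$-paths, two types of $2$-edge sets (adjacent and opposite), and single edges. For each type we compute the coefficient $c_R$ as a sum of products of the missing $\mu$'s and bound $\E c_R^2\bar v^{|R|}$ as before, using $|\mu|\le2\delta$ and the fact that each $\mu$ is nonzero with probability $1/2$. The multiplicity counts are as follows: a $3$-path lies in a unique $4$-cycle; an adjacent pair $ij,jk$ lies in $n-3$ cycles; and a single edge lies in $(n-2)(n-3)$ cycles. Together these produce the terms $8\eta r^3$, $(8n-4)\eta^2r^2$, and $8(n^2-5n+8)\eta^3r$.

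\textbf{Main obstacle.} The hardest step is computing $\E c_R^2$ for single edges and adjacent pairs. Here $c_R$ is a sum over many cycles, and the cross terms between different completing paths must be shown to vanish or to contribute only lower order. I would handle this by expanding in $\sigma,\tau$ monomials and applying \eqref{eq:oddset} to each factor separately. This is also where the exact polynomial coefficients in $\rho_4$ are determined, so the bookkeeping must be exact. The final simplified bound then follows from $\eta=t/n$.
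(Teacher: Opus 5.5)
Your framework matches the paper's. You apply Lemma~\ref{lem:latentvar} with $\Lambda=(\sigma,\tau)$, use the pointwise vanishing of $h^\Delta$ on triangles and $c_\emptyset=16\delta^4K$, and bound class by class with $\E[c_R^2v^R]\le\bar v^{|R|}\E c_R^2$. Your guess $\E h_{ik}^2=2\delta^2$ for the $2$-path class is also correct. But the two steps that produce the stated constants fail as you describe them.

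\emph{The bound on $\Var K$.} Independence of vertex-disjoint cycles is not enough. Your own count $3\binom n4\cdot4\cdot3\binom n3$ of pairs sharing a vertex is of order $n^3\binom n4$, not $n^2\binom n4$. It would only give a last term of order $n^3\eta^4=t^4/n$ in $\rho_4$. The missing idea is that two $4$-cycles sharing exactly one vertex $x$ are conditionally independent given $\rho_x$. Each alternates with conditional probability $1/8$ whatever $\rho_x$ is, so their indicators are uncorrelated. Only pairs sharing at least two vertices remain. There are at most $9n^2$ of them per cycle, each with covariance at most $1/8$, which gives $\Var K\le\frac{27}8n^2\binom n4$ and the term $288n^2\eta^4$.

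\emph{The coefficients $\E c_R^2$.} You expect the cross terms of $c_R^2$ to ``vanish or contribute only lower order'', and in (a) to ``average out''. In fact they are the leading terms, and their values come from Lemma~\ref{lem:signid}.

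For an edge $ij$ in (a), each cross term is $\E[h_{ik}h_{jk}h_{ik'}h_{jk'}]=\delta^4t_{C_4}=2\delta^4$, from the $4$-cycle $i\,k\,j\,k'$. So $\E c_R^2=4\delta^4(n-2)+2\delta^4(n-2)(n-3)=2\delta^4(n-1)(n-2)$, and the diagonal alone is smaller by a factor of order $n$.

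In (b) the same identity gives $2\delta^4(n-2)(n-3)$ for adjacent pairs and $12\delta^4$ for opposite pairs. For a single edge, $c_R=\sum_{(k,l)}h_{jk}h_{kl}h_{li}$, and you need a case analysis over pairs $((k,l),(k',l'))$:
\begin{itemize}
\item equal pairs give $8\delta^6$;
\item disjoint pairs close a $6$-cycle and give $2\delta^6$, which is the leading $n^4$ term;
\item pairs with exactly one of $k=k'$, $l=l'$ are a squared edge times a $4$-cycle and give $4\delta^6$;
\item crossed pairs ($k=l'$ or $l=k'$, not both) contain a triangle and give $0$;
\item the swapped pair forces $h_{kl}=0$ and also gives $0$.
\end{itemize}
The total is $2\delta^6(n-2)(n-3)(n^2-5n+8)$.

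Bounds that use only $|h_e|\le2\delta$ and $\Prob(h_e\ne0)=1/2$ cannot see these sign patterns and give larger constants. Dropping the cross terms, as your text suggests, would undercount the variance by powers of $n$. So as written, your proposal does not establish the stated $\rho_3$ and $\rho_4$.
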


\begin{proof}
We apply Lemma~\ref{lem:latentvar} with $\Lambda=(\sigma,\tau)$ and $\mu=h$, and bound the variance class by class. Three facts are used repeatedly:
\begin{itemize}
\item $h^S=\delta^{|S|}\,\sigma^{\Odd(S)}\prod_{\{i,j\}\in S}(1-\rho_i\rho_j)$ and $\E h^S=\delta^{|S|}t_S$ (Lemma~\ref{lem:signid});
\item $h_{ij}^2=2\delta^2(1-\rho_i\rho_j)$, so $\E h_{ij}^2=2\delta^2$;
\item $v^R\le\bar v^{|R|}$ pointwise.
\end{itemize}

(a) For a triangle $\Delta$, $\sigma^{\Odd(\Delta)}=1$ and $\prod_{e\in\Delta}(1-\rho_i\rho_j)=0$ because two of the three values $\rho_i,\rho_j,\rho_k$ coincide. Hence $c_\emptyset=\sum_\Delta h^\Delta\equiv0$. The non-empty $R$ are triangles ($c_R=1$), paths $i-j-k$ ($c_R=h_{ik}$, $\E c_R^2=2\delta^2$) and edges $ij$ ($c_R=\sum_{k\ne i,j}h_{ik}h_{jk}$). For an edge,
\[
\E c_R^2=\sum_k\E h_{ik}^2h_{jk}^2+\sum_{k\ne k'}\E[h_{ik}h_{jk}h_{ik'}h_{jk'}]=4\delta^4(n-2)+2\delta^4(n-2)(n-3)=2\delta^4(n-1)(n-2),
\]
using $\E h_{ik}^2h_{jk}^2=4\delta^4\E(1-\rho_i\rho_k)(1-\rho_j\rho_k)=4\delta^4$ and Lemma~\ref{lem:signid} for the $4$-cycle $i\,k\,j\,k'$. With $\binom n3$ triangles, $3\binom n3$ paths, $N$ edges and $v^R\le\bar v^{|R|}$ we get the upper bound. The lower bound keeps only the triangle terms and uses $v_e\ge v_{\min}$. Dividing by $\binom n3q^3$ gives $\rho_3$, since $2N(n-1)(n-2)=6(n-1)\binom n3$.

(b) \emph{Mean.} For a $4$-cycle $C$, $\sigma^{\Odd(C)}=1$ and $\prod_{e\in C}(1-\rho_i\rho_j)\in\{0,16\}$, with value $16$ iff $\rho$ alternates around $C$. So $c_\emptyset=16\delta^4K$. Each $4$-cycle alternates with probability $2/16$, which gives $\E K$ and $\E S_{C_4}=16\delta^4\E K=6\binom n4\delta^4$.

\emph{The variance $\Var K$.} The indicators of two $4$-cycles sharing at most one vertex $x$ are conditionally independent given $\rho_x$, with conditional means $1/8$ not depending on $\rho_x$, hence uncorrelated. A $4$-cycle meets at most $3\cdot3n^2$ $4$-cycles in two or more vertices, and each covariance is at most $1/8$. So $\Var K\le\frac18\cdot3\binom n4\cdot9n^2$.

\emph{Coefficients.} The non-empty $R$ contained in a $4$-cycle fall into five classes.
\begin{enumerate}
\item[(1)] The $4$-cycles themselves, with $c_R=1$.
\item[(2)] Paths with $3$ edges. Here $c_R$ is the closing edge and $\E c_R^2=2\delta^2$, and there are $12\binom n4$ of them.
\item[(3)] Paths $i-j-k$ with $2$ edges, contained in the cycles $i\,j\,k\,l$ with $l\notin\{i,j,k\}$. Here $c_R=\sum_lh_{kl}h_{li}$ and, as in (a), $\E c_R^2=4\delta^4(n-3)+2\delta^4(n-3)(n-4)=2\delta^4(n-2)(n-3)$. There are $3\binom n3$ of them.
\item[(4)] Pairs of disjoint edges $\{ij,kl\}$, contained in the cycles $i\,j\,k\,l$ and $i\,j\,l\,k$. Here $c_R=h_{jk}h_{li}+h_{jl}h_{ki}$ and $\E c_R^2=4\delta^4+4\delta^4+2\cdot2\delta^4=12\delta^4$. There are $3\binom n4$ of them.
\item[(5)] Single edges $ij$, contained in the cycles $i\,j\,k\,l$ with $(k,l)$ ordered, distinct and outside $\{i,j\}$. Here $c_R=\sum_{(k,l)}h_{jk}h_{kl}h_{li}$.
\end{enumerate}
For class (5) we expand $\E c_R^2$ over pairs $((k,l),(k',l'))$.
\begin{itemize}
\item Equal pairs give $\E h_{jk}^2h_{kl}^2h_{li}^2=8\delta^6$, with $(n-2)(n-3)$ terms.
\item Pairs with $\{k,l\}\cap\{k',l'\}=\emptyset$ form a $6$-cycle and give $2\delta^6$, with $(n-2)(n-3)(n-4)(n-5)$ terms.
\item Pairs with $k=k'$, $l\ne l'$ (or $l=l'$, $k\ne k'$) give $\E[h_{jk}^2\,h^{C'}]$, where $C'$ is the $4$-cycle $k\,l\,i\,l'$. This equals $2\delta^2\cdot16\delta^4\,\E[(1-\rho_j\rho_k)\ind\{\rho\text{ alternates on }C'\}]=4\delta^6$, with $2(n-2)(n-3)(n-4)$ terms.
\item Pairs with $k=l'$, $l\ne k'$ (or $l=k'$, $k\ne l'$) consist of six distinct edges containing the triangle $j\,k\,k'$ (or $j\,k\,l$), so they give $0$ by Lemma~\ref{lem:signid}.
\item The swapped pair $(k,l),(l,k)$ gives $\E[h_{kl}^2h^{C''}]$, where $C''$ is the $4$-cycle $j\,k\,i\,l$. It vanishes because $h^{C''}\ne0$ forces alternation on $C''$, hence $\rho_k=\rho_l$ and $h_{kl}=0$.
\end{itemize}
Altogether $\E c_R^2=2\delta^6(n-2)(n-3)(n^2-5n+8)$.

\emph{Conclusion.} Summing the five contributions with $v^R\le\bar v^{|R|}$, adding $\Var c_\emptyset=256\delta^8\Var K$, and dividing by $\Var_0S_{C_4}=3\binom n4q^4$ gives
\[
\rho_4=r^4+8\eta r^3+8(n-2)\eta^2r^2+12\eta^2r^2+8(n^2-5n+8)\eta^3r+288n^2\eta^4,
\]
using $\binom n3/\binom n4=4/(n-3)$ and $N/\binom n4=12/((n-2)(n-3))$. The final inequality uses $\eta=t/n$, $(8n-4)\eta^2\le8t^2/n$ and $n^2-5n+8\le n^2$.
\end{proof}

\begin{remark}
We verified the exact values of all coefficients $\E c_R^2$ in this proof by enumeration over $(\sigma,\tau)$ for $n=6,7$, and compared the resulting variance bounds with exact variances for $n=5$ (numerical check (vi), parts (c) and (d)).
\end{remark}

\begin{lemma}[Second moment under the rank-two model]\label{lem:rank2chi2}
$\E_0(dP^{(2)}_\delta/dP_0)^2=\sum_{S\in\Evn,\,S\text{ bipartite}}\eta^{|S|}4^{\beta(S)}$, and the bounds of Proposition~\ref{prop:signcancel}(iii) hold.
\end{lemma}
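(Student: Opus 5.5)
We would start from Lemma~\ref{lem:secondmoment} with $\Lambda=(\sigma,\tau)$ and $h_{ij}=\delta(\sigma_i\sigma_j-\tau_i\tau_j)$. This gives $\E_0(dP^{(2)}_\delta/dP_0)^2=\E\prod_{i<j}(1+h_{ij}h'_{ij}/q)$, where $(\sigma',\tau')$ is an independent copy. Expanding the product over $S\subseteq E_n$ gives $\sum_S q^{-|S|}\E[h^S]\E[h'^S]=\sum_S\eta^{|S|}t_S^2$. Lemma~\ref{lem:signid} says $t_S=2^{\beta(S)}\ind[S\text{ even and bipartite}]$, so $t_S^2=4^{\beta(S)}$ on even bipartite $S$ and $0$ otherwise. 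This is the exact identity.

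For the bounds we would not work with the weights $4^{\beta(S)}$ directly, since $\beta(S)$ can exceed the number of cycles in a decomposition. Instead we condition. Write $\rho=\sigma\circ\tau$, $\rho'=\sigma'\circ\tau'$ and $\omega=\sigma\circ\sigma'$. Then
\[
h_{ij}h'_{ij}/q=\eta\,\omega_i\omega_j(1-\rho_i\rho_j)(1-\rho'_i\rho'_j),
\]
and $\omega$ is uniform and independent of $(\rho,\rho')$. The factor $(1-\rho_i\rho_j)(1-\rho'_i\rho'_j)$ equals $4$ exactly when $\rho_i\ne\rho_j$ and $\rho'_i\ne\rho'_j$, and $0$ otherwise. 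Put $\pi=\rho\circ\rho'$. The active edges are those with $\pi_i=\pi_j$ and $\rho_i\neq\rho_j$. Given $(\rho,\rho')$, the active graph is therefore a disjoint union of two complete bipartite graphs $K_{a,s-a}$ and $K_{b,n-s-b}$, where $s=|\{\pi=1\}|\sim\Bin(n,1/2)$ and the two $\pi$-classes are split by $\rho$.

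Averaging over $\omega$ with Lemma~\ref{lem:evenexpansion} (edge weight $4\eta$) and then applying Lemma~\ref{lem:veblen}, the conditional second moment is at most $\exp$ of the weighted cycle count of this graph. $K_{a,c}$ has at most $(ac)^l/(2l)$ cycles of length $2l$, and $l\ge2$ because there are no $2$-cycles. Using $ac\le s^2/4$, the first class contributes at most $\sum_{l\ge2}(16\eta^2s^2/4)^l/(2l)=G(2s\eta)$, and the second class contributes $G(2(n-s)\eta)$ in the same way. Taking the expectation over $s$ gives the middle bound.

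We then argue as follows.
\begin{itemize}
\item \emph{Upper bound.} $G$ has nonnegative power-series coefficients and $G(0)=0$, so it is superadditive on $[0,1)$. Hence $G(2s\eta)+G(2(n-s)\eta)\le G(2t)$ whenever $2t<1$.
\item \emph{Limit.} Since $s/n\to1/2$ in probability, $2s\eta\to t$ and $2(n-s)\eta\to t$. The integrand is bounded by $e^{G(2t)}$, so dominated convergence gives the limit $e^{2G(t)}=e^{-t^2}/(1-t^2)$.
\item \emph{Sample-size bound.} Tensorizing with Lemma~\ref{lem:tensor}(a),(d), exactly as in the proof of Theorem~\ref{thm:price}(i), gives $m\ge c_{\alpha\beta}/G(2t)$. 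The estimate $G(x)=\sum_{l\ge2}x^{2l}/(2l)\le\frac14\sum_{l\ge2}x^{2l}=x^4/(4(1-x^2))$ with $x=2t$ gives $G(2t)\le4t^4/(1-4t^2)$, hence $m\ge(1-4t^2)c_{\alpha\beta}/(4t^4)$.
\end{itemize}

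The main obstacle is the conditioning step: seeing that, given $(\rho,\rho')$, the active edges form two complete bipartite pieces indexed by the $\pi$-classes. This is what turns the awkward $4^{\beta(S)}$ weights into a Veblen cycle count with a clean $K_{a,c}$ cycle bound. We would also check that the count of $2l$-cycles in $K_{a,c}$, namely $\ff{a}{l}\ff{c}{l}/(2l)$, is bounded by $(ac)^l/(2l)$ as claimed.
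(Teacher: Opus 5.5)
Your proposal is correct and follows the paper's proof essentially step for step. It uses the same exact expansion via Lemma~\ref{lem:secondmoment} and Lemma~\ref{lem:signid}, and the same variables $(\rho,\rho',\omega)$; their independence and uniformity holds because $(\sigma,\tau,\sigma',\tau')\mapsto(\sigma,\rho,\rho',\omega)$ is a bijection, which you should state. The bound then goes through the same reduction to the cycles of the disjoint union of two complete bipartite graphs, followed by Veblen's bound, superadditivity of $G$, dominated convergence and tensorization. Your description of the active edges through $\pi=\rho\circ\rho'$ only rephrases the paper's observation that an edge is active exactly when its endpoints have opposite types $(\rho_i,\rho'_i)$.
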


\begin{proof}
By Lemma~\ref{lem:secondmoment} with $\Lambda=(\sigma,\tau)$, $\E_0(dP^{(2)}_\delta/dP_0)^2=\E\prod_{i<j}(1+\eta z_{ij}z'_{ij})$, where $z_{ij}=\sigma_i\sigma_j-\tau_i\tau_j$ and $z'$ is built from an independent copy $(\sigma',\tau')$. Expanding, $\E\prod_{i<j}(1+\eta z_{ij}z'_{ij})=\sum_{S}\eta^{|S|}\E z^S\,\E z'^S=\sum_S\eta^{|S|}t_S^2$, and Lemma~\ref{lem:signid} gives the first formula.

For the bound, let $\rho'=\sigma'\circ\tau'$ and $\omega=\sigma\circ\sigma'$. The map $(\sigma,\tau,\sigma',\tau')\mapsto(\sigma,\rho,\rho',\omega)$ is a bijection of $(\{\pm1\}^n)^4$, so $\rho,\rho',\omega$ are independent and uniform. Since $z_{ij}=\sigma_i\sigma_j(1-\rho_i\rho_j)$, $z_{ij}z'_{ij}=4\,\omega_i\omega_j\,\ind[\{i,j\}\in\mathcal A]$ with $\mathcal A=\{\{i,j\}:\rho_i\ne\rho_j,\ \rho'_i\ne\rho'_j\}$. Conditioning on $(\rho,\rho')$ and applying Lemmas~\ref{lem:evenexpansion} and \ref{lem:veblen} to the graph $([n],\mathcal A)$ with $x_e\equiv4\eta$,
\[
\E_0\Big(\frac{dP^{(2)}_\delta}{dP_0}\Big)^2=\E_{\rho,\rho'}\sum_{S\subseteq\mathcal A,\ S\text{ even}}(4\eta)^{|S|}\le\E_{\rho,\rho'}\exp\Big(\sum_{C\in\Cyc([n],\mathcal A)}(4\eta)^{|C|}\Big).
\]
\emph{Cycle structure.} Classify vertices by the type $(\rho_i,\rho'_i)$, with class sizes $a=|\{++\}|$, $d=|\{--\}|$, $b=|\{+-\}|$, $c=|\{-+\}|$. Then $\{i,j\}\in\mathcal A$ iff the types are opposite, so $\mathcal A$ is the disjoint union of the complete bipartite graphs $K_{a,d}$ and $K_{b,c}$. All their cycles have even length $2l\ge4$, and $K_{a,d}$ has $\ff al\ff dl/(2l)\le(ad)^l/(2l)$ cycles of length $2l$.

\emph{The bound.} With $s=a+d\sim\Bin(n,1/2)$ (the events $\rho_i\rho'_i=1$ are independent with probability $1/2$), $ad\le s^2/4$ and $bc\le(n-s)^2/4$. Hence $\sum_C(4\eta)^{|C|}\le\sum_{l\ge2}\big[(2s\eta)^{2l}+(2(n-s)\eta)^{2l}\big]/(2l)=G(2s\eta)+G(2(n-s)\eta)$, which is finite because $2n\eta=2t<1$. This is the middle bound. Since $x^{2l}+y^{2l}\le(x+y)^{2l}$ for $x,y\ge0$, $G(x)+G(y)\le G(x+y)$, which gives $\exp G(2t)$.

\emph{The limit and the sample size.} As $n\to\infty$, $s/n\to1/2$ almost surely along the obvious coupling, so $G(2s\eta)+G(2(n-s)\eta)\to2G(t)$, and dominated convergence (the integrand is at most $e^{G(2t)}$) gives the limit. Finally $G(2t)=\sum_{l\ge2}(4t^2)^l/(2l)\le(4t^2)^2/(4(1-4t^2))$, and Lemma~\ref{lem:tensor}(d) gives the sample-size bound exactly as in Theorem~\ref{thm:price}(i).
\end{proof}

\paragraph{Proof of Proposition~\ref{prop:signcancel}.}
(i) is Lemma~\ref{lem:signid}, and (iii) is Lemma~\ref{lem:rank2chi2}.

(ii) The moments are Lemma~\ref{lem:rank2moments}(b) and, for $\delta=0$, the orthogonality of distinct $4$-cycles under $P_0$. The test analysis is that of Theorem~\ref{thm:price}(ii) with $V_0=3\binom n4q^4$, mean shift $m\cdot6\binom n4\delta^4$ and $6\binom n4\delta^4/V_0^{1/2}=s_4$.

(iv) Under $(P^{(2)}_\delta,P_0)$, $\E D_3=0$ by Lemma~\ref{lem:rank2moments}(a), so by Chebyshev's inequality $\Prob(|D_3|\ge\lambda_3)\le\Var D_3/\lambda_3^2=\alpha\,m(\Var T_p+V_0)/(2mV_0)\le\alpha(1+\rho_3)/2$, and $\rho_3\to1$ when $\eta/q\to0$ and $t^2/n\to0$. The central limit statement is proved in Remark~\ref{rem:trianglenormal}. \qed

\subsection{Theorem~\ref{thm:trees}: invariance of tree statistics}\label{app:trees}

For a graph $F$ with at least one edge let $\Cop_n(F)$ be its set of copies in $K_n$, $N_F=|\Cop_n(F)|$, and define the normalized signed count
$Z_F(G)=(N_Fq^{e(F)})^{-1/2}\sum_{\varphi\in\Cop_n(F)}(A-p)^\varphi$.

\begin{theorem*}[Restatement of Theorem~\ref{thm:trees}]
\begin{itemize}
\item[(i)] Let $W$ be a graphon with $d_W(x)=\int_0^1W(x,y)\,dy=d$ for a.e.\ $x$, and $H$ a bounded symmetric measurable function with $d_H\equiv0$ a.e.\ and $0\le W+H\le1$. Then $t(T,W+H)=t(T,W)=d^{\,e(T)}$ for every finite forest $T$. Consequently, for all $n\ge v(T)$, $\E\,\tinj(T,G(n,W+H))=\E\,\tinj(T,G(n,W))=d^{\,e(T)}$; under both $G(n,W)$ and $G(n,W+H)$ every vertex degree has law $\Bin(n-1,d)$ and the edge count has mean $Nd$ and variance $Nd(1-d)$.
\item[(ii)] Fix $p\in(0,1)$, $t>0$, $m\ge1$, pairwise non-isomorphic trees $T_1,\dots,T_r$ with at least one edge, and let $\delta=\delta_n=(tq/n)^{1/2}$ (admissible for $n\ge t/q$). Under $P_0$ and under $P_\delta$, $(Z_{T_1},\dots,Z_{T_r})\to N(0,I_r)$ in law. Moreover, under $P_\delta$, $(Z_{T_1},\dots,Z_{T_r},Z_{K_3})\to N\big((0,\dots,0,\sqrt{t^3/6}),I_{r+1}\big)$. Hence for $2m$ independent graphs the vector $(Z_{T_j}(G_\ell),Z_{T_j}(G'_\ell))_{j,\ell}$ converges to $N(0,I_{2mr})$ under $H_0$ and $H_1$, every test $\ind\{Z\in\mathcal A\}$ with $\partial\mathcal A$ Lebesgue-null has the same limiting rejection probability under both, and the one-sided signed-triangle z-test $\ind\{D_3\ge z_\alpha(2m\binom n3q^3)^{1/2}\}$ has asymptotic power $1-\Phi(z_\alpha-\sqrt{mt^3/12})$.
\end{itemize}
\end{theorem*}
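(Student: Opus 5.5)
For part (i), I would argue by induction on the number of edges of the forest, removing one leaf at a time. If $T$ has no edges, the claim $t(T,\cdot)=1=d^0$ is trivial. Otherwise, let $u$ be a leaf of $T$ with neighbour $w$, and write $T'=T-u$. In the homomorphism integral for $t(T,W+H)$, integrate first over $x_u$. This replaces the factor $(W+H)(x_w,x_u)$ by $d_W(x_w)+d_H(x_w)=d$ for a.e.\ $x_w$, by the two hypotheses. Hence $t(T,W+H)=d\,t(T',W+H)$, and induction gives $d^{e(T)}$; the same computation with $H=0$ gives $t(T,W)=d^{e(T)}$. Isolated vertices and several components cause no difficulty, because the integrals factor. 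For the finite-$n$ consequence, I would use $\E\,\tinj(T,G)=t(T,W)$ from the setup. The degree of vertex $v$ is a sum of conditionally independent Bernoulli$(W(U_v,U_j))$ variables given $U_v$, and these are i.i.d.\ Bernoulli$(d_W(U_v))=$ Bernoulli$(d)$ once $U_v$ is fixed, since the $U_j$ are independent. So the degree is $\Bin(n-1,d)$ under both models. For the edge count, the mean is $Nd$. The variance needs the covariance of two edges sharing a vertex, which equals $t(P_2,\cdot)-d^2=0$, and the covariance of disjoint edges, which is $0$ by independence. Adding the diagonal terms $d(1-d)$ gives $Nd(1-d)$.

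For part (ii), I would use a conditional CLT given $\sigma$ together with Cramér--Wold. Fix a linear combination $Y=\sum_j a_jZ_{T_j}+a_0Z_{K_3}$ and write it, using Lemma~\ref{lem:latentvar}, as $Y=c_\emptyset+\sum_{R\neq\emptyset}c_R\xi^R$ with $\mu_{ij}=\delta\sigma_i\sigma_j$. The first step is to handle the conditional mean. For every copy $\varphi$ of a tree, $\mu^\varphi=\delta^{e}\sigma^{\Odd(\varphi)}$, and this has mean zero since a tree has leaves. I would show that $\sum_\varphi\mu^\varphi/(N_Tq^{e})^{1/2}\to0$ in $L^2$. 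Its second moment counts pairs of copies with equal odd sets, weighted by $\eta^{e}$; the typical pair has to overlap heavily, and I expect a bound of order $n^{-1}$ times a power of $t$. For the triangle, $c_\emptyset=\binom n3\delta^3$ deterministically, so the normalised mean is $s_n\to\sqrt{t^3/6}$.

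The second step is the conditional variances and covariances. The top terms $R=\varphi$ contribute $\sum_\varphi v^\varphi/(N_Tq^e)$. Here $v_e=q+b\sigma_i\sigma_j-\delta^2=q(1+O(n^{-1/2}))$ uniformly, so this is $1+o(1)$. Lower-order $R\subsetneq\varphi$ carry factors $\mu^{\varphi\setminus R}$ of size $\delta^{|\varphi\setminus R|}$ summed over the free vertices, and I would bound them as in Lemma~\ref{lem:trianglemoments}; each missing edge costs $\eta\sim t/n$ against at most one new free vertex per edge, giving $O(1/n)$ relative size. Distinct non-isomorphic trees, and a tree against the triangle, are orthogonal in the top terms because their $\xi^R$ differ. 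Their cross terms come only from shared subsets $R$, which is again $O(1/n)$. Taken together, these give a conditional covariance matrix tending to $I$ in probability.

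The third step, which I expect to be the main obstacle, is the conditional CLT. For the degree-$e$ polynomial chaos $\sum_R c_R\xi^R$ in independent bounded variables, I would apply the fourth-moment / de Jong theorem for degenerate $U$-statistics, or Lindeberg replacement with hypercontractivity. The required condition is that the maximal influence $\max_e\sum_{R\ni e}c_R^2v^R$ is $O(n^{-1})$, which holds in the dense regime, together with convergence of the fourth moment to $3$. I would verify the fourth moment conditionally on $\sigma$ by counting connected configurations of four copies, using the uniform bounds on $v_e$ and $|\mu_e|\le\delta$. Conditional convergence to $N(\text{mean},1)$ with a deterministic limit then yields unconditional convergence. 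Independence across the $2m$ graphs gives the joint limit. The equality of rejection limits follows from the portmanteau theorem for $\mathcal A$ with null boundary. Finally, the triangle $z$-test has $D_3/(2m\binom n3q^3)^{1/2}\to N(\sqrt{m t^3/6}/\sqrt2,1)$, which gives power $1-\Phi(z_\alpha-\sqrt{mt^3/12})$.
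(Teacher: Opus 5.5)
Your part (i) is the paper's argument: leaf removal, then conditioning on $U_v$ for the degree law and $t(P_3,\cdot)=d^2$ for the edge covariances. The architecture of your part (ii) also matches the paper. You condition on $\sigma$, split each $Z_T$ via Lemma~\ref{lem:latentvar} into conditional mean, lower-order terms and top chaos, show the first two vanish in $L^2$, prove a CLT for the top chaos, and finish with Slutsky, independence and portmanteau. Your step 1 ($R=\emptyset$, matched odd sets forced by the leaves) is fine. The gap is in step 2, the bound on the terms with $\emptyset\neq R\subsetneq\varphi$, which is where the tree structure must be used.

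Write $k=e-|R|$. A copy $\varphi$ of a tree has $v-|V(R)|=k+1-c(R)$ vertices outside $V(R)$. Your count — a factor $\eta=t/n$ per missing edge against a factor $n$ per new vertex, with at most one new vertex per missing edge — therefore gives a relative contribution $\eta^k n^{k+1-c(R)}=t^kn^{1-c(R)}$. This is of order one when $R$ is connected, not $O(1/n)$. The heuristic works for cycles, as in Lemma~\ref{lem:trianglemoments}, because there $v=e$ and there is one new vertex fewer; a tree has one vertex too many.

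Take the cherry $i-j-k$ with $R=\{ij\}$. Then $c_R=\delta(\sigma_i+\sigma_j)\sum_{k\ne i,j}\sigma_k$. Without cancellation this class contributes order $t$, and only the cancellation in $\sum_k\sigma_k$ brings it down to order $t/n$.

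The missing idea has three parts:
\begin{itemize}
\item Compute $\E c_R^2=\delta^{2k}\,|\{(\varphi,\varphi'):\Odd(\varphi\setminus R)=\Odd(\varphi'\setminus R)\}|$ by averaging over $\sigma$.
\item Observe that every admissible $\varphi'$ must contain $O(\varphi,R)=\Odd(\varphi\setminus R)\setminus V(R)$. This improves the bound to $t^kn^{1-c(R)-|O(\varphi,R)|}$ up to constants.
\item Prove $c(R)+|O(\varphi,R)|\ge2$ for every $R\subsetneq E(\varphi)$. For $R=\emptyset$ the leaves are odd. If $R$ is connected, each component of $\varphi\setminus R$ meets $V(R)$ in exactly one vertex, since two would close a cycle. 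That component therefore has a leaf outside $V(R)$, which is odd.
\end{itemize}
This is Lemma~\ref{lem:treeremainder}(a)--(b). The bound is intrinsically an average over $\sigma$. For atypical $\sigma$ such as $\sigma\equiv1$ there is no cancellation and these terms are not small. So the conditional covariance tends to $I$ only in probability, and only through this $L^2$ estimate.

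A smaller point concerns step 3. De Jong's theorem is stated for homogeneous forms, but your Cram\'er--Wold combination mixes chaoses of orders $e(T_1),\dots,e(T_r)$ and $3$, so it does not apply directly. You would need a multivariate de Jong theorem for the vector of top chaoses. Alternatively, do as the paper does in Lemma~\ref{lem:moments} and compute all mixed moments directly. There, non-paired admissible tuples of copies span fewer than half of their total number of vertices, with a constant uniform in the conditional variances $v_e$. This is the same counting you planned for the fourth moment, carried out for all orders, and it gives joint convergence without an external theorem.
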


\begin{lemma}[Leaf removal]\label{lem:treeleaf}
If $U$ is a graphon with $d_U\equiv d$ a.e., then $t(T,U)=d^{\,e(T)}$ for every finite forest $T$.
\end{lemma}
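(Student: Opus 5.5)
We argue by induction on the number of edges of $T$, removing one leaf at a time. For $e(T)=0$ the forest consists of isolated vertices and $t(T,U)=1=d^0$, since each vertex contributes the integral of the constant $1$ over $[0,1]$. Isolated vertices can be discarded in general, because they contribute a factor $1$. So suppose $e(T)\ge1$. Every finite forest with at least one edge has a leaf $v$, that is, a vertex of degree one, with unique neighbour $u$. Let $T'=T-v$, obtained by deleting $v$ and its edge $uv$. Then $T'$ is again a forest, with $e(T')=e(T)-1$.

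Write the homomorphism density as
\[
t(T,U)=\int_{[0,1]^{V(T)}}\prod_{\{a,b\}\in E(T)}U(x_a,x_b)\,dx .
\]
The variable $x_v$ occurs in exactly one factor, $U(x_u,x_v)$. By Fubini's theorem (the integrand is bounded and measurable, so Tonelli applies), we integrate out $x_v$ first:
\[
\int_0^1U(x_u,x_v)\,dx_v=d_U(x_u)=d\quad\text{for a.e. }x_u .
\]
Changing the integrand on a null set of $x_u$ does not affect the outer integral. Hence $t(T,U)=d\,t(T',U)$, and the induction hypothesis gives $t(T',U)=d^{e(T)-1}$. Therefore $t(T,U)=d^{e(T)}$.

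The only point needing care is the a.e. qualifier. The set of $x_u$ where $d_U(x_u)\ne d$ has measure zero in the coordinate $x_u$. Its preimage in $[0,1]^{V(T)}$ is then a product of a null set with a cube, which is null, so the substitution is legitimate. Beyond this there is no real obstacle. For the application in Theorem~\ref{thm:trees}(i) we need $U=W+H$ to be a graphon, which is guaranteed by $0\le W+H\le1$. Its degree function is $d_W+d_H=d+0=d$ a.e. Applying the lemma to both $W$ and $W+H$ then yields $t(T,W+H)=t(T,W)=d^{e(T)}$.
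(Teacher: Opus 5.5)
Your proof is correct and is the same argument as the paper's: induction on $e(T)$, deleting a leaf, integrating out its variable by Fubini to get $d_U(x_u)=d$ a.e., and concluding $t(T,U)=d\,t(T-v,U)$. Your explicit justification of the a.e.\ substitution is a detail the paper leaves implicit.
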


\begin{proof}
Induction on $e(T)$. If $e(T)=0$, $t(T,U)=1$. Otherwise $T$ has a leaf $l$ with unique neighbour $w$. By Fubini, integrating $x_l$ first, $\int U(x_w,x_l)\,dx_l=d_U(x_w)=d$ for a.e.\ $x_w$, so $t(T,U)=d\cdot t(T-l,U)$, where $T-l$ is the forest obtained by deleting $l$ and its edge.
\end{proof}

\paragraph{Proof of (i).}
$U:=W+H$ is a graphon and $d_U=d_W+d_H=d$ a.e., so Lemma~\ref{lem:treeleaf} applies to $W$ and to $U$. For $n\ge v(T)$ and an injective $\phi:V(T)\to[n]$, the latent values $(U_{\phi(u)})_{u}$ are i.i.d.\ uniform, so $\E\prod_{uv\in E(T)}A_{\phi(u)\phi(v)}=t(T,U)$ and $\E\,\tinj(T,G(n,U))=t(T,U)$. For the degree of vertex $i$, condition on $U_i=x$. Then the pairs $(U_j,A_{ij})_{j\ne i}$ are i.i.d.\ with $\Prob(A_{ij}=1\mid U_i=x)=d_U(x)=d$, so $\deg(i)\sim\Bin(n-1,d)$ independently of $U_i$. For the edge count, $\Var A_e=d(1-d)$, $\Cov(A_e,A_f)=t(P_3,U)-d^2=0$ if $e,f$ share one vertex ($P_3$, the path with two edges, is a tree), and $A_e,A_f$ are independent if $e\cap f=\emptyset$. \qed

\begin{remark}\label{rem:treesscope}
(a) The planted partition is $G(n,W+H)$ with $W\equiv p$ and $H=\delta\,s\otimes s$, $s=\ind_{[0,1/2)}-\ind_{[1/2,1)}$, so (i) contains the identities $\E_\delta(\#\text{edges})=Np$ and $\Var_\delta(\#\text{edges})=Nq$, which numerical check (ix) confirms by enumeration. The first difference in the moments of the edge count appears in the third central moment, $6\binom n3\delta^3$, and for the rank-two model only in the fourth, $144\binom n4\delta^4$.

(b) Only \emph{injective} densities are invariant. Homomorphism densities of the sampled graph include non-injective maps whose images may contain cycles (for example, a path with three edges whose endpoints coincide), and joint laws of several degrees are not invariant. For irregular $W$, trees may detect a zero-degree $H$ at first order. For the path $P_4$ with three edges the first-order coefficient is $\langle H,d_W\otimes d_W\rangle$, which is $1.56\cdot10^{-4}$ for the SMOOTH null and the $H_0$ of the experiments (numerical check (xi)).

(c) Message-passing networks and colour refinement on graphons are characterized by tree homomorphism densities \citep{boker2023fine,boker2023wl}. By (i), the graphons $W$ and $W+H$ have identical tree homomorphism densities, so by that characterization no message-passing network and no colour-refinement statistic separates them at the level of graphons.
\end{remark}

\begin{lemma}[Method of moments for signed subgraph chaoses]\label{lem:moments}
Let $F_1,\dots,F_r$ be pairwise non-isomorphic connected graphs with at least one edge, $v_j=v(F_j)$, $e_j=e(F_j)$, and let $q>0$. Let $(\xi_e)_{e\in E_n}$ be independent with $\E\xi_e=0$, $|\xi_e|\le1$ and $\E\xi_e^2=v_e$. Put
\[
W_j=(N_{F_j}q^{e_j})^{-1/2}\sum_{\varphi\in\Cop_n(F_j)}\xi^\varphi,\qquad s_j=(N_{F_j}q^{e_j})^{-1}\sum_{\varphi\in\Cop_n(F_j)}v^\varphi .
\]
For every $k=(k_1,\dots,k_r)\in\mathbb N^r$ there is $C<\infty$ depending only on $F_1,\dots,F_r$, $k$ and $q$ (not on $n$ or on the law of $\xi$) such that for $n\ge2\max_jv_j$,
\[
\Big|\E\prod_{j}W_j^{k_j}-\prod_jM_{k_j}\,s_j^{k_j/2}\Big|\le C\,n^{-1/2},
\]
where $M_k=\E Z^k$ for $Z\sim N(0,1)$ ($M_k=(k-1)!!$ for even $k$ and $M_k=0$ for odd $k$).
\end{lemma}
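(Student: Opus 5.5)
We prove the bound by expanding the mixed moment into a sum over tuples of copies, organised by the overlap graph of those copies. Write $K=\sum_jk_j$ and list the factors as $W_{j_1},\dots,W_{j_K}$. Then
\[
\E\prod_jW_j^{k_j}=\prod_j(N_{F_j}q^{e_j})^{-k_j/2}\sum_{\varphi_1,\dots,\varphi_K}\E\prod_{a=1}^K\xi^{\varphi_a},
\]
where $\varphi_a\in\Cop_n(F_{j_a})$. By independence and centring, a tuple contributes only if every edge of $\bigcup_a\varphi_a$ is covered at least twice. To such a tuple we attach the graph $\Gamma$ on $[K]$ with $a\sim b$ iff $\varphi_a,\varphi_b$ share an edge. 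We also attach the isomorphism type of the labelled union, which involves only finitely many shapes depending on $F_1,\dots,F_r$ and $k$. For a fixed shape with $V$ vertices, the number of tuples is at most $n^V$. Since $|\xi_e|\le1$, each expectation is bounded by $1$, and the normalisation is $\asymp n^{-\sum_av_{j_a}/2}$ with constants depending on $q$ and the $F_j$. So it suffices to compare $V$ with $\frac12\sum_av_{j_a}$.

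\emph{Counting step.} Consider a connected component of $\Gamma$ with $c$ copies. Because each $F_j$ is connected and each copy shares an edge with another copy of the component, the component's union has at most $\sum_{a}v_{j_a}-2(c-1)$ vertices, and in general fewer. We need the sharper claim that its vertex count is at most $\frac12\sum_a v_{j_a}$, with equality only when $c=2$ and the two copies coincide, $\varphi_a=\varphi_b$. For $c=2$ this holds since two distinct copies sharing an edge but each needing every edge doubly covered must be equal. For $c\ge3$ we use double coverage: every edge lies in at least two copies. We then argue that a component of $c\ge3$ connected copies covering each edge twice spans at most $\frac12\sum v_{j_a}-\frac12$ vertices. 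To do this, order the copies by a spanning tree of $\Gamma$. Each copy beyond the first adds at most $v-2$ new vertices, and the doubling constraint forces a deficit elsewhere. This is the step I expect to be the main obstacle. I would attack it by assigning each vertex of the union to the copies containing it and showing that the average multiplicity is at least $2$, with strict excess unless the copies pair up identically. Pairing up identically forces $c=2$ per block, because distinct copies of connected graphs sharing an edge cannot both be doubly covered without a third overlap. Since $V$ is an integer, the deficit $\frac12$ gives an $n^{-1/2}$ loss. Any non-perfect-matching $\Gamma$ therefore contributes $O(n^{-1/2})$ in total, summing over the finitely many shapes.

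\emph{Main term.} The remaining tuples are those where $\Gamma$ is a perfect matching of identical copies, $\varphi_a=\varphi_b$ for matched pairs. This forces $j_a=j_b$, since non-isomorphic graphs have no common copy. The pairs must also be mutually edge-disjoint; otherwise $\Gamma$ would have extra edges. For such a tuple the expectation is exactly $\prod_{\text{pairs}}v^{\varphi}$. Summing over the pairings of the $k_j$ factors of each type $j$ gives $(k_j-1)!!$ pairings when $k_j$ is even and none when $k_j$ is odd. Summing over the copies, ignoring edge-disjointness, gives exactly $\prod_jM_{k_j}s_j^{k_j/2}$. The tuples we wrongly included, in which distinct pairs share an edge or a vertex structure, are again overlapping shapes with $V\le\frac12\sum v_{j_a}-1$. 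Their contribution is $O(n^{-1})$, using $v^\varphi\le1$ and $s_j\le q^{-e_j}$.

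Collecting the finitely many shapes gives the constant $C$, which depends only on the $F_j$, $k$ and $q$. The requirement $n\ge2\max_jv_j$ is used only so that copy counts satisfy $N_{F_j}\ge c\,n^{v_j}$ uniformly, which keeps the normalisation comparable to $n^{-v_j/2}$.
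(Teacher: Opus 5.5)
Your proposal is correct and follows the paper's proof: you expand into tuples of copies, keep only tuples in which every edge is covered at least twice, show $|V(U)|\le\frac12\sum_a v_{j_a}$ with equality exactly for vertex-disjoint pairs of identical copies, and recover $\prod_j M_{k_j}s_j^{k_j/2}$ from the paired tuples after dropping the disjointness constraint at cost $O(n^{-1})$. The step you flag as the main obstacle is only a few lines in the paper and needs neither the spanning tree nor the components of $\Gamma$: each vertex $x$ of $\varphi_a$ lies on an edge of $\varphi_a$, and that edge lies in a second copy, so every vertex has multiplicity at least two; if all multiplicities equal two, every edge $xy$ of $\varphi_a$ lies in the unique other copy through $x$, and connectedness of $F_{j_a}$ then forces the two copies to coincide.
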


\begin{proof}
We expand the mixed moment as a sum over tuples of copies, show that only tuples whose copies coincide in pairs contribute at leading order, and count the remaining tuples. Let $K=\sum_jk_j$, list the factors as $i=1,\dots,K$ with shapes $j(i)$, and let $V=\sum_iv_{j(i)}$. Then $\E\prod_jW_j^{k_j}=\prod_j(N_{F_j}q^{e_j})^{-k_j/2}\sum\E\prod_{i=1}^K\xi^{\varphi_i}$, the sum over tuples $(\varphi_1,\dots,\varphi_K)$ with $\varphi_i\in\Cop_n(F_{j(i)})$. For a tuple let $U=\bigcup_i\varphi_i$ and $m_e=|\{i:e\in\varphi_i\}|$. By independence $\E\prod_i\xi^{\varphi_i}=\prod_{e\in U}\E\xi_e^{m_e}$, which vanishes unless $m_e\ge2$ for all $e\in U$ (call such tuples admissible). For admissible tuples $|\E\prod_i\xi^{\varphi_i}|\le\prod_{e\in U}\E\xi_e^2\le\prod_e v_e\le1$, using $|\xi_e|\le1$ (so $v_e\le1$).

\emph{Claim.} For an admissible tuple, $|V(U)|\le V/2$, with equality iff the tuple is \emph{paired}: $[K]$ splits into pairs $\{i,i'\}$ with $\varphi_i=\varphi_{i'}$ and the vertex sets of different pairs are disjoint. To see this, let $x\in V(\varphi_i)$. As $F_{j(i)}$ has no isolated vertex, some $e\ni x$ lies in $\varphi_i$, and as $m_e\ge2$, $e$ lies in some $\varphi_{i'}$, $i'\ne i$. So every vertex of $U$ lies in at least two of the $\varphi_i$, whence $V=\sum_i|V(\varphi_i)|\ge2|V(U)|$. In case of equality every vertex lies in exactly two copies. Take $i$ and $x\in V(\varphi_i)$, and let $\varphi_{i'}$ be the other copy containing $x$. Every edge $xy\in\varphi_i$ lies in a second copy, which contains $x$ and hence is $\varphi_{i'}$. So $y\in V(\varphi_{i'})$, and the two copies containing $y$ are again $\varphi_i,\varphi_{i'}$. By connectedness of $F_{j(i)}$, $E(\varphi_i)\subseteq E(\varphi_{i'})$, and symmetrically $E(\varphi_{i'})\subseteq E(\varphi_i)$. So $\varphi_i=\varphi_{i'}$, their common vertex set meets no other copy, and $j(i)=j(i')$ because the $F_j$ are non-isomorphic. The converse is clear.

\emph{Counting.} A tuple is determined by $K$ injective maps from the vertex sets of the $F_{j(i)}$ into $[n]$. Grouping these maps by the partition of the disjoint union of the vertex sets that they induce, we see that the number of tuples with $|V(U)|=u$ is at most $c\,n^u$ with $c$ depending only on the $F_j$ and $k$. Since $N_{F_j}=\ff n{v_j}/|\mathrm{Aut}F_j|\ge(n/2)^{v_j}/v_j!$ for $n\ge2v_j$, the normalizing factor is at most $c'\,n^{-V/2}q^{-\sum_jk_je_j/2}$.

Admissible tuples that are not paired have $|V(U)|\le(V-1)/2$ and contribute at most $c\,c'\,n^{-1/2}q^{-\sum_jk_je_j/2}$ in total. For a perfect matching $\pi$ of $[K]$ pairing indices of equal shape, the paired tuples with pairing $\pi$ contribute $\prod_{P\in\pi}\E(\xi^{\varphi_P})^2=\prod_{P\in\pi}v^{\varphi_P}$ each (the pairs have disjoint edge sets), and different $\pi$ give disjoint sets of tuples. Dropping the disjointness constraint between different pairs adds tuples with at most $V/2-1$ vertices, which after normalization contribute $O(n^{-1})$. Without the constraint the sum over tuples factorizes and equals $\prod_{P\in\pi}N_{F_{j(P)}}q^{e_{j(P)}}s_{j(P)}$. After normalization the paired contribution is therefore $\sum_\pi\prod_{P\in\pi}s_{j(P)}+O(n^{-1})$, and $\sum_\pi\prod_Ps_{j(P)}=\prod_jM_{k_j}s_j^{k_j/2}$ because a set of size $k_j$ has $M_{k_j}$ perfect matchings.
\end{proof}

\begin{lemma}[Reduction to the chaos under the planted partition]\label{lem:treeremainder}
Let $G\sim P_\delta$, $\mu_{ij}=\delta\sigma_i\sigma_j$, $\xi_e=A_e-p-\mu_e$ and $\bar v_1=q+\delta|1-2p|$. For a graph $F$ with at least one edge let $W_F=(N_Fq^{e(F)})^{-1/2}\sum_{\varphi\in\Cop_n(F)}\xi^\varphi$.
\begin{itemize}
\item[(a)] Let $\varphi$ be a tree with at least one edge and $R\subsetneq E(\varphi)$, and let $O(\varphi,R)=\Odd(\varphi\setminus R)\setminus V(R)$. Then $c(R)+|O(\varphi,R)|\ge2$ (with $c(\emptyset)=0$).
\item[(b)] For a tree $T$ with $e\ge1$ edges and $v=e+1$ vertices, $\E_\delta Z_T=0$ and
$\E_\delta(Z_T-W_T)^2\le2^ev^v\,\max(1,\bar v_1/q)^e\max(1,t)^e\,n^{-1}$.
\item[(c)] For the cycle $C_k$ ($k\ge3$), $\E_\delta Z_{C_k}=(N_{C_k}\eta^{k})^{1/2}$ and $\E_\delta(Z_{C_k}-\E_\delta Z_{C_k}-W_{C_k})^2\le2^kk^k\max(1,\bar v_1/q)^k\max(1,t)^k\,n^{-1}$.
\end{itemize}
\end{lemma}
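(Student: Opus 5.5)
The three parts will be handled in order, with (a) a purely combinatorial fact that feeds the exponent count in (b) and (c).

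\emph{Part (a).} Contract every connected component of $(V(R),R)$ to a single vertex. Because $\varphi$ is a tree and $R$ is a subforest, the contracted graph is again a tree, with edge set $\varphi\setminus R$. This edge set is non-empty since $R\subsetneq E(\varphi)$, so the contracted tree has at least two leaves. Each leaf is one of two kinds:
\begin{itemize}
\item a contracted component of $R$, counted in $c(R)$;
\item an original vertex outside $V(R)$ of degree one in $\varphi\setminus R$, which lies in $O(\varphi,R)$.
\end{itemize}
Distinct leaves give distinct contributions, so $c(R)+|O(\varphi,R)|\ge2$. When $R=\emptyset$ this is the statement that a tree with an edge has two leaves of odd degree.

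\emph{Part (b).} I will apply Lemma~\ref{lem:latentvar} with $\Lambda=\sigma$, $\mu_{ij}=\delta\sigma_i\sigma_j$ and $\mathcal S=\Cop_n(T)$, which gives $\sum_\varphi(A-p)^\varphi=\sum_Rc_R\xi^R$.

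First I identify $W_T$ inside this expansion. If $R$ is itself a copy of $T$, then the only $\varphi\supseteq R$ with $e$ edges is $\varphi=R$, so $c_R=1$. Hence $(N_Tq^e)^{1/2}(Z_T-W_T)=\sum_{R\in\mathcal R}c_R\xi^R$, where $\mathcal R$ consists of $\emptyset$ and the non-empty proper edge subsets of copies.

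Next I compute the coefficients. Since $\mu^{\varphi\setminus R}=\delta^{|\varphi\setminus R|}\sigma^{\Odd(\varphi\setminus R)}$ by \eqref{eq:oddset}, and a tree always has odd (leaf) vertices, $\E c_\emptyset=0$. Together with $\E[\xi^R\mid\sigma]=0$ for $R\ne\emptyset$, this gives $\E_\delta Z_T=0$.

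Orthogonality of the $\xi^R$ given $\sigma$ and the bound $v^R\le\bar v_1^{|R|}$ then give
\[
(N_Tq^e)\,\E_\delta(Z_T-W_T)^2\le\sum_{R\in\mathcal R}\bar v_1^{|R|}\,\E c_R^2 .
\]
The second moment expands as
\[
\E c_R^2=\sum_{\varphi,\varphi'\supseteq R}\delta^{2(e-|R|)}\,\ind\bigl[\Odd(\varphi\setminus R)=\Odd(\varphi'\setminus R)\bigr].
\]

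The main step is counting these pairs. Fix $R$ with $|V(R)|$ vertices; there are at most $n^{|V(R)|}$ choices of $R$. Then $\varphi$ adds $v-|V(R)|$ new vertices. The copy $\varphi'$ must contain every vertex of $O(\varphi,R)$, which lie outside $V(R)$, so it adds at most $v-|V(R)|-|O(\varphi,R)|$ genuinely new vertices. Using $\delta^2=tq/n$, $|V(R)|-|R|=c(R)$ for a forest, and $N_T\asymp n^v$, the power of $n$ in each term after normalization is
\[
|V(R)|+2(v-|V(R)|)-|O|-(e-|R|)-v=1-c(R)-|O|\le-1
\]
by (a). The remaining factors are bounded as follows:
\begin{itemize}
\item the powers of $q$ combine to $(\bar v_1/q)^{|R|}$;
\item the powers of $t$ are $t^{e-|R|}\le\max(1,t)^e$;
\item the number of subsets $R$ per copy is at most $2^e$;
\item the choices of vertex identifications are at most $v^v$.
\end{itemize}
Together these give the stated bound.

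\emph{Part (c).} The expansion is the same with $\mathcal S=\Cop_n(C_k)$. Now $\mu^{C}=\delta^k$ for every cycle, because all vertices of a cycle are even, so $c_\emptyset=N_{C_k}\delta^k$ is deterministic. This gives the mean $(N_{C_k}\eta^k)^{1/2}$. For a non-empty proper $R\subset C$, the set $R$ is a union of $c(R)\ge1$ paths. Repeating the count from (b) with $v-e=0$ instead of $1$ gives the exponent $-c(R)-|O|\le-1$, and the same constants follow.

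I expect the main obstacle to be the bookkeeping in this counting step. One must check that coincidences among new vertices of $\varphi$ and $\varphi'$ beyond the forced ones only lower the exponent, so that the crude factor $v^v$ (respectively $k^k$) legitimately absorbs every way of identifying vertices.
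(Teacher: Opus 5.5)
Your proof is correct. For (b) and (c) it is the paper's argument: you expand with Lemma~\ref{lem:latentvar}, peel off the chaos $W_F$ (the sets $R$ that are copies, with $c_R=1$), and write $\E c_R^2$ as $\delta^{2(e-|R|)}$ times the number of pairs $(\varphi,\varphi')$ with $\Odd(\varphi\setminus R)=\Odd(\varphi'\setminus R)$. You then use that $\varphi'$ must contain $B=V(R)\cup O(\varphi,R)$, and (a) gives the exponent $1-c(R)-|O(\varphi,R)|\le-1$ for trees, or $-c(R)-|O|$ for cycles.

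The one genuine difference is in (a). You contract the components of $R$ to get a tree with edge set $\varphi\setminus R$, and each of its at least two leaves is either a component of $R$ or a vertex of $O(\varphi,R)$. This handles every case at once. The paper instead splits into $c(R)\ge2$, $R=\emptyset$ and $c(R)=1$. In the last case it shows that a component of $\varphi\setminus R$ meets $V(R)$ in exactly one vertex, so it has a leaf outside $V(R)$. Your argument is shorter and more uniform; the paper's avoids invoking the fact that contracting subtrees of a tree leaves a simple tree.

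The bookkeeping you flag as the obstacle disappears if you organize the sum as the paper does, as $\frac1{N_T}\sum_{\varphi}\sum_{R\subsetneq\varphi}$. The factor $1/N_T$ is then just an average over $\varphi$. For fixed $(\varphi,R)$, the number of copies $\varphi'$ containing the $|B|$ prescribed vertices is at most $v^{|B|}n^{v-|B|}\le v^v n^{v-|B|}$. No coincidences need to be tracked, and the constant $2^ev^v$ comes out exactly. Counting $R$ by $n^{|V(R)|}$ and dividing by $N_T\asymp n^v$, as you wrote it, gives the right order $n^{-1}$ but only up to an unspecified constant.
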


\begin{proof}
Part (a) is a combinatorial fact about the subforests of a tree. Parts (b) and (c) bound the terms of the expansion in Lemma~\ref{lem:latentvar} that lie outside the chaos $W_F$, and each of these terms carries a factor of order $1/n$, by (a) for trees and because $c(R)\ge1$ for cycles.

(a) If $c(R)\ge2$ there is nothing to prove. If $R=\emptyset$, $O=\Odd(\varphi)$ contains the leaves of $\varphi$, of which there are at least two. If $c(R)=1$, let $K$ be a connected component of the forest $\varphi\setminus R$ (non-empty since $R\ne E(\varphi)$). $K$ meets $V(R)$: otherwise $K$ would be vertex-disjoint from $R$ and from the other components, contradicting connectedness of $\varphi$. $K$ meets $V(R)$ in at most one vertex: two common vertices would be joined by a path in $K$ and an edge-disjoint path in the connected $R$, creating a cycle in the tree $\varphi$. $K$ is a tree with at least one edge, so it has at least two leaves, one of which, $y$, is not in $V(R)$. Since $y$ has degree one in $\varphi\setminus R$, $y\in O(\varphi,R)$.

(b) By Lemma~\ref{lem:latentvar} with $\mathcal S=\Cop_n(T)$, $Z_T-W_T=(N_Tq^e)^{-1/2}\sum_{R}c_R\xi^R$ over edge sets $R$ strictly contained in some copy ($R=\emptyset$ included), and these terms are conditionally orthogonal to each other and to $W_T$ (no proper subset of a copy is itself a copy). Here $c_R=\sum_{\varphi\supsetneq R}\mu^{\varphi\setminus R}=\delta^{e-|R|}\sum_{\varphi\supsetneq R}\sigma^{\Odd(\varphi\setminus R)}$ by \eqref{eq:oddset}. In particular $\E[Z_T\mid\sigma]=(N_Tq^e)^{-1/2}c_\emptyset$ has mean $0$ because $\Odd(\varphi)\ne\emptyset$. By \eqref{eq:oddset}, $\E c_R^2=\delta^{2(e-|R|)}\,|\{(\varphi,\varphi'):\varphi,\varphi'\supsetneq R,\ \Odd(\varphi\setminus R)=\Odd(\varphi'\setminus R)\}|$, and $v^R\le\bar v_1^{|R|}$. Fix $\varphi$ and $R\subsetneq\varphi$. Every admissible $\varphi'$ contains the $|V(R)|+|O(\varphi,R)|$ vertices of $B=V(R)\cup O(\varphi,R)$ (as $O(\varphi,R)\subseteq\Odd(\varphi'\setminus R)\subseteq V(\varphi')$), so there are at most $v^{v}n^{v-|B|}$ of them. Hence
\[
\E_\delta(Z_T-W_T)^2\le\frac{1}{N_T}\sum_{\varphi\in\Cop_n(T)}\ \sum_{R\subsetneq\varphi}v^vn^{v-|V(R)|-|O(\varphi,R)|}\Big(\frac{\bar v_1}{q}\Big)^{|R|}\eta^{e-|R|}.
\]
With $\eta=t/n$, $v-e=1$ and $|V(R)|-|R|=c(R)$ (valid for forests, including $R=\emptyset$), the power of $n$ is $1-c(R)-|O(\varphi,R)|\le-1$ by (a). There are at most $2^e$ sets $R$, which gives (b).

(c) We repeat the computation with $\mathcal S=\Cop_n(C_k)$. Now $c_\emptyset=\sum_C\mu^C=N_{C_k}\delta^k$ is non-random, because every vertex of a cycle has even degree, and this gives the mean. For $\emptyset\ne R\subsetneq C$, $R$ is a union of $c(R)\ge1$ paths and the power of $n$ is $v-|V(R)|-|O|-(k-|R|)=-c(R)-|O|\le-1$ since $v=k$.
\end{proof}

\paragraph{Proof of (ii).}
We prove a conditional central limit theorem given $\sigma$ by the method of moments, transfer it from $W_T$ to $Z_T$, and then add the triangle.

\emph{Conditional moments.} Condition on $\sigma$. The $\xi_e$ are independent, centred, bounded by one, with variances $v_e=q+\mu_e(1-2p)-\delta^2\in[q-2\delta,q+\delta]$. With $s_j$ as in Lemma~\ref{lem:moments}, $|s_j-1|\le(1+2\delta/q)^{e_j}-1\to0$ deterministically, because $\delta=(tq/n)^{1/2}\to0$ with $q$ fixed. Lemma~\ref{lem:moments}, whose constant does not depend on the $v_e$, gives $\E[\prod_jW_{T_j}^{k_j}\mid\sigma]=\prod_jM_{k_j}+o(1)$ uniformly in $\sigma$, hence $\E_\delta\prod_jW_{T_j}^{k_j}\to\prod_jM_{k_j}$ for all $k$.

\emph{Limit law.} These are the mixed moments of $N(0,I_r)$, which is determined by its moments, so $(W_{T_1},\dots,W_{T_r})\to N(0,I_r)$ in law. By Lemma~\ref{lem:treeremainder}(b) and Slutsky's lemma the same holds for $(Z_{T_1},\dots,Z_{T_r})$. The case $\delta=0$ is the null ($Z_T=W_T$, $s_j=1$).

\emph{Adding the triangle.} Adding $F_{r+1}=K_3$ (connected and containing a cycle, hence non-isomorphic to the $T_j$) and using Lemma~\ref{lem:treeremainder}(c) with $\E_\delta Z_{K_3}=(\binom n3\eta^3)^{1/2}\to(t^3/6)^{1/2}$ gives the joint statement.

\emph{Several graphs and tests.} The $2m$ graphs are independent, so the vectors of the individual graphs converge jointly to independent limits. The portmanteau theorem gives the statement on tests. For the triangle test, $D_3/(2m\binom n3q^3)^{1/2}=(2m)^{-1/2}\big(\sum_\ell Z_{K_3}(G_\ell)-\sum_\ell Z_{K_3}(G'_\ell)\big)\to N\big((m/2)^{1/2}(t^3/6)^{1/2},1\big)$ under $H_1$ and $N(0,1)$ under $H_0$. \qed

\begin{remark}[The triangle statistic under the alternatives]\label{rem:trianglenormal}
The proof shows $Z_{K_3}-\E_\delta Z_{K_3}\to N(0,1)$ under $P_\delta$, which justifies the normal calibration used in Remark~\ref{rem:sharp} and in the experiments. Under the rank-two model the same argument applies conditionally on $(\sigma,\tau)$, because $\E[T_p\mid\sigma,\tau]=0$, the non-chaos terms have relative variance at most $6\eta r^2+6(n-1)\eta^2r\to0$ (Lemma~\ref{lem:rank2moments}(a)), and the conditional variances satisfy $|v_e-q|\le2\delta+4\delta^2\to0$. Hence $Z_{K_3}\to N(0,1)$ under $P^{(2)}_\delta$, which completes the proof of Proposition~\ref{prop:signcancel}(iv).
\end{remark}

\begin{remark}[Scope of (ii)]\label{rem:scopeii}
(a) The result holds for every fixed $t>0$, including $t>1$, where a single graph suffices to detect the planted partition by cycle statistics, so tree statistics fail to detect the difference even above the one-graph threshold. It requires $\eta=t/n\to0$. For fixed $\delta$ the edge count has the non-Gaussian limit $\sqrt{1-\eta}\,N(0,1)+\sqrt{\eta/2}\,(\chi^2_1-1)$ under $P_\delta$, although its mean and variance are exactly those of the null.

(b) The proof uses $p$ fixed only through the constant of Lemma~\ref{lem:moments}, which grows like $q^{-\sum_jk_je_j/2}$. Extending (ii) to sparse graphs ($q\to0$, $nq\to\infty$) requires a sharper count of non-paired tuples, as in \citet[arXiv version, Proposition~4.1]{banerjee2018contiguity} for signed cycles. We do not claim it.

(c) Part (ii) concerns fixed finite families of signed tree counts. Unsigned tree counts, degree moments and colour-refinement histograms are polynomials in signed forest counts, and in the dense regime their centred and normalized versions are dominated by the edge count. We expect, but do not prove, that every fixed finite family of such statistics also has the same limit under $H_0$ and $H_1$, and that the same holds for fixed-depth message-passing embeddings with Lipschitz readouts.
\end{remark}

\subsection{Theorem~\ref{thm:firstorder}: non-constant nulls and first-order visibility}\label{app:firstorder}

A graphon is a symmetric measurable $W:[0,1]^2\to[0,1]$. We realize $G(n,W)$ with independent uniform variables $U_1,\dots,U_n$ and $(V_{ij})_{i<j}$, setting $A_{ij}=\ind\{V_{ij}\le W(U_i,U_j)\}$. For a graph $F$ with $v$ vertices, $\tinj(F,G)=\ff nv^{-1}\sum_{\phi}\prod_{uw\in E(F)}A_{\phi(u)\phi(w)}$, the sum over injective $\phi:V(F)\to[n]$. For an edge $e=\{a,b\}\in E(F)$ (with a fixed orientation $(a,b)$) the \emph{edge-rooted kernel} is
\[
W_{F,e}(x,y)=\int_{[0,1]^{V(F)\setminus\{a,b\}}}\ \prod_{f\in E(F)\setminus\{e\}}W(x_f)\prod_{w\ne a,b}dx_w\ \Big|_{x_a=x,\ x_b=y},
\]
where $W(x_f)=W(x_u,x_w)$ for $f=\{u,w\}$. For symmetric $H$, $\langle H,W_{F,e}\rangle$ does not depend on the orientation. For $F=K_3$, $W_{K_3,e}(x,y)=\int W(x,z)W(z,y)\,dz$ for each edge. This kernel differs from the symmetrized two-point conditional kernel $W_H$ of \citet{chatterjee2026higher}, which includes the factor $W(x,y)$ of the rooted edge.

\begin{theorem*}[Restatement of Theorem~\ref{thm:firstorder}]
Let $F$ have $v$ vertices and $e\ge1$ edges.
\begin{itemize}
\item[(i)] For every graphon $W$ and $n\ge v$: $\E\,\tinj(F,G(n,W))=t(F,W)$ and $\Var\,\tinj(F,G(n,W))\le v^2/(4n)$.
\item[(ii)] For every graphon $W$ and symmetric measurable $H$ with $0\le W+H\le1$: $t(F,W+H)-t(F,W)=\sum_{e\in E(F)}\langle H,W_{F,e}\rangle+R$ with $|R|\le(2^e-1-e)\|H\|_2^2$.
\item[(iii)] With $D=\frac1m\sum_{\ell}\tinj(F,G_\ell)-\frac1m\sum_\ell\tinj(F,G'_\ell)$ and $\lambda=v/\sqrt{2nm\alpha}$, the test $\ind\{|D|\ge\lambda\}$ has size at most $\alpha$ under $P=Q=G(n,W)$ for every graphon $W$, and type II error at most $\beta$ under $P=G(n,W+H)$, $Q=G(n,W)$ whenever $|\Delta_F|\ge\frac{v}{\sqrt{2nm}}(\alpha^{-1/2}+\beta^{-1/2})$, in particular whenever $nm\Delta_F^2\ge v^2(1/\alpha+1/\beta)$; here $\Delta_F=t(F,W+H)-t(F,W)$. (The threshold $\sqrt{v^2/(nm\alpha)}$ is also valid; it requires $nm\Delta_F^2\ge v^2(2/\alpha+1/\beta)$.)
\item[(iv)] Let $H_0$ be symmetric and bounded with $0\le W+cH_0\le1$ for $c\in(0,c_0]$, and $\kappa_F=\sum_{e\in E(F)}\langle H_0,W_{F,e}\rangle\ne0$. For $0<c\le\min\{c_0,\,|\kappa_F|/(2(2^e-1-e)\|H_0\|_2^2)\}$, $|\Delta_F|\ge c|\kappa_F|/2$, and the test of (iii) has errors at most $(\alpha,\beta)$ against $W+cH_0$ as soon as $m\ge4v^2(1/\alpha+1/\beta)/(nc^2\kappa_F^2)$. With $t_H=n\|cH_0\|_2^2$, $m^*_{\mathrm{un}}\le1+4v^2(1/\alpha+1/\beta)\|H_0\|_2^2/(\kappa_F^2\,t_H)$. If $W$ is constant and $d_{H_0}\equiv0$, then $\kappa_F=0$ for every $F$.
\end{itemize}
\end{theorem*}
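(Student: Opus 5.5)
We prove the four parts in order, each from elementary tools, with no constant depending on $W$.

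For (i), unbiasedness is immediate: for an injective $\phi$, the latent values $(U_{\phi(u)})_u$ are i.i.d.\ uniform. Given them, the edges are independent Bernoulli variables, so $\E\prod_{uw\in E(F)}A_{\phi(u)\phi(w)}=t(F,W)$, and we average over $\phi$. For the variance we use the realization $A_{ij}=\ind\{V_{ij}\le W(U_i,U_j)\}$ and the independent blocks $Y_i=(U_i,(V_{ij})_{j<i})$, $i\in[n]$. Replacing $Y_i$ by an independent copy changes only the edges incident to $i$, and hence only the terms with $i\in\phi(V(F))$. The fraction of injective maps whose image contains $i$ is $v/n$, and each term lies in $[0,1]$, so $\tinj$ moves by at most $v/n$. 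It is the bounded-differences form of Efron--Stein, $\Var f\le\frac14\sum_i c_i^2$ with $c_i=v/n$, that gives $v^2/(4n)$; I expect this to be the one point needing care. The plain form $\frac12\sum_i\E(f-f^{(i)})^2$ only gives $v^2/(2n)$. To get the $\frac14$, condition on the other blocks and use $\Var(X)\le(\sup-\inf)^2/4$ for a variable confined to an interval of length $c_i$, together with $\Var f\le\sum_i\E\Var_i f$.

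For (ii), we expand $\prod_{f\in E(F)}(W+H)(x_f)$ over subsets $S\subseteq E(F)$, as in \eqref{eq:expansion}. The $|S|=1$ terms integrate exactly to $\sum_{e}\langle H,W_{F,e}\rangle$ by Fubini. Every term with $|S|\ge2$ is bounded as follows. Pick two edges $e_1\ne e_2$ in $S$, and bound every other factor by $1$ in absolute value: $0\le W\le1$ always, and $|H|\le1$ because $0\le W+H\le1$ and $0\le W\le1$. This leaves $\int|H(x_{e_1})||H(x_{e_2})|\,dx$. By Cauchy--Schwarz in the full integration variables, this is at most $(\int H(x_{e_1})^2)^{1/2}(\int H(x_{e_2})^2)^{1/2}=\|H\|_2^2$, since each marginal pair $x_{e_i}$ is uniform on $[0,1]^2$. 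There are $2^e-1-e$ such $S$, which gives the bound on $R$.

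For (iii), the $2m$ graphs are independent, so by (i) $\Var D\le 2\cdot\frac{1}{m}\cdot\frac{v^2}{4n}=\frac{v^2}{2nm}$. Under the null $\E D=0$, and Chebyshev gives size at most $\frac{v^2}{2nm}\lambda^{-2}=\alpha$. Under the alternative $\E D=\Delta_F$, and the variance bound still holds because (i) is uniform over graphons. If $|\Delta_F|\ge\lambda+\frac{v}{\sqrt{2nm\beta}}$, Chebyshev bounds the type II error by $\beta$. The sufficient condition $nm\Delta_F^2\ge v^2(1/\alpha+1/\beta)$ follows from $(a+b)^2\le2(a^2+b^2)$. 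The variant threshold follows in the same way.

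For (iv), apply (ii) with $H=cH_0$: $\Delta_F=c\kappa_F+R$ with $|R|\le(2^e-1-e)c^2\|H_0\|_2^2\le c|\kappa_F|/2$ under the stated restriction on $c$. Hence $|\Delta_F|\ge c|\kappa_F|/2$, and plugging this into (iii) gives the threshold on $m$; rounding up to an integer gives the bound on $m^*_{\mathrm{un}}$ after substituting $c^2=t_H/(n\|H_0\|_2^2)$. Finally, if $W\equiv p$, then $W_{F,e}\equiv p^{e-1}$ is constant, so $\langle H_0,W_{F,e}\rangle=p^{e-1}\iint H_0=p^{e-1}\int d_{H_0}=0$, and therefore $\kappa_F=0$.
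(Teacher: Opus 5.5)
Your proposal is correct and follows the paper's proof essentially step for step. For (i) you use Efron--Stein over the vertex blocks with the conditional range bound $v/n$ and $\Var X\le(\sup X-\inf X)^2/4$; for (ii) the expansion over edge subsets; for (iii) Chebyshev under both hypotheses; and for (iv) the first-order bound fed into (iii), all as in the paper. The only difference is cosmetic and in (ii): you apply Cauchy--Schwarz over all vertex variables at once, using that each $x_{e_i}$ is uniform on $[0,1]^2$. This avoids the paper's case split according to whether $e_1$ and $e_2$ share a vertex.
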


\begin{lemma}[Efron--Stein with vertex blocks]\label{lem:efronstein}
For every graphon $W$, graph $F$ with $v$ vertices and $n\ge v$: $\E\,\tinj(F,G(n,W))=t(F,W)$ and $\Var\,\tinj(F,G(n,W))\le v^2/(4n)$.
\end{lemma}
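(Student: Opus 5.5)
The mean identity comes first: by symmetry, for each injective $\phi:V(F)\to[n]$ the latent values $(U_{\phi(u)})_{u\in V(F)}$ are i.i.d.\ uniform. Given them, the edge indicators $A_{\phi(u)\phi(w)}$, $uw\in E(F)$, are independent $\Bern(W(U_{\phi(u)},U_{\phi(w)}))$, because distinct edges of $F$ map to distinct pairs. Hence $\E\prod_{uw\in E(F)}A_{\phi(u)\phi(w)}=t(F,W)$. Averaging over the $\ff nv$ injective maps gives $\E\,\tinj(F,G)=t(F,W)$.

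For the variance I would use the Efron--Stein inequality with the independent blocks $Y_i=(U_i,(V_{ij})_{j<i})$, $i=1,\dots,n$. In the realization $A_{ij}=\ind\{V_{ij}\le W(U_i,U_j)\}$ with $i>j$, $A_{ij}$ depends only on $Y_i$ and $U_j$, so the $Y_i$ are independent and $\tinj(F,G)=f(Y_1,\dots,Y_n)$. Replacing $Y_i$ by an independent copy $Y_i'$ changes only edges incident to vertex $i$. Every injective $\phi$ whose image avoids $i$ is therefore unaffected. The fraction of injective maps whose image contains $i$ is $1-\ff{n-1}{v}/\ff nv=v/n$, and each summand lies in $[0,1]$. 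So $|f(Y)-f(Y^{(i)})|\le v/n$.

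The bounded-differences form of Efron--Stein, $\Var f\le\frac14\sum_i c_i^2$ with $c_i=v/n$, then yields $\Var\,\tinj\le n\cdot v^2/(4n^2)=v^2/(4n)$. The only point needing care is getting the constant $1/4$ rather than $1/2$. With the form $\Var f\le\frac12\sum_i\E(f-f^{(i)})^2$ one gets only $v^2/(2n)$. I would instead use $\Var f\le\sum_i\E\Var_i f$, where $\Var_i$ is the conditional variance in $Y_i$ with the other blocks fixed. Conditionally, $f$ ranges over an interval of length at most $v/n$, so Popoviciu's inequality gives $\Var_i f\le (v/n)^2/4$. I expect no real obstacle beyond checking that the oscillation bound holds for every fixed choice of the other blocks, which is clear because the affected summands are exactly those using vertex $i$.
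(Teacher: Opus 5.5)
Your proposal is correct and matches the paper's proof essentially step for step. The paper uses the same independent vertex blocks $(U_j,(V_{ij})_{i<j})$ and the same count showing that a fraction $v/n$ of the injective maps is affected. It also uses the conditional-variance form $\Var f\le\sum_j\E\Var(f\mid(X_i)_{i\ne j})$ of Efron--Stein, with each conditional variance bounded by $(v/n)^2/4$, which gives the constant $1/4$.
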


\begin{proof}
For injective $\phi$ the variables $U_{\phi(u)}$ are i.i.d.\ uniform, so $\E\prod_{uw}A_{\phi(u)\phi(w)}=\E\prod_{uw}W(U_{\phi(u)},U_{\phi(w)})=t(F,W)$. For the variance, let $X_j=(U_j,(V_{ij})_{i<j})$, $j=1,\dots,n$. The $X_j$ are independent and $G$ is a function of $(X_1,\dots,X_n)$. Moreover, $A_{ik}$ ($i<k$) depends only on $(U_i,U_k,V_{ik})$, hence only on $X_i$ and $X_k$. So when $X_j$ varies and the other blocks are fixed, only the terms of $\tinj$ with $j\in\phi(V(F))$ change. There are $v\ff{n-1}{v-1}=v\ff nv/n$ such $\phi$ and each term lies in $[0,1]$. Hence, as a function of $X_j$ alone, $\tinj$ ranges in an interval of length at most $v/n$ and its conditional variance given $(X_i)_{i\ne j}$ is at most $(v/n)^2/4$. The Efron--Stein inequality $\Var f(X)\le\sum_{j=1}^n\E\Var(f(X)\mid(X_i)_{i\ne j})$ \citep{efron1981jackknife,boucheron2013concentration} gives $\Var\,\tinj\le n\cdot v^2/(4n^2)$.
\end{proof}

\begin{lemma}[First-order expansion]\label{lem:expansion}
For every graphon $W$ and symmetric measurable $H$ with $0\le W+H\le1$,
$t(F,W+H)=t(F,W)+\sum_{e\in E(F)}\langle H,W_{F,e}\rangle+R$ with $|R|\le(2^{e(F)}-1-e(F))\|H\|_2^2$.
\end{lemma}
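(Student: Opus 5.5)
We expand the product over the edges of $F$ exactly as in \eqref{eq:expansion}. Writing $x=(x_u)_{u\in V(F)}$ and $W(x_f)=W(x_u,x_w)$ for $f=\{u,w\}$, we have
\[
\prod_{f\in E(F)}\bigl(W(x_f)+H(x_f)\bigr)=\sum_{S\subseteq E(F)}\prod_{f\in S}H(x_f)\prod_{f\notin S}W(x_f).
\]
We integrate over $[0,1]^{V(F)}$. All integrands are bounded, since $0\le W\le1$ and $|H|\le1$ (because $0\le W,W+H\le1$). Fubini therefore applies. The term $S=\emptyset$ gives $t(F,W)$. For $S=\{e\}$ with $e=\{a,b\}$, we integrate first over the vertices other than $a,b$. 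By the definition of the edge-rooted kernel this gives $\iint H(x,y)W_{F,e}(x,y)\,dx\,dy=\langle H,W_{F,e}\rangle$. So $R$ is the sum of the $2^e-1-e$ terms with $|S|\ge2$, and it remains to show that each such term has absolute value at most $\|H\|_2^2$.

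Fix $S$ with $|S|\ge2$ and choose two distinct edges $f_1,f_2\in S$. We bound the integrand in absolute value by dropping every other factor. Every $W$-factor lies in $[0,1]$, and every $H$-factor with $f\ne f_1,f_2$ satisfies $|H|\le1$. Hence the term is at most $\int|H(x_{f_1})||H(x_{f_2})|\,dx$, and we now check that this is at most $\|H\|_2^2$ in every case.
\begin{itemize}
\item If $f_1,f_2$ are vertex-disjoint, the integral factorizes as $\|H\|_1^2\le\|H\|_2^2$.
\item If they share one vertex $z$, say $f_1=\{x,z\}$ and $f_2=\{z,y\}$, the integral equals $\int\bigl(\int|H(x,z)|\,dx\bigr)^2dz$. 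By Cauchy--Schwarz in $x$ this is at most $\iint H(x,z)^2\,dx\,dz=\|H\|_2^2$.
\item Since $F$ is simple, the two edges cannot share both endpoints.
\end{itemize}
In each case the term is at most $\|H\|_2^2$, and summing over the $2^e-1-e$ sets gives $|R|\le(2^e-1-e)\|H\|_2^2$.

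The only delicate point is the shared-vertex case, where Cauchy--Schwarz must be applied to the marginal $\int|H(x,z)|\,dx$ rather than factorizing. We also need $|H|\le1$ to discard the remaining $H$-factors; this follows from $W,W+H\in[0,1]$, so no separate boundedness assumption on $H$ is needed. Measurability and symmetry of $H$ are used only to make the integrals well defined and to make $\langle H,W_{F,e}\rangle$ independent of the orientation of $e$.
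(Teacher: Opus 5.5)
Your proposal is correct and follows the paper's proof essentially line by line. Like the paper, you expand over $S\subseteq E(F)$, identify the $S=\emptyset$ and $|S|=1$ terms by Fubini, and bound each of the $2^e-1-e$ terms with $|S|\ge2$ by $\|H\|_2^2$ through two chosen edges, using $|H|\le1$, Cauchy--Schwarz on the marginal when the edges share a vertex, and $(\int|H|)^2\le\|H\|_2^2$ when they are disjoint. One small correction to your closing comment: writing the shared-vertex integral as $\int\bigl(\int|H(x,z)|\,dx\bigr)^2dz$ also uses the symmetry of $H$, although applying Cauchy--Schwarz to the two marginals directly would avoid needing it.
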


\begin{proof}
Note $|H|\le1$ since $W,W+H\in[0,1]$. Expanding the product over edges,
$t(F,W+H)=\sum_{S\subseteq E(F)}\int\prod_{f\in S}H(x_f)\prod_{f\notin S}W(x_f)\,dx$.
The term $S=\emptyset$ is $t(F,W)$, and the term $S=\{e\}$ is $\langle H,W_{F,e}\rangle$ by Fubini. For $|S|\ge2$ pick two edges $e_1\ne e_2$ in $S$ and bound all other factors by one in absolute value: the term is at most $\int|H(x_{e_1})||H(x_{e_2})|\,dx$. If $e_1,e_2$ share a vertex $a$ this is $\int\big(\int|H(x_a,y)|dy\big)^2dx_a\le\|H\|_2^2$ by Cauchy--Schwarz, and if they are disjoint it is $(\int|H|)^2\le\|H\|_2^2$. There are $2^e-1-e$ sets with $|S|\ge2$.
\end{proof}

\paragraph{Proof of Theorem~\ref{thm:firstorder}.}
(i) and (ii) are Lemmas~\ref{lem:efronstein} and \ref{lem:expansion}.

(iii) The $2m$ graphs are independent, so $\Var D=(\Var_P\tinj+\Var_Q\tinj)/m\le v^2/(2nm)$ by (i), for every $P,Q$ of graphon type. Under $P=Q$, $\E D=0$ and Chebyshev's inequality gives $\Prob(|D|\ge\lambda)\le v^2/(2nm\lambda^2)=\alpha$. Under the alternative $\E D=\Delta_F$ by (i). If $|\Delta_F|\ge\lambda+(v^2/(2nm\beta))^{1/2}$, then $\Prob(|D|<\lambda)\le\Prob(|D-\Delta_F|>|\Delta_F|-\lambda)\le\beta$. The sufficient condition follows from $(a+b)^2\le2a^2+2b^2$. The variant with threshold $\sqrt2\lambda$ is analogous.

(iv) By (ii), $|\Delta_F-c\kappa_F|\le(2^e-1-e)c^2\|H_0\|_2^2\le c|\kappa_F|/2$ under the stated restriction on $c$, so $|\Delta_F|\ge c|\kappa_F|/2$, and (iii) applies once $nm c^2\kappa_F^2/4\ge v^2(1/\alpha+1/\beta)$. If $W\equiv w$ is constant, $W_{F,e}\equiv w^{e-1}$ and $\langle H_0,W_{F,e}\rangle=w^{e-1}\int\!\!\int H_0=w^{e-1}\int d_{H_0}=0$. \qed

\begin{remark}[The E12 nulls]
For $F=K_3$, $\kappa_{K_3}=3\langle H_0,W\circ W\rangle$ with $W\circ W(x,y)=\int W(x,z)W(z,y)dz$. For the difference $H_0$ of the experiments ($D_{\mathrm{IN}}=1/4$ on $[0,1/4)^2$, $-D_{\mathrm{OUT}}=-1/12$ across, $D_{\mathrm{CC}}=1/36$ on $[1/4,1)^2$, zero degree function), the zero degree of $H_0$ kills every term of $W\circ W$ that depends on one variable only, and one finds the following values.
\begin{itemize}
\item $\kappa_{K_3}=0$ for $\mathrm{ER}(1/2)$.
\item For SBM2, $W\circ W=0.25+0.01\,s(x)s(y)$ and $\kappa_{K_3}=0.03(D_{\mathrm{IN}}+2D_{\mathrm{OUT}}+D_{\mathrm{CC}})/16=1/1200$.
\item For SMOOTH, $W\circ W=0.09+0.06(x+y)+\frac{0.16}3xy$ and $\kappa_{K_3}=0.16\,\langle H_0,xy\rangle=0.16/256=1/1600$.
\end{itemize}
For $c\in\{0.3,\dots,1\}$ the remainder of Lemma~\ref{lem:expansion} is well within its bound (numerical check (xi)). At $c=1$ the second-order terms are of the same size as $c\kappa_{K_3}$ for these nulls.
\end{remark}

\begin{remark}[Regular nulls and calibration]
The bound in (i) holds for every $W$ and is tight in order when $W$ is $F$-irregular. If $W$ is $F$-regular in the sense of \citet{bhattacharya2023fluctuations}, the count of $F$ fluctuates on the smaller scale $n^{v-1}$ (compared with $n^{v-1/2}$ in the irregular case), so that $\Var\,\tinj(F,G(n,W))=O(n^{-2})$, and its limit law may contain weighted $\chi^2$ components. \citet{chatterjee2026higher} give the joint limits and a multiplier bootstrap that is consistent in both regimes. The $W$-agnostic threshold of (iii) is then conservative by a factor of order $n$, and a two-sample test calibrated at the $n^{-1}$ scale would need only $m\,n^2\Delta_F^2\gtrsim1$. We do not prove the two-sample version.
\end{remark}

\begin{remark}[Scope of (iv)]
(iv) is an upper bound. A first-order visible difference is detected by an unaligned motif test with $m=O(1/t_H)$ graphs. This is the order reached by an aligned test under the same null (Proposition~\ref{prop:alignedW} below) and the order of the aligned sample size at a constant null (Theorem~\ref{thm:price}(iii), where $t=t_H/q$), compared with $m\asymp t^{-3}$ for the first-order invisible difference of Theorem~\ref{thm:price}. We do not prove a matching lower bound for non-constant nulls (the null is then itself a mixture over latent positions, and the second-moment method of Section~\ref{app:price} does not apply directly), nor an aligned lower bound for non-constant nulls. The oracle that knows all latent positions gives only $m^*\gtrsim1/(n\,t_H)$.
\end{remark}

\begin{proposition}[An aligned test under every null]\label{prop:alignedW}
Let $W$ be a graphon and $H$ symmetric with $0\le W+H\le1$, and suppose that all $2m$ graphs share the latent positions $U=(U_1,\dots,U_n)$. Given $U$, the edges $A^\ell_{ij}$ of $G_\ell$ are independent $\Bern((W+H)(U_i,U_j))$, and the edges $A'^\ell_{ij}$ of $G'_\ell$ are independent $\Bern(W(U_i,U_j))$, $\ell=1,\dots,m$, $m\ge2$. Let $D^\ell=A^\ell-A'^\ell$, $S=\sum_{i<j}\sum_{\ell\ne\ell'}D^\ell_{ij}D^{\ell'}_{ij}$, $t_H=n\|H\|_2^2$ and $r_H=\sup_x\int H(x,y)^2\,dy/\|H\|_2^2$.
\begin{itemize}
\item[(i)] If $H=0$, then $\E[S\mid U]=0$ and $\Var(S\mid U)\le Nm(m-1)/2$ for every $W$, so the test $\ind\{S\ge\lambda\}$ with $\lambda=(Nm(m-1)/(2\alpha))^{1/2}$ has size at most $\alpha$ under every null. The test that compares $S$ with its law under independent random signs of $D^1,\dots,D^m$ is exact.
\item[(ii)] If $(m-1)\,t_H\ge4\sqrt2\,(\alpha^{-1/2}+\beta^{-1/2})$ and $m(n-1)\,t_H\ge32/\beta$, the test of (i) has power at least $1-\beta-8/((n-1)t_H)-16r_H/n$.
\end{itemize}
In particular an aligned test needs $m=O(1/t_H)$ graphs per group under every null.
\end{proposition}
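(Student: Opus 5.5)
We condition on the shared latent positions $U$ throughout and treat $S$ as a degenerate second-order U-statistic across graphs. Given $U$, write $w_{ij}=W(U_i,U_j)$ and $h_{ij}=H(U_i,U_j)$. The differences $D^\ell_{ij}$ are independent over $\ell$ and over pairs $i<j$, take values in $\{-1,0,1\}$, and satisfy $\E[D^\ell_{ij}\mid U]=h_{ij}$. Their conditional variance is $\nu_{ij}=(w_{ij}+h_{ij})(1-w_{ij}-h_{ij})+w_{ij}(1-w_{ij})\le1/2$.

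\emph{Part (i).} When $H=0$, each $D^\ell_{ij}$ is symmetric given $U$, since $A^\ell_{ij}$ and $A'^\ell_{ij}$ are i.i.d. Hence $\E[S\mid U]=0$. The products $D^\ell_{ij}D^{\ell'}_{ij}$ with $\ell<\ell'$ are conditionally orthogonal across distinct pairs $(\{i,j\},\{\ell,\ell'\})$, because each carries at least one lone centred factor. This gives
\[
\Var(S\mid U)=4\sum_{i<j}\sum_{\ell<\ell'}\nu_{ij}^2\le4N\tbinom m2\tfrac14=Nm(m-1)/2.
\]
Chebyshev's inequality, applied conditionally on $U$ and then averaged, gives size at most $\alpha$. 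For exactness, under $H_0$ the law of $(D^1,\dots,D^m)$ given $U$ is invariant under flipping the sign of any whole matrix $D^\ell$. A sign-flip randomization test conditional on the data is therefore exact.

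\emph{Part (ii).} Decompose $D^\ell=h+\xi^\ell$ with conditionally centred $\xi^\ell$. This gives
\[
S=m(m-1)\sum_{i<j}h_{ij}^2+2(m-1)\sum_{i<j}h_{ij}\sum_\ell\xi^\ell_{ij}+\sum_{i<j}\sum_{\ell\ne\ell'}\xi^\ell_{ij}\xi^{\ell'}_{ij}.
\]
The three terms are conditionally uncorrelated. The conditional mean is $\mu_U=m(m-1)\sum_{i<j}h_{ij}^2$, and
\[
\Var(S\mid U)\le 2(m-1)^2m\sum_{i<j} h_{ij}^2+Nm(m-1)/2.
\]
On a good event for $U$ where $\sum_{i<j}h_{ij}^2\ge N\|H\|_2^2/2$, Chebyshev's inequality, in the form $\mu_U\ge\lambda+(\Var(S\mid U)/\beta)^{1/2}$, needs two things. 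First, $\mu_U\gtrsim m(m-1)N t_H/(2n)$ must dominate $\lambda\asymp m\sqrt{N/\alpha}$, which is the condition $(m-1)t_H\gtrsim\alpha^{-1/2}+\beta^{-1/2}$. Second, $\mu_U^2/\beta$ must control the linear variance term, which reduces to $m N\|H\|_2^2\gtrsim1/\beta$, i.e.\ the assumption $m(n-1)t_H\ge32/\beta$. Tracking constants, with $N/n=(n-1)/2$, should produce the stated $4\sqrt2$ and $32$.

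\emph{The good event, which is the main obstacle.} We must show that $Y=\sum_{i<j}h_{ij}^2$ is at least half its mean $N\|H\|_2^2$, except with probability $8/((n-1)t_H)+16r_H/n$. We would use Chebyshev's inequality on $Y$ as a U-statistic of the i.i.d.\ $U_i$. Write $\Var Y=N\Var(H^2(U_1,U_2))+N\cdot2(n-2)\Var\bigl(\int H^2(U_1,y)\,dy\bigr)$.
\begin{itemize}
\item For the first part, $|H|\le1$ gives $\Var H^2\le\|H\|_2^2$. Relative to $(N\|H\|_2^2/2)^2$ this contributes $4/(N\|H\|_2^2)=8/((n-1)t_H)$.
\item For the second part, the variance is at most $\E(\int H^2(x,y)\,dy)^2\le r_H\|H\|_2^4$. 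This contributes $8(n-2)r_H/N\le16r_H/n$, up to checking constants.
\end{itemize}
Union-bounding this event with the conditional Chebyshev failure gives power at least $1-\beta-8/((n-1)t_H)-16r_H/n$. The final claim follows by choosing $m\asymp1/t_H$.
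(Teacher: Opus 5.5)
Your proposal is correct and follows the paper's proof essentially step for step. The common steps are the per-edge split $D^\ell=h+\xi^\ell$ giving $\Var(S\mid U)\le2m(m-1)^2\Sigma_U+Nm(m-1)/2$, exactness via the sign-flip invariance under $H=0$, and Chebyshev on $\Sigma_U$, using independence of disjoint edges and $\E\,h(U_1)^2\le r_H\|H\|_2^4$ for edges sharing a vertex. The constants you left to track do come out as stated once you split $\mu_U$ into two halves. One half absorbs $(2(m-1)\mu_U/\beta)^{1/2}$, which gives $m(n-1)t_H\ge32/\beta$. The other absorbs $\lambda+(Nm(m-1)/(2\beta))^{1/2}$, which gives the $4\sqrt2$ condition.
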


\begin{proof}
Given $U$, the variables $D^\ell_{ij}$ are independent over edges and over $\ell$. Fix an edge $e=\{i,j\}$ and write $D^\ell_e=\mu+Z^\ell$ with $\mu=H(U_i,U_j)$ and centred $Z^\ell$ of variance $s^2\le1/2$. Then
\[
Y_e:=\sum_{\ell\ne\ell'}D^\ell_eD^{\ell'}_e=m(m-1)\mu^2+2(m-1)\mu\sum_\ell Z^\ell+\sum_{\ell\ne\ell'}Z^\ell Z^{\ell'},
\]
and the last two terms are uncorrelated. Hence $\E Y_e=m(m-1)\mu^2$ and $\Var Y_e=4m(m-1)^2\mu^2s^2+2m(m-1)s^4\le2m(m-1)^2\mu^2+m(m-1)/2$. Summing over the $N$ edges, with $\Sigma_U=\sum_{i<j}H(U_i,U_j)^2$,
\[
\E[S\mid U]=m(m-1)\,\Sigma_U,\qquad\Var(S\mid U)\le2m(m-1)^2\,\Sigma_U+Nm(m-1)/2 .
\]

(i) For $H=0$ these are the stated moments, and Chebyshev's inequality gives the size. Given $U$ and $H=0$, $A^\ell$ and $A'^\ell$ are exchangeable, so the law of $(D^1,\dots,D^m)$ does not change when the $D^\ell$ are multiplied by independent random signs. This makes the sign-change test exact.

(ii) The variable $\Sigma_U$ has mean $N\|H\|_2^2$. Two edges without a common vertex give independent terms. For two edges with one common vertex the covariance is at most $\E[h(U_1)^2]\le r_H\|H\|_2^4$, where $h(x)=\int H(x,y)^2\,dy$, and $\Var H(U_1,U_2)^2\le\|H\|_2^2$ because $|H|\le1$. Hence $\Var\Sigma_U\le N\|H\|_2^2+2(n-2)N\,r_H\|H\|_2^4$, and Chebyshev's inequality gives
\[
\Prob\big(\Sigma_U<N\|H\|_2^2/2\big)\le\frac{8}{(n-1)\,t_H}+\frac{16\,r_H}{n}.
\]
On the complementary event, let $x=m(m-1)\Sigma_U\ge m(m-1)N\|H\|_2^2/2$. The second condition gives $x\ge8(m-1)/\beta$, so $(2(m-1)x/\beta)^{1/2}\le x/2$. Since $(m(m-1)N)^{1/2}\|H\|_2^2=t_H\big(m(m-1)(n-1)/(2n)\big)^{1/2}\ge(m-1)t_H/2$, the first condition gives $x/2\ge K:=(Nm(m-1)/2)^{1/2}(\alpha^{-1/2}+\beta^{-1/2})$. Therefore
\[
x-\lambda\ge\frac x2+\Big(\frac{Nm(m-1)}{2\beta}\Big)^{1/2}\ge\Big(\frac{2(m-1)x}{\beta}\Big)^{1/2}+\Big(\frac{Nm(m-1)}{2\beta}\Big)^{1/2}\ge\Big(\frac{\Var(S\mid U)}{\beta}\Big)^{1/2},
\]
and Chebyshev's inequality gives $\Prob(S<\lambda\mid U)\le\beta$. Adding the probability of the exceptional event proves (ii).
\end{proof}

\begin{proposition}[Differences orthogonal to a null of finite rank]\label{prop:orthogonal}
Let $W(x,y)=\sum_{k=1}^r\lambda_k\phi_k(x)\phi_k(y)$ be a graphon of finite rank, and let $H$ be bounded and symmetric with $0\le W+H\le1$, zero degree function and $\int H(x,y)\phi_k(y)\,dy=0$ for almost every $x$ and $k=1,\dots,r$. Then $t(P,W+H)=t(P,W)$ for every path $P$, and $t(C_k,W+H)=t(C_k,W)+t(C_k,H)$ for every cycle $C_k$, $k\ge3$. In particular the first-order coefficient $\sum_{e}\langle H,W_{F,e}\rangle$ vanishes for every path and every cycle $F$, and every cycle density moves by exactly $t(C_k,H)$, as under a constant null.
\end{proposition}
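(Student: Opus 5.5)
The plan is to view $W$ and $H$ as integral operators on $L^2[0,1]$ and to show that $H$ annihilates $\operatorname{span}\{\phi_1,\dots,\phi_r\}$, which contains the range of $W$. The hypothesis $\int H(x,y)\phi_k(y)\,dy=0$ says exactly that $H\phi_k=0$ for every $k$. Since $W=\sum_k\lambda_k\phi_k\otimes\phi_k$, this gives $HW=0$, and by symmetry of both kernels also $WH=(HW)^{*}=0$. At the level of kernels, $\int H(x,z)W(z,y)\,dz=0$ and $\int W(x,z)H(z,y)\,dz=0$ for a.e.\ $(x,y)$. The zero-degree condition says $H\mathbf 1=0$. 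This matters for paths, because their endpoints carry the constant function rather than a range element of $W$. I would take the $\phi_k$ bounded (or at least in $L^2$, using finite rank) so that Fubini applies throughout. This is harmless because $W$ is bounded.

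For paths, write $t(P_\ell,U)=\langle\mathbf 1,U^{\ell}\mathbf 1\rangle$ for the path with $\ell$ edges, with $U=W+H$. Expanding $(W+H)^\ell$ gives a sum over words in $W$ and $H$. Any word containing both letters contains an adjacent pair $HW$ or $WH$, so it vanishes. The pure word $H^\ell$ gives $\langle\mathbf 1,H^\ell\mathbf 1\rangle=0$ because $H\mathbf 1=0$. Only $W^\ell$ survives, so $t(P_\ell,W+H)=t(P_\ell,W)$.

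For cycles, $t(C_k,U)=\operatorname{tr}U^k$. This trace is finite because $W$ has finite rank and $H$ is a bounded kernel, so $H^2$ is trace class and $k\ge3$ suffices. Expanding again, every mixed word contains an adjacent $HW$ or $WH$ pair, so it is zero as an operator and in particular has zero trace. The pure words remain, so $\operatorname{tr}(W+H)^k=\operatorname{tr}W^k+\operatorname{tr}H^k$, that is, $t(C_k,W+H)=t(C_k,W)+t(C_k,H)$.

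For the first-order coefficient, the first-order terms are the words with exactly one $H$, and each of these is mixed, so it vanishes. Concretely, $\langle H,W_{F,e}\rangle$ equals $\operatorname{tr}(HW^{k-1})=0$ for a cycle and $\langle\mathbf 1,W^aHW^b\mathbf 1\rangle$ for a path. If $a,b\ge1$, this contains $WH$. If $a=0$, it becomes $\langle H\mathbf 1,\cdot\rangle=0$, and $b=0$ is handled symmetrically.

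The main obstacle is rigour in the operator identification rather than the algebra. I need to check that the homomorphism integrals coincide with these operator expressions, including the convergence of traces of products of Hilbert--Schmidt operators, and that a.e.\ kernel identities such as $HW=0$ legitimately propagate through the iterated integrals. I would handle this with Fubini on bounded functions, integrating out one internal vertex at a time, just as in Lemma~\ref{lem:treeleaf}.
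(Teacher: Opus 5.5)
Your proof is correct and follows the paper's argument: the operator identities $T_HT_W=T_WT_H=0$ and $T_H\mathbf{1}=0$, then the expansion of $(T_W+T_H)^\ell$ into words. Mixed words vanish, and for paths the pure $H$-word is killed by $T_H\mathbf{1}=0$ (your observation that a mixed word already contains an adjacent $HW$ or $WH$ pair makes the paper's cyclic rotation unnecessary). The only difference is in the first-order claim, where you read off the coefficient directly from the words with a single $H$. The paper instead notes that the hypotheses also hold for $cH$, so $t(F,W+cH)$ is either constant in $c$ or equal to $t(C_k,W)+c^k\,t(C_k,H)$, and therefore has no linear term in $c$.
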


\begin{proof}
Let $T_W$ and $T_H$ be the integral operators of $W$ and $H$ on $L^2[0,1]$. The range of $T_W$ is spanned by $\phi_1,\dots,\phi_r$, so $T_HT_W=0$, and $T_WT_H=(T_HT_W)^*=0$ because both kernels are symmetric. The zero degree function gives $T_H1=d_H=0$.

For a path with $j$ edges, $t(P,W+H)=\langle1,(T_W+T_H)^j1\rangle$. In a product of the factors $T_W$ and $T_H$ that contains $T_H$, the rightmost $T_H$ acts either on $1$ or on the range of $T_W$, so the product annihilates $1$. Hence $(T_W+T_H)^j1=T_W^j1$.

For a cycle, $t(C_k,W+H)=\mathrm{tr}\,(T_W+T_H)^k$. A product of $k$ factors that contains both operators has, after a cyclic rotation that does not change its trace, a factor $T_HT_W$ or $T_WT_H$, so its trace is zero, and $\mathrm{tr}\,(T_W+T_H)^k=\mathrm{tr}\,T_W^k+\mathrm{tr}\,T_H^k$.

The hypotheses also hold for $cH$, $c\in[0,1]$, so $t(P,W+cH)$ does not depend on $c$ and $t(C_k,W+cH)=t(C_k,W)+c^kt(C_k,H)$. Both derivatives vanish at $c=0$.
\end{proof}

\begin{remark}[Degree-corrected nulls]\label{rem:dcbm}
A degree-corrected block model $W(x,y)=\theta(x)\theta(y)B_{z(x)z(y)}$ with $K$ blocks has rank at most $K$, and its eigenfunctions lie in the span of the functions $\theta\,\ind_{\{z=k\}}$. For a degree-preserving difference $H$ orthogonal to these functions, \cref{prop:orthogonal} shows that the first-order test of Theorem~\ref{thm:firstorder} has no signal for any path or cycle, and that the triangle density moves by exactly $t(K_3,H)$, as in Theorem~\ref{thm:price}. We therefore expect the $t^{-3}$ rate to persist for such differences. The lower bound would need the second-moment method for a null that is a mixture over latent positions, which we have not carried out. A degree-preserving difference that is not orthogonal to the eigenfunctions is generically visible at first order, through the path with three edges and through the triangle, and \cref{cor:visibility} applies. Trees in which two adjacent vertices both have degree at least three can keep a first-order term in the orthogonal case. For the double star with two leaves at each centre and $H=\delta\,s\otimes s$ it equals $\delta\langle s,d_W^2\rangle^2$, which is at most $10^{-4}\delta$ for the nulls of E15 (\cref{app:e15}).
\end{remark}

\subsection{Proposition~\ref{prop:split}: exact calibration by vertex splitting}\label{app:split}

\begin{proposition*}[Restatement of Proposition~\ref{prop:split}]
Let $V_1,\dots,V_B\subseteq[n]$ be fixed, pairwise disjoint, of equal size $n_B$.
\begin{itemize}
\item[(i)] Under $G(n,W)$ with a fixed graphon $W$, the induced subgraphs $G[V_1],\dots,G[V_B]$ (relabelled increasingly by $[n_B]$) are independent with law $G(n_B,W)$.
\item[(ii)] Let $Y_1,\dots,Y_K$, $K=2mB$, be the induced subgraphs of the $2m$ graphs, $I_0\subseteq[K]$ the $mB$ indices from group one, and $S(y;I)$ a statistic satisfying $S(y\circ\pi;I)=S(y;\pi(I))$ for all permutations $\pi$ of $[K]$ (e.g.\ a difference of group means of a graph functional). With $\hat p=\binom K{mB}^{-1}\sum_{|I|=mB}\ind\{S(Y;I)\ge S(Y;I_0)\}$, $\Prob_{H_0}(\hat p\le\alpha)\le\alpha$ for all $\alpha\in[0,1]$ under every null $P=Q=G(n,W)$; the randomized p-value is exactly uniform. Since $\hat p\ge\binom K{mB}^{-1}$, a rejection at level $\alpha$ requires $\binom K{mB}\ge1/\alpha$.
\item[(iii)] Under the planted partition, $\sum_{b}T_p(G[V_b])$ has mean $B\binom{n_B}3\delta^3$ and null variance $B\binom{n_B}3q^3$; its squared signal-to-noise ratio is $B\binom{n_B}3\eta^3=\frac{(n_B-1)(n_B-2)}{6n^2}t^3\approx t^3/(6B^2)$.
\item[(iv)] If the common null law is a mixture of graphon laws (a graphon drawn at random for each graph), the $Y_i$ need not be exchangeable and the test in (ii) need not be valid.
\end{itemize}
\end{proposition*}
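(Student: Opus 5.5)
For (i) I will work with the explicit realization used in the proof of \cref{lem:efronstein}: $A_{ij}=\ind\{V_{ij}\le W(U_i,U_j)\}$ with independent uniform $U_i$ and $V_{ij}$. The induced subgraph $G[V_b]$ is a function of $(U_i)_{i\in V_b}$ and $(V_{ij})_{i<j,\ i,j\in V_b}$ only. Since the $V_b$ are pairwise disjoint, these collections of variables are disjoint, so the $G[V_b]$ are independent. After the increasing relabelling, each $G[V_b]$ is built from $n_B$ i.i.d.\ uniform positions and independent uniform edge variables in exactly the way that defines $G(n_B,W)$, which proves the law. Across the $2m$ graphs, independence is given by assumption. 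Hence under $P=Q=G(n,W)$ the $K=2mB$ induced subgraphs $Y_1,\dots,Y_K$ are i.i.d.

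For (ii) I will use the standard permutation argument. Because the $Y_i$ are i.i.d., the vector $Y$ is exchangeable: $Y\circ\pi$ has the law of $Y$ for every permutation $\pi$ of $[K]$. Let $\Pi$ be a uniformly random permutation independent of $Y$. Then $\Pi(I_0)$ is a uniform $mB$-subset. By the equivariance $S(Y\circ\Pi;I_0)=S(Y;\Pi(I_0))$, the pair $\big(Y,S(Y;I_0)\big)$ has the same law as $\big(Y\circ\Pi,S(Y;\Pi(I_0))\big)$.

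Conditionally on the multiset of values $\{S(Y;I):|I|=mB\}$, the observed statistic is a uniform draw from it. The fraction of subsets whose statistic is at least that of a uniformly drawn element is super-uniform, because with ties $\Prob(\hat p\le\alpha)\le\alpha$ follows by counting ranks. This is the only step needing care; I will state it as a lemma about the rank of a uniform element in a finite list with ties. For exact uniformity, I will break ties by an independent uniform variable $U$, setting $\hat p_{\mathrm{rand}}=\binom K{mB}^{-1}\big(\#\{S>S_0\}+U\,\#\{S=S_0\}\big)$. The bound $\hat p\ge\binom K{mB}^{-1}$ holds because $I=I_0$ is always counted, so rejecting at level $\alpha$ requires $\binom K{mB}\ge1/\alpha$. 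For the two-sided statistic, the symmetry $I\mapsto I^c$ pairs the subsets and gives the factor $2$ quoted in the main text.

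For (iii) I will apply \cref{lem:trianglemoments} to each block. Given $\sigma$, the block $G[V_b]$ is a planted-partition graph on $n_B$ vertices with the restricted signs, which are still uniform. Its signed triangle count therefore has mean $\binom{n_B}3\delta^3$ and null variance $\binom{n_B}3q^3$. Independence across blocks under $P_0$ adds the variances. The squared signal-to-noise ratio is $(B\binom{n_B}3\delta^3)^2/(B\binom{n_B}3q^3)=B\binom{n_B}3\eta^3$, which equals the stated expression after substituting $\eta=t/n$ and $n_B=n/B$.

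For (iv) I will give a counterexample: an equal mixture of $G(n,p_1)$ and $G(n,p_2)$ with $p_1\ne p_2$. Blocks from the same graph share the drawn $p$, so their edge densities are positively correlated, while blocks from different graphs are independent. Consequently the law of $Y$ is not invariant under permutations that move blocks between graphs, and I will show that the size can exceed $\alpha$. Take $m=1$ and $B$ large, with the statistic equal to the difference of mean block edge densities. Under the null, with probability $1/2$ the two graphs draw different $p$'s, and then group one is the extreme split. This gives $\Prob(\hat p\le\binom K{mB}^{-1})\ge 1/4$, which exceeds the permutation guarantee.
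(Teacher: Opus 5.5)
Your proposal is correct and follows the paper's proof in all four parts. Part (i) uses the independence of the blocks, which depend on disjoint collections of latent and edge variables. Part (ii) combines exchangeability with a rank-counting argument for the permutation $p$-value, and part (iii) applies the signed-triangle moments block by block. Part (iv) uses a random-density mixture in which blocks of the same graph are correlated; the paper supports the invalidity claim only by a simulation with $p_\Theta\sim U[0.3,0.7]$, $m=1$, $B=4$, $n=32$, whereas you argue analytically.

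There is one slip in (iv). The extreme-split event requires every block density of the denser graph to exceed every block density of the other. So you should fix $B$ and let $n_B\to\infty$, rather than take $B$ large; already $B=2$ gives level $\binom{4}{2}^{-1}=1/6<1/4$. The rejection probability is then at least $1/4-o(1)$, not at least $1/4$ for every $n$.
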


\begin{lemma}[Induced subgraphs on disjoint vertex sets]\label{lem:induced}
Statement (i) holds.
\end{lemma}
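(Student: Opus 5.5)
We realize $G(n,W)$ as in the proof of Lemma~\ref{lem:efronstein}: independent uniform $U_1,\dots,U_n$ and $(V_{ij})_{i<j}$, with $A_{ij}=\ind\{V_{ij}\le W(U_i,U_j)\}$. The plan is to write each induced subgraph as a deterministic function of its own block of primitive variables. Then independence follows from disjointness of these blocks, and the law follows from the i.i.d.\ structure inside a block.

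For $b=1,\dots,B$ let $\mathcal X_b=\big((U_i)_{i\in V_b},(V_{ij})_{i<j,\ i,j\in V_b}\big)$. Since the $V_b$ are pairwise disjoint, the vertex index sets of different blocks are disjoint. The pair sets $\{\{i,j\}:i,j\in V_b\}$ are also pairwise disjoint, because a pair inside $V_b$ and a pair inside $V_{b'}$ cannot coincide when $V_b\cap V_{b'}=\emptyset$. Hence $\mathcal X_1,\dots,\mathcal X_B$ are functions of disjoint subfamilies of an independent family, so they are independent. Each $A_{ij}$ with $i,j\in V_b$ depends only on $(U_i,U_j,V_{ij})$, which is a sub-collection of $\mathcal X_b$. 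So $G[V_b]$, relabelled by the increasing bijection $\pi_b:[n_B]\to V_b$, is a measurable function $f(\mathcal X_b)$. Independence of the $G[V_b]$ then follows.

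For the law, note that after relabelling, $\mathcal X_b$ consists of $n_B$ i.i.d.\ uniform variables $U_{\pi_b(k)}$ and $\binom{n_B}2$ i.i.d.\ uniform variables $V_{\pi_b(k)\pi_b(l)}$, $k<l$, all independent. The map $\pi_b$ is increasing, so the orientation convention $i<j$ is preserved. The relabelled graph has edges $\ind\{V_{\pi_b(k)\pi_b(l)}\le W(U_{\pi_b(k)},U_{\pi_b(l)})\}$, which is exactly the construction of $\mathbb G(n_B,W)$. No step is a genuine obstacle. The only points needing care are that the pair sets are disjoint, since no edge variable is shared between blocks, and that $W$ is fixed rather than random. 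If $W$ were random, the blocks would only be conditionally independent, which is the content of part (iv).
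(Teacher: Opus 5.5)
Your proof is correct and follows the same route as the paper. Both write each $G[V_b]$ as a function of the block $\bigl((U_i)_{i\in V_b},(V_{ij})_{i<j,\ i,j\in V_b}\bigr)$, get independence from the disjointness of these blocks, and read off the law $\mathbb G(n_B,W)$ from the construction; you only spell out details the paper leaves implicit, namely that the pair sets are disjoint and that the relabelling is increasing.
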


\begin{proof}
$G[V_b]$ is a function of $(U_i)_{i\in V_b}$ and $(V_{ij})_{i<j,\,i,j\in V_b}$. These collections are disjoint for different $b$, hence independent. The law of $G[V_b]$ is that of $G(n_B,W)$ by definition.
\end{proof}

\begin{lemma}[Validity of permutation p-values]\label{lem:permutation}
Let $Y=(Y_1,\dots,Y_K)$ be exchangeable, $1\le k<K$, $I_0=\{1,\dots,k\}$, and $S$ as in (ii). Then $\hat p=\binom Kk^{-1}\sum_{|I|=k}\ind\{S(Y;I)\ge S(Y;I_0)\}$ satisfies $\Prob(\hat p\le\alpha)\le\alpha$. With $\zeta\sim U(0,1)$ independent of $Y$, the randomized p-value $\tilde p=\binom Kk^{-1}\big(|\{I:S(Y;I)>S(Y;I_0)\}|+\zeta\,|\{I:S(Y;I)=S(Y;I_0)\}|\big)$ is uniform on $[0,1]$.
\end{lemma}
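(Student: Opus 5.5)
The plan is the standard exchangeability argument for permutation tests: condition on the multiset of values and use that the observed labelling is a uniformly random one among the $\binom Kk$ relabellings. Let $\Pi$ be a uniform random permutation of $[K]$, independent of $Y$. Exchangeability gives $Y\overset{d}{=}Y\circ\Pi$. The equivariance assumption $S(y\circ\pi;I)=S(y;\pi(I))$ then shows that the vector of statistics $(S(Y;I))_{|I|=k}$, together with the distinguished coordinate $S(Y;I_0)$, has the same law as the same vector with distinguished coordinate $S(Y;\Pi(I_0))$. Since $\Pi(I_0)$ is uniform over the $k$-subsets and independent of $Y$, it suffices to prove the claim when the distinguished index $J$ is a uniform $k$-subset drawn independently of the unordered collection of values. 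I will first verify this reduction carefully, by writing $\hat p$ as a function $f(Y)$ and checking that $f(Y\circ\Pi)$ equals the rank-type quantity built from $S(Y;\cdot)$ at index $\Pi(I_0)$.

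Given $Y$, write $s_I=S(Y;I)$ for the $M=\binom Kk$ subsets, and let $\hat p(J)=M^{-1}|\{I:s_I\ge s_J\}|$. This is the usual deterministic fact about ranks. For any $\alpha$, the number of $J$ with $\hat p(J)\le\alpha$ is at most $\lfloor\alpha M\rfloor$. To see this, order the values decreasingly. If $r$ indices satisfy $\hat p(J)\le\alpha$, take the one among them with the smallest value; at least $r$ indices have value at least its value, so $r/M\le\alpha$. Hence $\Prob(\hat p(J)\le\alpha\mid Y)\le\alpha$, and averaging over $Y$ gives the first claim.

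For the randomized p-value, fix $Y$ and group the $s_I$ into tie classes of sizes $c_1,\dots$ in decreasing order of value. For $J$ in class $r$, $\tilde p=M^{-1}(\sum_{r'<r}c_{r'}+\zeta c_r)$. Under uniform $J$ and independent uniform $\zeta$, this is uniform on the interval $[M^{-1}\sum_{r'<r}c_{r'},\,M^{-1}\sum_{r'\le r}c_{r'}]$ with probability $c_r/M$. These intervals tile $[0,1]$, so $\tilde p$ is conditionally uniform, hence uniform.

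The main obstacle is the reduction step. I need to make precise that exchangeability of $Y$ plus equivariance of $S$ really yields a distinguished index that is uniform and independent of the collection $(s_I)$. Ties and the possibly non-injective map $\pi\mapsto\pi(I_0)$ (many permutations give the same subset) need care; averaging over $\Pi$ handles both. The remaining steps are elementary counting.
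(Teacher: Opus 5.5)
Your proposal is correct and follows the paper's proof essentially step for step. The paper also composes $Y$ with a random permutation independent of $Y$: it draws a uniform $k$-subset $J$ and a permutation $\pi_J$ with $\pi_J(I_0)=J$, where you use a uniform $\Pi$. It then uses equivariance to rewrite $\hat p$ as the rank of $s_J$, and applies the same smallest-value counting argument and the same tie-class tiling for $\tilde p$. The reduction you flag as the main obstacle is just the one-line identity $\hat p(Y\circ\Pi)=\binom Kk^{-1}|\{I:S(Y;I)\ge S(Y;\Pi(I_0))\}|$, which holds because $I\mapsto\Pi(I)$ permutes the $k$-subsets.
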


\begin{proof}
Let $J$ be a uniform $k$-subset of $[K]$ independent of $Y$ and $\pi_J$ a permutation with $\pi_J(I_0)=J$. By exchangeability $Y\circ\pi_J$ has the law of $Y$, so $\hat p(Y)$ has the law of $\hat p(Y\circ\pi_J)=\binom Kk^{-1}\sum_{I}\ind\{S(Y;\pi_J(I))\ge S(Y;J)\}=\binom Kk^{-1}|\{I':S(Y;I')\ge S(Y;J)\}|$, because $\pi_J$ permutes the $k$-subsets.

Given $Y$, write $s_I=S(Y;I)$. For any finite list of reals, the number of indices $J$ with $|\{I':s_{I'}\ge s_J\}|\le a$ is at most $a$ (if $J^*$ has the smallest value among them, all of them satisfy $s_J\ge s_{J^*}$). Hence $\Prob(\hat p\le\alpha\mid Y)\le\lfloor\alpha\binom Kk\rfloor/\binom Kk\le\alpha$.

For $\tilde p$ the same substitution shows that, given $Y$, $\tilde p$ has the law of $\binom Kk^{-1}(|\{s_{I'}>s_J\}|+\zeta|\{s_{I'}=s_J\}|)$ with $J$ uniform, which is uniform on $[0,1]$ (the ranks of $s_J$ among the $\binom Kk$ values, randomized within ties).
\end{proof}

\paragraph{Proof of Proposition~\ref{prop:split}.}
(i) is Lemma~\ref{lem:induced}.

(ii) Under $P=Q=G(n,W)$ the $2m$ graphs are i.i.d. By (i) the $2mB$ induced subgraphs are i.i.d., hence exchangeable, and Lemma~\ref{lem:permutation} applies with $k=mB$. Also $\hat p\ge\binom K{mB}^{-1}$, because $I=I_0$ is counted.

(iii) Given $\sigma$, the edges inside $V_b$ are independent $\Bern(p+\delta\sigma_i\sigma_j)$ with $(\sigma_i)_{i\in V_b}$ i.i.d.\ uniform. So each $G[V_b]$ is a planted partition on $n_B$ vertices with the same $(p,\delta)$, and the $B$ blocks are independent. Lemma~\ref{lem:trianglemoments} with $n_B$ in place of $n$ gives the moments, and finally $B\binom{n_B}3\eta^3=\frac{n(n_B-1)(n_B-2)}6\cdot\frac{t^3}{n^3}$.

(iv) Take $P=Q$ the law of $G(n,p_\Theta)$ with $\Theta$ drawn afresh for each graph and $\Var p_\Theta>0$. The edge densities of two blocks of the same graph have covariance $\Var(p_\Theta)>0$, while blocks of different graphs are independent, so $(Y_1,\dots,Y_K)$ is not exchangeable. In a simulation with $p_\Theta\sim U[0.3,0.7]$, $m=1$, $B=4$, $n=32$ and the edge-density statistic (numerical check (xii)), the level-$0.05$ test rejects with probability $0.22$. Permuting whole graphs remains valid. \qed

\subsection{What is not proved}\label{app:notproved}

\begin{itemize}
\item \emph{Sharp constants.} Theorem~\ref{thm:price}(iv) holds with explicit but loose constants. The sharp aligned threshold $mt=1$ is cited from \citet{chatterjee2022multilayer} (sparse regime) and not reproved, and in the dense regime we only prove $0.87/t\lesssim m^*_{\mathrm{al}}\lesssim C/t$. The asymptotically sharp unaligned constant (Remark~\ref{rem:sharp}) relies on the likelihood-ratio limit of \citet{banerjee2018contiguity}.
\item \emph{General finite-rank differences.} Only the rank-one and the specific rank-two model are treated. The formula $\log(1+\chi^2_1)\approx\sum_{j\ge3}(\sum_i\nu_i^j)^2/(2j)$ for a general zero-degree finite-rank $H$ with normalized eigenvalues $\nu_i$ is consistent with both cases but is not proved. The Veblen argument needs the positivity of the coefficients $t_S$, which fails in general.
\item \emph{Sparse graphs.} Theorem~\ref{thm:price}(i)--(iii) and Proposition~\ref{prop:signcancel}(i)--(iii) are non-asymptotic and hold for all $p$. The distributional statements (Theorem~\ref{thm:trees}(ii), Proposition~\ref{prop:signcancel}(iv) central limit part) are proved for fixed $p$ only.
\item \emph{Beyond fixed families of signed tree counts.} The distributional statement for unsigned tree counts, for degree and colour-refinement histograms with growing support, and for message-passing embeddings at finite $n$ remains a conjecture (Remark~\ref{rem:scopeii}). Only the graphon-level statement (Remark~\ref{rem:treesscope}(c)) follows from Theorem~\ref{thm:trees}(i) and the cited characterization.
\item \emph{Non-constant nulls.} There is no lower bound for non-constant $W$, aligned or unaligned. For differences that are visible at first order we prove upper bounds of the same order $1/t_H$ for unaligned and aligned tests (Theorem~\ref{thm:firstorder}(iv) and Proposition~\ref{prop:alignedW}). For degree-preserving differences orthogonal to a null of finite rank, the $t^{-3}$ rate is a conjecture (Remark~\ref{rem:dcbm}). The improved $n^{-2}$ variance for $F$-regular nulls is cited.
\item \emph{Power of the vertex-split test.} Proposition~\ref{prop:split} proves validity and computes the signal-to-noise ratio of the within-block statistic. We do not analyse the power of the permutation test itself at small $m$.
\end{itemize}

\section{Experimental details}\label{app:experiments}
\subsection{Common settings}
All experiments ran with NumPy and SciPy on one laptop CPU (10 cores, 16\,GB memory), with four worker processes for E11--E14, seven for E15 and three for E16. All data are simulated, except the protein graphs of E16. Wall-clock times were about $24$ minutes for E11 and E11b together (E11b reuses the graph pools of E11), $11$ minutes for E12 at $n=128$ and $66$ minutes at $n=256$, and $19$ minutes for the sparse panel. E13, E14 and E14b took $2$, $5$ and $2.5$ minutes, E15 took $14$ minutes and E16 took $8$ minutes. The numerical checks of the proofs take about $2$ minutes in total. Seeds are fixed and recorded with the results. For every reported rejection rate $\hat r$ based on $R$ replicates, the Monte Carlo standard error is $(\hat r(1-\hat r)/R)^{1/2}$, which is at most $0.016$ for $R=1000$ and $0.035$ for $R=200$. The sample size $m^\ast$ is the smallest $m$ at which the power reaches $0.5$. It is obtained by log-linear interpolation of the running maximum of the power curve on the grid of $m$ values, and is reported as $>m_{\max}$ when the power never reaches $0.5$ on the grid.

\subsection{Sample complexity, sign cancellation and visibility (E11, E11b, E12)}\label{app:e11}
\paragraph{E11 (sample sizes with and without alignment).}
\begin{itemize}
\item \emph{Model.} The null is $\mathbb G(300,1/2)$, and the alternative is the planted partition with $\delta=(tq/n)^{1/2}$ and $t\in\{0.3,0.4,\dots,0.9\}$. All tests are one-sided at level $0.05$, because every signal is positive.
\item \emph{Unaligned tests} (two-sample, $m$ graphs per group):
  \begin{itemize}
  \item signed triangles with known $p$ and the exact null variance $\binom n3q^3$;
  \item signed triangles with the pooled edge density;
  \item the per-graph excess triangle density $6T/n^3-(2E/n^2)^3$.
  \end{itemize}
  The last two are calibrated by simulated nulls.
\item \emph{Aligned tests} (one $\sigma$ shared by the first sample, known $p$): the statistic $Q$ with its exact null variance, and the top eigenvalue of $(mnq)^{-1/2}\sum_\ell(A^\ell-p(J-I))$ calibrated by simulation.
\item \emph{Pool bootstrap.} Unaligned statistics are sums of independent per-graph terms. Their sampling distribution at each $m$ is approximated by resampling $m$ per-graph records with replacement from pools of $5\cdot10^5$ simulated graphs per model, with $4000$ resamples per $m$. Pools of different models share random numbers. Each pool is still an independent sample from its own model, and sharing reduces the variance of the Monte Carlo error in the pool-mean difference by a factor of $6$ to $11$.
\item \emph{Aligned simulation.} Aligned tests are simulated replicate by replicate, with $1000$ replicates per $t$. Thresholds use $2000$ null replicates and sizes use $1000$ held-out ones.
\item \emph{Checks.} All sizes lie in $[0.035,0.066]$. The exact null variances of $T_p$ and $\Csig$ are confirmed to within $0.3\%$, and every simulated mean shift agrees with its closed form to within $2.1$ standard errors.
\item \emph{Error bars.} Monte Carlo standard errors of $\log m^\ast$ come from five batches and are a few per cent. Slopes are least-squares fits with their standard errors.
\end{itemize}
The sparse panel repeats E11 with $n=2000$ and $np=15$ (\cref{fig:e11sparse}).

\paragraph{E11b (sign cancellation).} The setting is that of E11, with the rank-two model \eqref{eq:rank2} and the same values of $t$. Tests are two-sample $z$-tests with exact null variances for signed triangles and signed $4$-cycles. $\Csig$ is computed from $\tr B^4$, $\sum_i((B^2)_{ii})^2$ and $\sum_{ij}B_{ij}^4$, where $B=A-p(J-I)$. \Cref{fig:e11b} shows the results. Under the rank-one model, $4$-cycles also detect the difference, but they need more graphs than triangles (slope $\ElevenbSlopeCfourRone$).

\paragraph{E12 (order of visibility).}
\begin{itemize}
\item \emph{Model.} The nulls are ER$(1/2)$, SBM2 and SMOOTH, and the alternative is $W+cH_0$ with $H_0$ as in E13 and $c\in\{0.1,0.15,0.2,0.3,0.4,0.5,0.7,1\}$. We use $n=128$ and repeat the experiment at $n=256$ (\cref{fig:e12n256}).
\item \emph{Unaligned test.} The two-sample excess-triangle test, calibrated by simulated nulls from pools of $5\cdot10^5$ graphs per null.
\item \emph{Aligned tests.} One latent vector is shared by all $2m$ graphs of a replicate. The Frobenius statistic is $\sum_{i<j}\sum_{\ell\neq\ell'}D^\ell_{ij}D^{\ell'}_{ij}$ with $D^\ell=A^\ell-A'^\ell$. The spectral statistic is $\|\bar A-\bar A'\|_{\mathrm{op}}$. Both are calibrated by simulated nulls with fresh latent vectors.
\item \emph{Coefficients.} First-order coefficients and exact shifts of $t(K_3,\cdot)$ are computed by Gauss--Legendre quadrature, which is exact for these piecewise bilinear graphons.
\end{itemize}

\begin{figure}[h]
\centering
\includegraphics[width=0.95\linewidth]{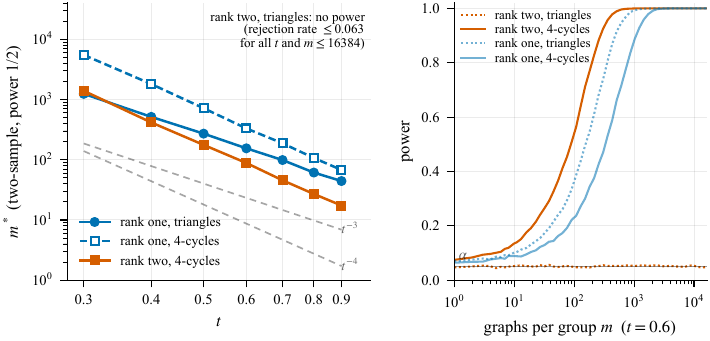}
\caption{Sign cancellation (E11b), $n=300$, $p=1/2$. Left: $m^\ast$ against $t$ for signed triangles and signed $4$-cycles under the rank-one planted partition, and for $4$-cycles under the rank-two model \eqref{eq:rank2}, where triangles never reach power $0.5$. Right: power against $m$ at $t=0.6$.}
\label{fig:e11b}
\end{figure}

\begin{figure}[h]
\centering
\includegraphics[width=0.95\linewidth]{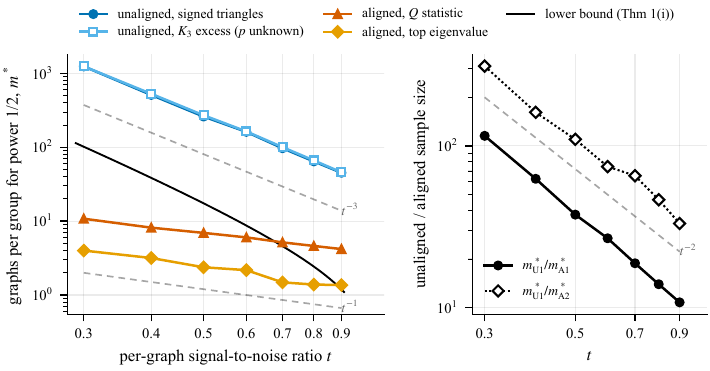}
\caption{E11 in the sparse regime ($n=2000$, average degree $15$). The layout is as in \cref{fig:price}.}
\label{fig:e11sparse}
\end{figure}

\begin{figure}[h]
\centering
\includegraphics[width=0.95\linewidth]{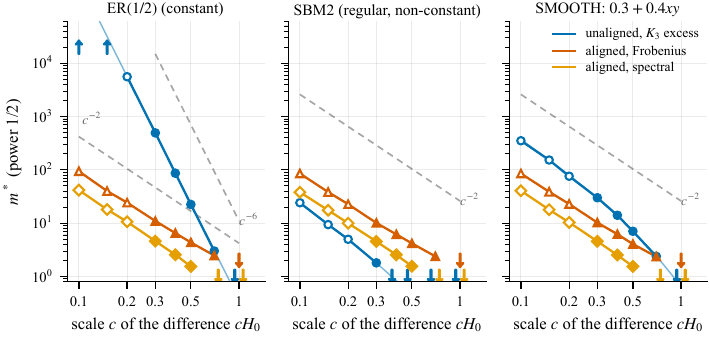}
\caption{E12 repeated with $n=256$. The layout is as in \cref{fig:visibility}.}
\label{fig:e12n256}
\end{figure}

\subsection{Calibration with few graphs (E13)}\label{app:e13}
\paragraph{Models.} The nulls are ER$(1/2)$, SBM2 with $W=0.6$ within the two halves of $[0,1]$ and $0.4$ across (regular, non-constant), and SMOOTH with $W(x,y)=0.3+0.4xy$ (irregular). The alternative adds $H_0$ to the null of the first group, where $H_0=0.25$ on $[0,\tfrac14)^2$, $-\tfrac1{12}$ on $[0,\tfrac14)\times[\tfrac14,1]$ and its reflection, and $\tfrac1{36}$ on $[\tfrac14,1]^2$. Its degree function is zero. We use $n\in\{64,128\}$ and $m\in\{1,2,3,5,10\}$, with 1000 replicates for size and 500 for power.

\paragraph{Statistic.} The per-graph statistic is $\Kexc(G)=6T/n^3-(2E/n^2)^3$ (with $T$ triangles and $E$ edges), the negative of the $T_3$ statistic of \citet{gao2017testing}. The test statistic is the difference of the group means, and all tests are two-sided at level $0.05$.

\paragraph{Methods.}
\begin{itemize}
\item \emph{Graph-label permutation}: all $\binom{2m}{m}$ splits, enumerated.
\item \emph{Welch}: normal and Student-$t$ (Satterthwaite) calibration.
\item \emph{Vertex jackknife}: leave-one-vertex-out, closed form, normal calibration.
\item \emph{Vertex split} (\cref{prop:split}): contiguous blocks, $B\in\{2,4\}$, enumeration when there are at most $5000$ splits and $999$ random splits otherwise.
\item \emph{Rate-extrapolated variance}: block variances at sizes $n/4$ and $n/2$, a fitted exponent clipped to $[1,3]$, and extrapolation to size $n$ \citep[in the spirit of][]{bertail1999subsampling}.
\item \emph{ER plug-in}: the exact variance of $\Kexc$ under $\mathbb G(n,\hat p)$, derived from its orthogonal expansion in $A_e-p$.
\item \emph{CDB}: the graphon multiplier bootstrap of \citet{chatterjee2026higher} for $t(K_2)$ and $t(K_3)$. It is applied to each graph with independent Gaussian multipliers and combined by the delta method, with $999$ draws. The regular or irregular form is chosen by their test $\sqrt n\,\hat R>1$.
\end{itemize}

\Cref{tab:calibration} summarizes the case $n=128$, $m\le3$. When the difference is visible at first order (SBM2 null), two conservative variance estimates keep useful power, namely the jackknife and a two-sample version of the multiplier bootstrap of \citet{chatterjee2026higher} (CDB). A parametric Erd\H{o}s--R\'enyi calibration has more power under the Erd\H{o}s--R\'enyi null but size up to $0.49$ under SMOOTH, and the Student-$t$ version of the Welch test is valid.

\begin{table}[h]
\centering
\caption{Calibration of the two-sample excess-triangle test with $m\le3$ graphs per group ($n=128$, two-sided level $0.05$). Size is the largest rejection rate under the three nulls ER$(1/2)$, SBM2 and SMOOTH (1000 replicates, Monte Carlo s.e.\ $0.007$ at $0.05$). Power is under $W+H_0$ (500 replicates). Parentheses mark tests whose size in that cell exceeds $0.064$. \Cref{fig:e13full} shows all $n\in\{64,128\}$ and $m\le10$.}
\label{tab:calibration}
\small
\resizebox{\linewidth}{!}{\begin{tabular}{l ccc ccc ccc}
\toprule
 & \multicolumn{3}{c}{Size (worst null)} & \multicolumn{3}{c}{Power, SBM2 null} & \multicolumn{3}{c}{Power, ER null} \\
\cmidrule(lr){2-4}\cmidrule(lr){5-7}\cmidrule(lr){8-10}
Method & $m=1$ & $2$ & $3$ & $1$ & $2$ & $3$ & $1$ & $2$ & $3$ \\
\midrule
Graph-label permutation (exact) & 0.000 & 0.000 & 0.000 & 0.00 & 0.00 & 0.00 & 0.00 & 0.00 & 0.00 \\
Welch $z$ & -- & 0.206 & 0.127 & -- & (0.85) & (0.96) & -- & (0.40) & (0.40) \\
Welch $t$ & -- & 0.024 & 0.040 & -- & 0.22 & 0.69 & -- & 0.06 & 0.15 \\
Vertex jackknife & 0.010 & 0.009 & 0.014 & 0.21 & 0.58 & 0.89 & 0.00 & 0.02 & 0.04 \\
Vertex split, $B=4$ (exact) & 0.031 & 0.055 & 0.062 & 0.06 & 0.18 & 0.25 & 0.02 & 0.07 & 0.06 \\
CDB bootstrap & 0.018 & 0.024 & 0.035 & 0.21 & 0.54 & 0.87 & 0.00 & 0.01 & 0.01 \\
ER plug-in (valid under ER only) & 0.482 & 0.488 & 0.458 & (0.82) & (0.96) & (1.00) & 0.18 & 0.30 & 0.41 \\
\bottomrule
\end{tabular}
}
\end{table}

\paragraph{Two caveats about CDB.} First, $\Kexc$ lies outside the theory of \citet{chatterjee2026higher} under an Erd\H{o}s--R\'enyi null. There its $n^{-1}$ components cancel, and the bootstrap overestimates the variance of the test statistic by a factor of $4.3$--$5.1$, so the test is valid but conservative. Second, at $n\le128$ the regularity test never flagged a graph as irregular, even under the irregular SMOOTH null. The bootstrap applied to $t(K_3)$ alone is then anti-conservative under SMOOTH (size $0.18$--$0.36$). With the correct regime its size is $0.023$--$0.039$. This caveat concerns the pre-test at finite $n$, and the asymptotic theorem is unaffected. \Cref{fig:e13full} shows size and power for all methods, nulls, $n$ and $m$.

\begin{figure}[h]
\centering
\includegraphics[width=\linewidth]{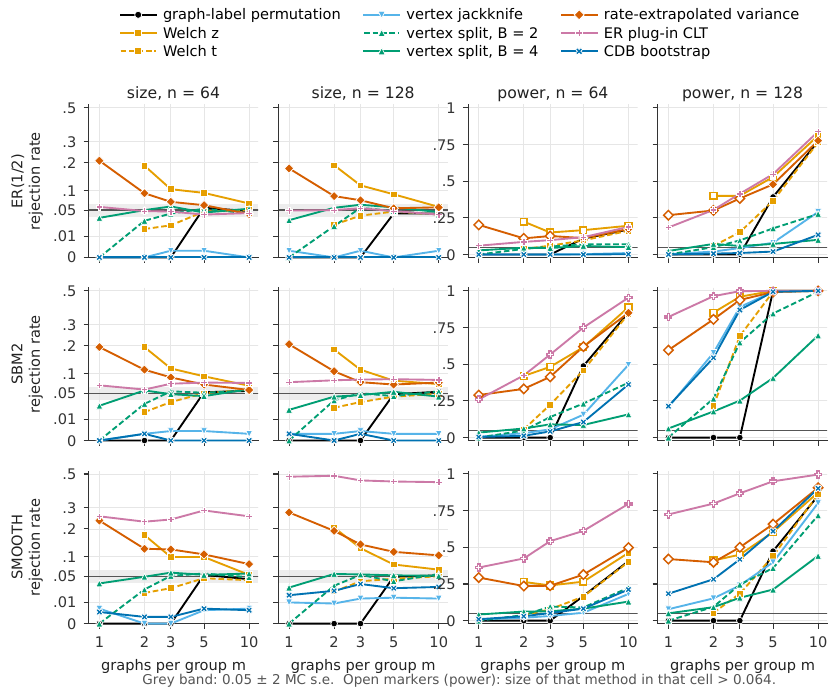}
\caption{Size (left two columns, on a square-root axis with a grey band at $0.05\pm2$ Monte Carlo s.e.) and power under $W+H_0$ (right two columns) of the calibration methods, for the three nulls (rows), $n\in\{64,128\}$ and $m\in\{1,2,3,5,10\}$. Open markers in the power panels flag methods whose size in that cell exceeds $0.064$.}
\label{fig:e13full}
\end{figure}

\subsection{Graph evaluation metrics (E14, E14b)}\label{app:e14}
\paragraph{Models.} In E14 the null is $\mathbb G(200,0.1)$, and the alternative is the planted partition with $t\in\{0.9,1.5\}$ ($\delta=0.0201$ and $0.0260$). In E14b, group 1 is the planted partition with $t=2$ and group 2 consists of independent planted-partition graphs, each rewired by $10|E|$ successful degree-preserving double-edge swaps (swaps creating loops or multiple edges are rejected). Fresh graphs are drawn in every replicate. We use 200 replicates per point, and sizes are checked at $m\in\{8,64\}$.

\paragraph{Descriptors.}
\begin{itemize}
\item Degree histogram (degrees $0$--$60$).
\item Local clustering histogram ($50$ bins).
\item Counts of edges, $2$-paths, triangles, $3$-stars, $3$-paths and $4$-cycles, each divided by its expectation under $\mathbb G(n,\hat p)$.
\item Normalized-Laplacian eigenvalue histogram ($100$ bins on $[0,2]$).
\item A random GIN \citep{xu2019powerful} with three layers of width $32$, constant input and sum readout of all layers. Without biases, a ReLU GIN with constant input is positively homogeneous, and its features are exact linear functions of three walk counts. We therefore add fixed $N(0,1)$ biases and scale the input by $1/(1+(n-1)p)$, in the spirit of \citet{thompson2022evaluation}.
\end{itemize}
The MMD uses the unbiased estimator with a Gaussian kernel whose bandwidth is the median pairwise distance of the pooled sample, and is calibrated by $200$ label permutations. Signed triangles with the pooled edge density use the same permutations.

\paragraph{Positive controls.} Every descriptor rejects in $20/20$ replicates against a $6\%$ change of edge density with $m=32$. Against a degree-heterogeneous graphon $0.1\,\theta(x)\theta(y)$ with $\theta(x)=1+0.3(2x-1)$ and the same density, the degree histogram and the random GIN have power \PosCtrlDeg{} and \PosCtrlGin{} with $m=16$, and signed triangles have power \PosCtrlTri. The low power in \cref{fig:metrics} is therefore caused by the difference itself, since the same embeddings detect a degree-heterogeneous difference.

\subsection{Evaluation metrics under irregular nulls (E15)}\label{app:e15}
\paragraph{Models.} We use three nulls with edge density $0.4$ on $n=200$ vertices, namely ER with $W\equiv0.4$, SMOOTH with $W(x,y)=0.3+0.4xy$ (the irregular null of E12) and the degree-heterogeneous rank-one graphon $W(x,y)=0.4\,\theta(x)\theta(y)$ with $\theta(x)=0.5+x$, which we call the rank-one null. Their degree functions are $0.4$, $0.3+0.2x$ and $0.4\,\theta(x)$. The alternative adds $H=\delta\,s\otimes s$ with $\delta=\EfifteenDelta$, where either $s_A=\ind_{[1/2,1)}-\ind_{[0,1/2)}$, which is correlated with the degree profile of the two irregular nulls, or $s_O=\ind_{[0,1/4)\cup[3/4,1)}-\ind_{[1/4,3/4)}$, which is orthogonal to $1$ and $x$ and hence to every eigenfunction of the three nulls. Both differences have zero degree function, and under the Erd\H{o}s--R\'enyi null both are the planted partition with $t=4.1$. For $H_A$, exact quadrature gives first-order coefficients $\kappa_{P_4}=2.5\cdot10^{-3}$ under SMOOTH and $10^{-2}$ under the rank-one null, and $\kappa_{K_3}=10^{-2}$ and $3.25\cdot10^{-2}$. For $H_O$, every path density is unchanged and the triangle and $4$-cycle densities move by exactly $\delta^3$ and $\delta^4$ under all three nulls (\cref{prop:orthogonal}).

\paragraph{Descriptors and tests.} The descriptors and tests are those of E14 (\cref{app:e14}), with one histogram bin per degree and the GIN input scaled by $1/(1+0.4(n-1))$. We add the degree-corrected signed triangle and $4$-cycle counts per wedge. With $B=A-dd^\top/(2|E|)$, set to zero on the diagonal, and $P_2=\sum_i\binom{d_i}2$ the number of paths of length two, they are $\mathrm{tr}(B^3)/(6P_2)$ and $\big(\mathrm{tr}(B^4)-2\sum_i(B^2)_{ii}^2+\sum_{i,j}B_{ij}^4\big)/(8P_2)$, compared by the permutation test of the mean difference. We use $m\in\{8,16,\dots,256\}$, $200$ replicates per point, and sizes at $m\in\{8,64\}$, which lie between \EfifteenSizeMin{} and \EfifteenSizeMax{} for all descriptors and nulls. The value of $\delta$ was chosen in a pilot run with $\delta\in\{0.04,0.07,0.1\}$.

\paragraph{Results.} \Cref{tab:e15} gives $m^\ast$, and the top row of \cref{fig:e15e16} shows power curves. The random GIN detects $H_A$ under both irregular nulls, with $m^\ast=\EfifteenGinMstarSmoothHA$ under SMOOTH and \EfifteenGinMstarDcHA{} under the rank-one null, through the first-order term of the path with three edges. It stays at the level for $H_O$ under every null and for $H_A$ under the Erd\H{o}s--R\'enyi null, with rejection rates between \EfifteenGinOrthMin{} and \EfifteenGinOrthMax{} over the \EfifteenGinOrthCells{} cells up to $m=256$ (Monte Carlo s.e.\ $0.015$ at $0.05$). The degree histogram has the same expected value under both hypotheses for every null, and its rejection rates are at most \EfifteenDegMax. The largest of them, at $m=256$ under SMOOTH with $H_A$, reflects the joint law of the degrees, which changes at first order when the difference is correlated with the degree profile. Degree-corrected signed triangles and $4$-cycles detect every alternative, with power at least \EfifteenDcPowEightMin{} already at $m=8$. Signed triangles centred at the pooled edge density, the statistic of \cref{thm:price}, lose power under the rank-one null, where they need $m^\ast=\EfifteenTriPooledDcHO$ graphs for $H_O$, because the degree heterogeneity of the null enters their variance at first order.

\begin{table}[h]
\centering
\caption{E15: graphs per group needed for power $0.5$ ($m^\ast$, interpolated on $m\in\{8,\dots,256\}$, with ${\le}8$ when reached at $m=8$ and ${>}256$ when not reached), and the largest rejection rate under the three nulls ($m\in\{8,64\}$, $200$ replicates each).}
\label{tab:e15}
\small
\begin{tabular}{l cc cc cc c}
\toprule
 & \multicolumn{2}{c}{ER} & \multicolumn{2}{c}{SMOOTH} & \multicolumn{2}{c}{rank-one} & Size \\
\cmidrule(lr){2-3}\cmidrule(lr){4-5}\cmidrule(lr){6-7}\cmidrule(lr){8-8}
Descriptor & $H_A$ & $H_O$ & $H_A$ & $H_O$ & $H_A$ & $H_O$ & max \\
\midrule
Degree histogram (MMD) & \ensuremath{>256} & \ensuremath{>256} & \ensuremath{>256} & \ensuremath{>256} & \ensuremath{>256} & \ensuremath{>256} & 0.070 \\
Random GIN (MMD) & \ensuremath{>256} & \ensuremath{>256} & 44.3 & \ensuremath{>256} & 25.7 & \ensuremath{>256} & 0.085 \\
Clustering histogram (MMD) & 28.0 & 38.7 & \ensuremath{\le8} & 191 & \ensuremath{\le8} & \ensuremath{>256} & 0.080 \\
Graphlet counts (MMD) & \ensuremath{>256} & \ensuremath{>256} & 176 & \ensuremath{>256} & 124 & \ensuremath{>256} & 0.085 \\
Laplacian spectrum (MMD) & 14.5 & 13.5 & 11.7 & 16.7 & 8.9 & 25.8 & 0.090 \\
Signed triangles, pooled $\hat p$ & \ensuremath{\le8} & \ensuremath{\le8} & \ensuremath{\le8} & \ensuremath{\le8} & \ensuremath{\le8} & 37.6 & 0.070 \\
Signed triangles, degree-corrected & \ensuremath{\le8} & \ensuremath{\le8} & \ensuremath{\le8} & \ensuremath{\le8} & \ensuremath{\le8} & \ensuremath{\le8} & 0.085 \\
Signed $4$-cycles, degree-corrected & \ensuremath{\le8} & \ensuremath{\le8} & \ensuremath{\le8} & \ensuremath{\le8} & \ensuremath{\le8} & \ensuremath{\le8} & 0.080 \\
\bottomrule
\end{tabular}

\end{table}

\subsection{Protein graphs against rewired copies (E16)}\label{app:e16}
\paragraph{Data.} The TUDataset collection \citep{morris2020tudataset} provides the D\&D protein structures \citep{dobson2003distinguishing}, in which two amino acids are joined when they are less than $6$\,\AA{} apart, and the ENZYMES tertiary structures \citep{borgwardt2005protein}. As in the Proteins benchmark of graph generative models \citep{you2018graphrnn,liao2019efficient} we keep the \EsixteenGraphsDd{} D\&D graphs with $100$ to $500$ vertices, and as in \citet{jo2022score} the \EsixteenGraphsEnz{} ENZYMES graphs with $10$ to $125$ vertices after removing isolated vertices. The graphs of a dataset differ in size and shape, and both groups are drawn from the same dataset.

\paragraph{Design.} Each replicate draws $2m$ distinct graphs at random. The first $m$ form group 1, and the other $m$ form group 2 after $\EsixteenRho\%$ of their edges have been rewired by successful degree-preserving double-edge swaps. The three kinds of swaps differ in the pairs of edges they allow. Unrestricted swaps (deg) pick any two edges. Swaps of type wl1 pick two edges whose endpoints have equal degrees, and they preserve the colour of every vertex after two rounds of colour refinement. Swaps of type wl2 pick two edges whose endpoints have equal colours after two rounds, and they preserve the colours after three rounds. Sequences of such swaps preserve these colours, so under wl2 the three-layer GIN returns exactly the embedding of the original graph, which we checked for every rewired graph. Without rewiring the $2m$ graphs are exchangeable and all tests are exact. In \EsixteenSizeReps{} replicates at $m\in\{8,64\}$ their sizes lie between \EsixteenSizeMin{} and \EsixteenSizeMax. The descriptors are those of E14, adapted to graphs of different sizes. We add a GIN with mean readout, and centre the first signed triangle statistic at each graph's own edge density. With $m=2$ no two-sided permutation test can reject at level $0.05$, since the smallest attainable $p$-value is $1/3$, so we report $m\ge4$.

\paragraph{Results.} \Cref{tab:e16} gives $m^\ast$, and the bottom row of \cref{fig:e15e16} shows power curves. Unrestricted rewiring removes \EsixteenTriLossDdDeg\% of the triangles of the D\&D graphs and \EsixteenTriLossEnzDeg\% of those of ENZYMES. The degree histogram and both random GINs stay at the level under deg and wl1, with rejection rates of at most \EsixteenTreeMax{} for all $m\le64$. The signed cycle statistics detect the rewiring with a few graphs. Triangles are the most sensitive on D\&D, whose contact graphs are highly clustered, and $4$-cycles on ENZYMES. The clustering histogram and the spectrum also detect it on D\&D, but need more graphs on ENZYMES. On D\&D the $z$-scores of the same signed counts, which grow with the size of the graph, are much weaker, with power \EsixteenZTriDdEight{} against \EsixteenTriDdEight{} at $m=8$ for triangles, while on ENZYMES the two normalizations perform alike. We chose the normalization per wedge after comparing the two on rewired D\&D graphs in a pilot, which also fixed the rewiring fraction among $\{2,5,25\}\%$, so the D\&D results favour it by construction. Under wl2 only \EsixteenTriLossDdWltwo\% and \EsixteenTriLossEnzWltwo\% of the triangles are removed, because two rounds of colour refinement nearly identify the vertices of these graphs. The degree histogram and the random GINs then stay at the level, with rejection rates of at most \EsixteenWltwoTreeMax, and the signed triangles reach power \EsixteenWltwoTriDdPow{} at $m=64$ on D\&D.

\begin{table}[h]
\centering
\caption{E16: graphs per group needed for power $0.5$ ($m^\ast$, interpolated on $m\in\{4,\dots,64\}$, with ${\le}4$ when reached at $m=4$ and ${>}64$ when not reached) for the three kinds of rewiring, the largest rejection rate without rewiring ($m\in\{8,64\}$, $1200$ replicates each), and the share of triangles removed.}
\label{tab:e16}
\small
\resizebox{\linewidth}{!}{\begin{tabular}{l ccc ccc c}
\toprule
 & \multicolumn{3}{c}{D\&D} & \multicolumn{3}{c}{ENZYMES} & Size \\
\cmidrule(lr){2-4}\cmidrule(lr){5-7}\cmidrule(lr){8-8}
Descriptor & deg & wl1 & wl2 & deg & wl1 & wl2 & max \\
\midrule
Degree histogram (MMD) & \ensuremath{>64} & \ensuremath{>64} & \ensuremath{>64} & \ensuremath{>64} & \ensuremath{>64} & \ensuremath{>64} & 0.058 \\
Random GIN, sum (MMD) & \ensuremath{>64} & \ensuremath{>64} & \ensuremath{>64} & \ensuremath{>64} & \ensuremath{>64} & \ensuremath{>64} & 0.062 \\
Random GIN, mean (MMD) & \ensuremath{>64} & \ensuremath{>64} & \ensuremath{>64} & \ensuremath{>64} & \ensuremath{>64} & \ensuremath{>64} & 0.063 \\
Clustering histogram (MMD) & \ensuremath{\le4} & \ensuremath{\le4} & \ensuremath{>64} & 20.4 & 22.4 & \ensuremath{>64} & 0.058 \\
Graphlet counts (MMD) & 17.2 & 17.6 & \ensuremath{>64} & 15.1 & 20.0 & \ensuremath{>64} & 0.060 \\
Laplacian spectrum (MMD) & 4.2 & 4.2 & \ensuremath{>64} & 22.6 & 34.1 & \ensuremath{>64} & 0.056 \\
Signed triangles, own density & \ensuremath{\le4} & \ensuremath{\le4} & 59.3 & 15.3 & 18.1 & \ensuremath{>64} & 0.062 \\
Signed triangles, degree-corrected & \ensuremath{\le4} & \ensuremath{\le4} & 64.0 & 15.6 & 18.4 & \ensuremath{>64} & 0.058 \\
Signed $4$-cycles, degree-corrected & 4.1 & 4.1 & \ensuremath{>64} & 5.7 & 7.6 & \ensuremath{>64} & 0.058 \\
\midrule
Triangles removed & 25\% & 25\% & 4\% & 23\% & 21\% & 6\% & \\
\bottomrule
\end{tabular}
}
\end{table}

\begin{figure}[h]
\centering
\includegraphics[width=\linewidth]{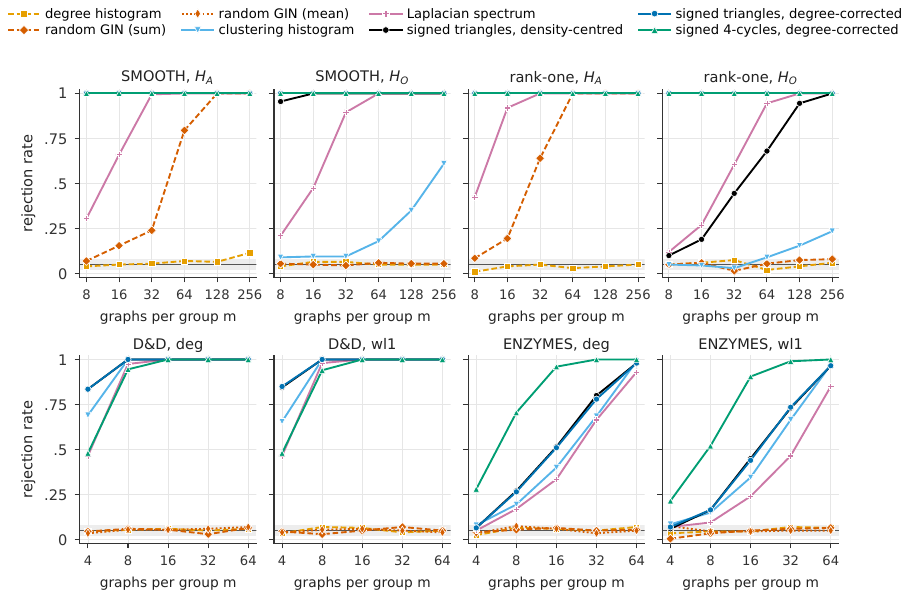}
\caption{Top (E15): rejection rates against $m$ for the degree-preserving differences $H_A$, aligned with the degree profile of the null, and $H_O$, orthogonal to its eigenfunctions, under SMOOTH and the rank-one null ($n=200$, $\delta=\EfifteenDelta$). Bottom (E16): D\&D and ENZYMES graphs against copies with $\EsixteenRho\%$ of their edges rewired by unrestricted swaps (deg) or by swaps between vertices of equal degree (wl1). Dashed curves are tree statistics, and the grey band is $0.05\pm2$ Monte Carlo s.e.}
\label{fig:e15e16}
\end{figure}

\section{Extended related work}\label{app:related}

\paragraph{Two-sample testing for graphs.} With aligned vertices, minimax rates are known for two-sample testing of inhomogeneous random graphs \citep{ghoshdastidar2020two,chatterjee2023two}. Without alignment, the existing tests compare different kinds of summaries:
\begin{itemize}
\item network statistics of one graph per group \citep{ghoshdastidar2017two};
\item latent positions of random dot product graphs \citep{tang2017nonparametric};
\item average subgraph counts, with $n$ fixed and $m\to\infty$ \citep{maugis2020testing};
\item graph cumulants, which remove lower-order effects and improve power at small sample sizes \citep{bravo2023quantifying};
\item Edgeworth-corrected motif statistics for two networks \citep{shao2026higher};
\item block-model and graphon estimates \citep{nguen2025network,sabanayagam2021graphon,chen2019hypothesis};
\item joint laws of subsampled moments \citep{qi2024multivariate}.
\end{itemize}
\citet{ghoshdastidar2017two} prove minimax optimality for tests built on a given statistic, with a single graph per group. Our lower bounds cover all tests and any number of graphs. Their hard instance (Theorem~10 of \citet{ghoshdastidar2017two}) is essentially our alternative. It compares the Erd\H{o}s--R\'enyi graph with a uniform mixture of two-block models over exactly balanced labellings, with edge probabilities $p+\gamma$ within and $p-\gamma$ between the blocks, which is the planted partition \eqref{eq:pp} with $\delta=\gamma$ up to the balancing of the labels. They show that the total variation distance between the two models tends to zero when $\gamma=o((p(1-p)/n)^{1/2})$, that is, when $t\to0$, so that no test with one graph per group has a small total error. \Cref{thm:price}(i) makes this quantitative for $m$ graphs per group, since for every graph size and every $t<1$ a test with errors $(\alpha,\beta)$ needs $m\ge6(1-t)\,c_{\alpha\beta}/t^3$ graphs, and signed triangles attain this order (\cref{thm:price}(ii)). The excess triangle statistic in our experiments is the $T_3$ statistic of \citet{gao2017testing}. We use existing statistics and characterize when they are optimal and what they cost. The effect of shuffled labels on the power of aligned tests is studied by \citet{saxena2025lost}. \citet{chen2026vertex} observe that misalignment hurts changepoint localization when the change lies in the joint law of the latent positions, and is nearly harmless when it lies in their marginal law. This is close in spirit to our distinction between orders of visibility.

\paragraph{Detection thresholds and small subgraphs.} Contiguity of block models and Erd\H{o}s--R\'enyi graphs below the Kesten--Stigum threshold, and the role of cycle counts, are due to \citet{mossel2015reconstruction}, \citet{banerjee2018contiguity} and \citet{banerjee2017optimal}, building on small subgraph conditioning \citep{janson1995random}. The multilayer, aligned analogue is treated by \citet{chatterjee2022multilayer}. Other one-graph problems where the choice of cycle order matters include:
\begin{itemize}
\item signed polygons for degree-corrected models \citep{jin2021optimal};
\item latent geometry, where signed triangles can be optimal \citep{bubeck2016testing} or suboptimal compared with $4$-cycles \citep{bangachev2024detection};
\item planted subgraphs \citep{yu2025counting,elimelech2025detecting}.
\end{itemize}
Correlation detection between unlabeled graphs \citep{wu2023testing,mao2024testing,ding2025low,ameen2025sharp} asks whether two graphs share a hidden alignment, which is a different problem from ours.

\paragraph{Invariant polynomials and low-degree methods.} Low-degree lower bounds exploit invariance under relabelling, and invariant polynomials are combinations of subgraph counts \citep{kunisky2019notes,kunisky2024tensor,carpentier2025low}. Testing between two structured models has recently been studied in this framework for one graph \citep{rush2026count,skeja2026sharp}. Our lower bounds are information-theoretic and exploit independence across graphs.

\paragraph{Fluctuations of subgraph counts.} The fluctuations of subgraph counts in graphon models are Gaussian at scale $n^{-1/2}$ when the graphon is irregular with respect to the motif. When it is regular, they are of smaller order and possibly non-Gaussian \citep{hladky2021limit,bhattacharya2023fluctuations,kaur2021higher,chatterjee2026higher}. Resampling methods for network moments include \citet{green2022bootstrapping}, \citet{lin2020jackknife} and \citet{zhang2022edgeworth}.

\paragraph{Learned two-sample tests and graph neural networks.} Kernel and learned two-sample tests \citep{gretton2012kernel,liu2020learning,liu2021meta,kirchler2020two,kubler2022automl} become permutation-invariant graph tests once the features are invariant. The expressive power of message passing on graphons is characterized by tree densities \citep{grebik2022fractional,boker2023wl,boker2023fine}. The same features are used to evaluate graph generative models \citep{you2018graphrnn,obray2022evaluation,thompson2022evaluation,krimmel2025polygraph}.

\end{document}